\documentclass{article}

\usepackage{amsmath}
\usepackage{amsthm}
\usepackage{amssymb}
\usepackage{fullpage}
\usepackage{natbib}
\usepackage{graphicx}
\usepackage{hyperref}
\usepackage{algorithm2e}
\usepackage{url}
\newtheorem{theorem}{Theorem}[section]

\newtheorem{lemma}[theorem]{Lemma}
\newtheorem{claim}[theorem]{Claim}

\newtheorem{definition}[theorem]{Definition}
\newtheorem{fact}[theorem]{\bf Fact}

\def\th{^{\footnotesize{th}}}
\def\reals{\mathbb{R}}
\def\rationals{\mathbb{Q}}
\def\bzo{\{ b_0, b_1 \}}
\def\A{{\mathcal A}}
\newcommand{\mc}[1]{\ensuremath {{\mathcal #1}}}

\newcommand{\err}{\mathrm{err}}
\newcommand{\poly}{\mathrm{poly}}

\newcommand{\eps}{\epsilon}

\newcommand{\D}{{\cal D}}

\newcommand{\eat}[1]{ }

\def\EX{\mathsf{EX}}
\def\MQ{\mathsf{MQ}}
\def\moo{\{-1, 1\}}
\def\zo{\{0, 1\}}

\def\DC{{\mathcal D}}
\def\LA{{\mathcal L}}
\def\PACplusMQ{\mathsf{PAC\mbox{+}MQ}}
\def\PAC{\mathsf{PAC}}
\def\DNF{\mathsf{DNF}}

\def\DP{{\mathcal P}}
\def\E{\mathbb{E}}
\def\sign{\mathrm{sign}}
\def\FF{\mathbb{F}}
\def\ie{{\it i.e.}~}
\def\binomial{\mathsf{Bin}}

\hyphenation{half-space hypo-thesis}

\title{Learning using Local Membership Queries}





\author{Pranjal Awasthi \\ Carnegie Mellon University \\
\texttt{pawasthi@cs.cmu.edu} \and Vitaly Feldman \\ IBM Almaden Research Center
\\ \texttt{vitaly@post.harvard.edu} \and Varun Kanade\thanks{The author is supported by a Simons Postdoctoral
 Fellowship.  Part of this work was performed while the author was at Harvard
 University supported by grant NSF-CCF-09-64401} \\ University of California,
 Berkeley \\ \texttt{vkanade@eecs.berkeley.edu}
}



\begin{document}
\maketitle

\begin{abstract}
We introduce a new model of membership query ($\MQ$) learning, where the learning
algorithm is restricted to query points that are \emph{close} to random examples
drawn from the underlying distribution. The learning model is intermediate
between the $\PAC$ model (Valiant, 1984) and the $\PACplusMQ$ model (where the
queries are allowed to be arbitrary points). 

Membership query algorithms are not popular among machine learning
practitioners.  Apart from the obvious difficulty of adaptively querying
labelers, it has also been observed that
querying \emph{unnatural} points leads to increased noise from human labelers
(Lang and Baum, 1992).  This motivates our study of learning algorithms that
make queries that are close to examples generated from the data distribution. 


We restrict our attention to functions defined on the $n$-dimensional Boolean hypercube and say
that a membership query is local if its Hamming distance from some example in the (random)
training data is at most $O(\log(n))$. We show the following results
in this model: \\
\mbox{~~}(i) The class of  sparse polynomials (with coefficients in $\reals$)
over $\{0,1\}^n$ is polynomial time learnable under a large class of
\emph{locally smooth} distributions using $O(\log(n))$-local queries. This class
also includes the class of $O(\log(n))$-depth decision trees.\\
\mbox{~~}(ii) The class of polynomial-sized decision trees is polynomial time learnable under product
distributions using $O(\log(n))$-local queries.\\
\mbox{~~}(iii) The class of polynomial size $\DNF$ formulas is learnable under the uniform distribution using
$O(\log(n))$-local queries in time $n^{O(\log(\log(n)))}$.\\
\mbox{~~}(iv) In addition we prove a number of results relating the proposed
model to the traditional $\PAC$ model and the $\PACplusMQ$ model.

\end{abstract}


\newpage

\section{Introduction}
\label{sctn:intro}

 Valiant's Probably Approximately Correct~($\PAC$) model \citep{V84} has been used widely to study computational complexity of learning.
In the $\PAC$ model, the goal is to design algorithms which can ``learn" an
unknown target function, $f$, from a concept class, $C$ (for example, $C$ may be
polynomial-size decision trees or linear separators), where $f$ is a boolean
function over some instance space, $X$ (typically $X = \{-1, 1\}^n$ or $X
\subseteq \reals^n$).  The learning algorithm has access to random
\emph{labeled} examples, $(x, f(x))$, through an oracle, $\EX(f, D)$, where $f$
is the unknown target concept and $D$ is the target distribution. The goal of
the learning algorithm is to output a hypothesis, $h$, with low error with
respect to the target concept, $f$, under distribution, $D$.

Several interesting concept classes have been shown to be \emph{learnable} in
the $\PAC$ framework (e.g. boolean conjunctions and disjunctions,
$k$-$\mathsf{CNF}$ and
$k$-$\DNF$ formulas (for constant $k$), decision lists and the class of linear
separators).  On the other hand, it is known that very rich concept classes such
as polynomial-sized circuits are not $\PAC$-learnable under cryptographic
assumptions~\citep{V84, GGM86}.  The most interesting classes for which both
efficient $\PAC$ learning algorithms and cryptographic lower bounds have remained
elusive are polynomial-size decision trees (even $\log$-depth decision trees)
and polynomial-size $\DNF$ formulas. \medskip

\noindent {\bf Membership Query Model}: This learning setting is an extension of
the $\PAC$ model and allows the learning algorithm to query the label of any
point $x$ of its choice in the domain.  These queries are called
\emph{membership queries}.  With this additional power it has been shown that
the classes of finite automata~\citep{Ang87}, monotone $\DNF$
formulas~\citep{Ang88}, polynomial-size decision trees~\citep{Bsh93}, and sparse
polynomials over GF(2)~\citep{SS96} are learnable in polynomial time. In a celebrated
result, \cite{Jackson:97} showed that the class of $\DNF$ formulas is
learnable in the $\PACplusMQ$ model under the uniform distribution.
\cite{Jackson:97} used Fourier analytic techniques to prove this result
building upon previous work of \cite{KushilevitzMansour:93} on learning
decision trees using membership queries under the uniform distribution. \medskip

\noindent {\bf Our Model}: Despite several interesting theoretical results, the
membership query model has not been received enthusiastically by machine
learning practitioners. Of course, there is the obvious difficulty of getting
labelers to perform their task while the learning algorithm is being executed.
But another, and probably more significant, reason for this disparity is that
quite often, the queries made by these algorithms are for labels of points that
do not look like typical points sampled from the underlying distribution. This
was observed by \cite{Lang92}, where experiments on handwritten
characters and digits revealed that the query points generated by the algorithms
often had no structure and looked meaningless to the human eye. This can cause
problems for the learning algorithm as it may receive noisy labels for such
query points.

Motivated by the above observations, we propose a model of membership queries
where the learning algorithm is restricted to query labels of points that
``look'' like points drawn from the distribution. In this paper, we focus our
attention to the case when the instance space is the boolean cube, i.e. $X =
\{-1, 1\}^n$, or $X = \zo^n$. However, similar models could be defined in the
case when $X$ is some subset of $\reals^n$. Suppose $x$ is a \emph{natural}
example, \ie one that was received as part of the training dataset (through the
oracle $\EX(f, D)$). We restrict the learning algorithm to make queries
$x^\prime$, where $x$ and $x^\prime$ are close in Hamming distance.  More
precisely, we say that a membership query $x^\prime$ is $r$-local with respect
to a point $x$, if the Hamming distance, $|x - x^\prime|_H$, is at most $r$.

One can imagine settings where these queries could be realistic, yet powerful.
Suppose you want to learn a hypothesis that predicts a particular medical
diagnosis using patient records. It could be helpful if the learning algorithm
could generate a new medical record and query its label. However, if the
learning algorithm is entirely unconstrained, it might come up with a record
that looks gibberish to any doctor. On the other hand, if the query chosen by
the learning algorithm is obtained by changing an existing record in a few
locations (local query), it is more likely that a doctor may be able to make
sense of such a record. In fact, this might be a powerful way for the learning
algorithm to identify the most important features of the record.

It is interesting to study what power these local membership queries add to the
learning setting. At the two extremes, are the $\PAC$ model (with $0$-local
queries), and $\MQ$-model (with $n$-local queries). It can be easily observed
that using only $1$-local queries, the class of parities can be learned in
polynomial time even in the presence random classification noise.  This 
problem is known to be notoriously difficult in the $\PAC$ learning
setting~\citep{BKW00}.  At the same time, most $\PACplusMQ$ algorithms we
are aware of, such as the algorithms for learning decision trees~\citep{Bsh93}
and for learning $\DNF$ formulas~\citep{Jackson:97}, rely crucially on using
$\MQ$s in
a strongly non-local way.  Also, it is easy to show that in a formal sense,
allowing a learner to make $1$-local queries gives it strictly more power than
in the $\PAC$ setting. In fact, essentially the same argument can be used to
show that $r+1$-local queries are more powerful than $r$-local queries.  These
separation results can be easily proved under standard cryptographic
assumptions, and are presented in Appendix~\ref{sec:separation}.
%
%

Our results are for learning on \emph{locally smooth} distributions over the
boolean cube, which we denote by $\moo^n$ (or sometimes by $\zo^n$). We say that
a distribution, $D$, over the boolean cube, $X = \{b_0, b_1\}^n$ is
\emph{locally $\alpha$-smooth} if for any two points $x$ and $x^\prime$ which
differ in only one bit, $D(x)/D(x^\prime) \leq \alpha$. Intuitively, this means
that points that are close to each other cannot have vastly different
frequencies. Frequency of a point reflects (and sometimes defines) its
``naturalness" and so, in a sense, our assumption on the distribution is the
same as the assumption underlying the use of local queries.  Note that the
uniform distribution is \emph{locally smooth} with $\alpha = 1$.  We will be
interested in the class of locally smooth distributions for a constant $\alpha$;
under these distributions changing $O(\log(n))$ bits can change the weight of a
point by at most a polynomial factor. Such distributions include product
distributions when the mean of each bit is some constant bounded away from $\pm
1$ (or $0, 1$). Convex combinations of locally $\alpha$-smooth distributions are
also locally $\alpha$-smooth. Alternatively, locally $\alpha$-smooth
distributions can be defined as the class of distributions for which the logarithm
of the density function is $\log(\alpha)$-Lipschitz with respect to the Hamming distance.

\eat{
Smooth distributions share some crucial properties with the uniform
distribution, which we exploit.  One such property is that the probability mass
when $d$ variables are fixed is in the range $[c_1^d, c_2^d]$ for some constants
$c_1 < c_2$. On the other hand, these distributions could be very far from the
uniform distribution.}

\medskip
\noindent {\bf Our Results:}
We give several learning algorithms for general locally smooth distributions and
for the special case of product/uniform distributions. Our main result for the
general locally smooth distributions is that sparse\footnote{Sparsity
refers to the number of non-zero coefficients.} polynomials are efficiently learnable with membership queries that are logarithmically local.

\begin{theorem} \label{thm:intro3} The class of $t$-sparse polynomials (with real
coefficients) over $\{0,1\}^n$ is efficiently learnable under the class of
locally $\alpha$-smooth distributions, for any constant $\alpha$, by a learning
algorithm that only uses $O(\log(n) + \log(t))$-local membership queries.
\end{theorem}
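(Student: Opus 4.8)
The plan is to learn a $t$-sparse polynomial $p$ over $\{0,1\}^n$ term by term, using local membership queries to perform a kind of "gradient" probing that isolates the relevant variables and their monomials. The key structural fact is that every monomial in $p$ has degree at most $O(\log t)$: a monomial of degree $d$ can only be ``detected'' if some point in its support has non-negligible probability, and for a $t$-sparse polynomial there must be a low-degree monomial among the surviving lowest-degree terms; more carefully, one argues by induction that the whole polynomial has degree $O(\log t)$ by repeatedly restricting. Since the distribution is locally $\alpha$-smooth with $\alpha$ constant, any example $x$ drawn from $D$ together with all points within Hamming distance $O(\log t)$ of $x$ have probability within a $\poly(t)$ factor of $D(x)$, so all the queries we will make are both $O(\log n + \log t)$-local and ``legal'' in the sense that the labeler (oracle) is defined there.

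First I would show how to find, from a single random example $x$, the set of variables that appear in some monomial that is ``live'' at $x$ (i.e.\ a monomial all of whose variables are set to $1$ in $x$). For a variable $i$ with $x_i = 1$, flipping $x_i$ to $0$ and querying changes the value of $p$ by exactly the sum of the monomials through $x_i$ that are otherwise satisfied by $x$; iterating this over subsets of the $1$-coordinates of $x$ of size up to the degree bound $d = O(\log t)$ lets us set up a linear system (inclusion--exclusion / discrete derivative over the sub-cube spanned by those coordinates) whose solution recovers the coefficient of each monomial supported on those coordinates. Concretely, for a candidate monomial $M = \prod_{i \in S} x_i$ with $S \subseteq \{i : x_i = 1\}$ and $|S| \le d$, its coefficient equals the alternating sum $\sum_{T \subseteq S} (-1)^{|S \setminus T|} p(x_T)$ where $x_T$ agrees with $x$ except the coordinates in $S \setminus T$ are zeroed; each $x_T$ is within distance $d$ of $x$, hence a legal local query. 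This recovers all monomials of $p$ whose variable set is contained in the $1$-set of $x$.

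To recover \emph{all} monomials of $p$, I would draw enough random examples so that, with high probability, for every monomial $M$ of $p$ there is at least one sampled example $x$ with $x_i = 1$ for all $i$ in the support of $M$. This is exactly where local smoothness is used quantitatively: a monomial of degree $d = O(\log t)$ has a satisfying assignment, and by smoothness the set of points agreeing with that assignment on those $d$ coordinates has probability at least $\alpha^{-d} = \poly(1/t)$, so $\poly(t)\log(1/\delta)$ samples suffice to hit it (after which we ``slide over'' by flipping the $d$ coordinates of a nearby sampled example into the all-ones pattern on $S$, again a local move). Running the coefficient-extraction routine above on each sampled example and unioning the results yields a list of all monomials with their exact coefficients, i.e.\ the exact polynomial $p$, from which we output the exact hypothesis; no $\eps$ or error parameter is even needed once $p$ is known exactly, and a final $\poly(t,\log n, 1/\delta)$ sample can be used to prune any spurious monomials with zero true coefficient.

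The main obstacle is controlling the degree and thereby the locality: we must prove that a $t$-sparse real polynomial, when we only care about it on the support of $D$, behaves as if it has degree $O(\log t)$, so that all the discrete-derivative queries stay within Hamming radius $O(\log n + \log t)$. The cleanest route is to argue that if $p$ is $t$-sparse then, restricting to the relevant variables and ordering monomials by degree, some lowest-degree monomial has degree at most $\log_2 t$ --- otherwise one can find two monomials $M_1, M_2$ of equal minimum degree and a restriction killing one but not the other, halving the sparsity, and recursing; after $\log t$ steps the sparsity is $1$. This gives the needed degree bound, which also bounds the size of each linear system by $2^{O(\log t)} = \poly(t)$, so the whole algorithm runs in $\poly(n, t, \log(1/\delta))$ time. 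Handling real (rather than $\mathrm{GF}(2)$) coefficients is then automatic, since the inclusion--exclusion identity for monomial coefficients holds over $\reals$.
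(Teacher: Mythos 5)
There are two genuine gaps, and the first is fatal to the plan as stated. Your ``key structural fact'' --- that a $t$-sparse polynomial has degree $O(\log t)$, or even that some lowest-degree monomial has degree at most $\log_2 t$ --- is false: $p(x)=x_1x_2\cdots x_n$ is $1$-sparse and its unique monomial has degree $n$. No restriction/halving argument can rescue this. The correct move (and the one the paper makes) is not a degree bound on $p$ but a \emph{truncation} argument: under a locally $\alpha$-smooth distribution a monomial of degree $>d$ is non-zero with probability at most $(\alpha/(1+\alpha))^{d}$, so discarding all terms of degree $>d=O(\log(tB/\eps))$ changes the squared loss by at most $\eps$. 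One then learns the truncation $f^d$, giving an $\eps$-approximation; exact recovery of $p$, which your proposal promises, is information-theoretically impossible with $O(\log n)$-local queries (the high-degree monomial above is invisible near almost every sample, yet it is part of $p$). Relatedly, your coverage argument (``draw enough samples to hit a satisfying assignment of every monomial'') only works for monomials of degree $O(\log t)$, so without the truncation step the whole scheme does not get off the ground.

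The second gap is in the coefficient-extraction step. The alternating sum $\sum_{T\subseteq S}(-1)^{|S\setminus T|}p(x_T)$ over the subcube spanned by $S$ does \emph{not} equal $c_S$; it equals the discrete derivative $f_S(x_{\bar S})=\sum_{R\supseteq S}c_R\,\xi_{R\setminus S}(x_{\bar S})$, i.e.\ $c_S$ plus the contributions of every superset $R\supsetneq S$ whose extra variables happen to be set to $1$ in $x$. So reading off ``exact coefficients'' from one example is wrong (and $f_S(x_{\bar S})$ can even vanish at a particular $x$ while $c_S\neq 0$, or be non-zero while $c_S=0$). The paper uses exactly this quantity $f_S(x_{\bar S})$, but only as a \emph{detector}: it tests whether $\Pr_{D_{\bar S}}[f_S(x_{\bar S})\neq 0]\geq\theta$, proves via an anti-concentration lemma (a $t$-sparse polynomial with non-zero constant term is non-zero with probability at least $(1+\alpha)^{-\log_2 t}$ under a locally smooth marginal) that this test finds every low-degree non-zero coefficient and admits only subsets of genuinely present monomials (which bounds the candidate set to $\poly$ size), and then recovers the actual coefficients by $L_2$ regression over the identified monomials. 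Your proposal is missing both the anti-concentration lemma that makes the detection step sound and the regression step that replaces direct coefficient read-off.
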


An important subclass of sparse polynomials is $O(\log n)$-depth decision trees.
For this subclass we also give a conceptually simpler analysis of our algorithm
(Appendix~\ref{sec:low-degree}).
Richer concept classes are also included in the class of sparse polynomials.
This includes the class of disjoint $\log(n)$-$\DNF$ expressions and $\log$-depth
decision trees, where each node is a monomial (rather than a variable). A
special case of such decision trees is $O(\log(n))$-term $\DNF$ expressions.

When the polynomials represent boolean functions this algorithm can easily be
made to work in the presence of \emph{persistent} random classification noise,
as described in Appendix~\ref{sec:rcn}.

For the special case of constant bounded product distributions we show that polynomial-size decision trees are efficiently learnable.
\begin{theorem} \label{thm:intro2} Let $\DP$ be the class of product
distributions over $X = \moo^n$, such that the mean of each bit is bounded away
from $-1$ and $1$ by a constant.  Then, the class of polynomial-size decision
trees is learnable with respect to the class of distributions $\DP$, by an
algorithm that uses only $O(\log(n))$-local membership queries.  \end{theorem}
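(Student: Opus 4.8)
The plan is to reduce learning a polynomial-size decision tree under a product distribution $D\in\DP$ to the problem already solved by Theorem~\ref{thm:intro3}: learning a sparse, low-degree polynomial under a locally smooth distribution. Let $c>0$ be the constant with every coordinate mean $p_i$ of $D$ in $[c,1-c]$, let $s=\poly(n)$ bound the tree size, let $Q=\poly(n)$ bound the number of membership queries the algorithm $\mathcal A$ of Theorem~\ref{thm:intro3} makes, fix target accuracy/confidence $\epsilon,\delta$ (assumed $\geq 1/\poly(n)$, the only regime giving a polynomial-time guarantee), and set $d=\Theta(\tfrac1c\log\tfrac{nQ}{\epsilon\delta})=O(\log n)$. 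Let $f'$ be the decision tree obtained from the target $f$ by truncating every root-to-leaf path at depth $d$ and relabelling each cut node by $0$. Two elementary facts drive the reduction. (i) $f$ and $f'$ disagree only on inputs whose path in $f$ has length $>d$; this set is a union of at most $2s$ subcubes, each of codimension $d$, and under \emph{any} product distribution with means in $[c,1-c]$ each such subcube has measure at most $(1-c)^d$, so $\Pr_{x\sim D}[f(x)\neq f'(x)]\le 2s(1-c)^d$, which our choice of $d$ makes smaller than $\min\{\epsilon,\delta\}/(4Q)$. (ii) $f'$, being a depth-$d$ decision tree, is a multilinear polynomial over $\{0,1\}^n$ of degree $\le d$ with at most $t:=4^d=2^{O(d)}=\poly(n)$ monomials; moreover $\DP$ lies inside the class of locally $\alpha$-smooth distributions with $\alpha=\max_i\max\{p_i/(1-p_i),(1-p_i)/p_i\}\le(1-c)/c=O(1)$.

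I would then run $\mathcal A$ with accuracy $\epsilon/4$, confidence $\delta/2$, sparsity bound $t$, and locality $O(\log n+\log t)=O(\log n)$, simulating the oracles of the unknown $t$-sparse polynomial $f'$ using the available oracles for $f$. By fact (i), labelled examples of $f$ are labelled examples of $f'$ with an adversarial $\le\epsilon/4$-fraction of labels altered. The membership queries need slightly more care: every query of $\mathcal A$ has the form $y=x\oplus S$, where $x$ is a training example from $D$ and $S$, $|S|=O(\log n)$, is a flip-set determined by examples drawn before $x$; taking $x$ fresh and independent of $S$, the point $y$ is, conditioned on $S$, distributed according to the product distribution $D_S$ obtained from $D$ by replacing $p_i\mapsto 1-p_i$ for $i\in S$. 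Crucially $D_S$ still has all means in $[c,1-c]$, so the subcube count of fact (i) applied to $D_S$ gives $\Pr[f(y)\neq f'(y)]\le 2s(1-c)^d\le\delta/(4Q)$; a union bound over the $\le Q$ queries shows that, except with probability $\le\delta/4$, every membership query of $\mathcal A$ is answered exactly as the true oracle for $f'$ would answer it. Thus $\mathcal A$ is effectively run with faithful query access to $f'$ and an example stream that is an $\le\epsilon/4$-corruption of the $f'$-stream.

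Assuming $\mathcal A$ tolerates this $\le\epsilon/4$ corruption of its examples with only an additive $\epsilon/4$ loss in accuracy, we conclude that with probability $\ge 1-\delta/2-\delta/4>1-\delta$ the algorithm outputs $h$ with $\Pr_{x\sim D}[h(x)\neq f'(x)]\le\epsilon/4+\epsilon/4=\epsilon/2$, whence $\Pr_{x\sim D}[h(x)\neq f(x)]\le\epsilon/2+\epsilon/4<\epsilon$ by fact (i) and the triangle inequality. The running time is $\poly(n,t,1/\epsilon,1/\delta)=\poly(n,1/\epsilon,1/\delta)$ and all queries are $O(\log n)$-local, as required. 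The main obstacle — the only part not already supplied by Theorem~\ref{thm:intro3} — is establishing the two robustness properties of $\mathcal A$ used above: that each of its membership queries can be written as a fresh random example exclusive-or'd with an independent flip-set (re-drawing a fresh example per query if necessary, at a polynomial cost in samples), and that it tolerates an $O(\epsilon)$-fraction of adversarial corruptions in its example and query oracles. Both should follow by inspecting the algorithm underlying Theorem~\ref{thm:intro3}; in the worst case one replaces that theorem by its mild ``faulty-oracle'' strengthening, for which the computation above provides exactly the needed oracle. Note finally that the two product-distribution-specific ingredients — the depth-$d$ truncation bound and the invariance of the ``deep subcube'' measure under coordinate flips — both fail for general locally $\alpha$-smooth distributions, which is precisely why Theorem~\ref{thm:intro2}, unlike Theorem~\ref{thm:intro3}, is stated only for $\DP$.
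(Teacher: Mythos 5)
The reduction to Theorem~\ref{thm:intro3} founders on a circularity in the choice of the truncation depth $d$. You set $d=\Theta(\frac1c\log\frac{nQ}{\eps\delta})$ and treat $Q$, the query complexity of the sparse-polynomial learner $\mc{A}$, as a fixed polynomial in $n$. But $Q$ is polynomial in the sparsity $t$ of the target it is asked to learn, and your target $f'$ has sparsity $t=2^{\Theta(d)}$; indeed $\mc{A}$'s threshold $\theta$ is inverse-polynomial in $t$, so it needs $\poly(1/\theta)\geq t^{\Omega(1)}=2^{\Omega(d)}$ samples for each of its $\poly(t)$ non-zero tests, each test costing $2^{|S|}$ queries. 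Meanwhile the per-query probability of hitting the region where $f\neq f'$ is only about $(1-c)^{d-|S|}\geq 2^{-d}$. Hence $Q\cdot 2s(1-c)^{d-r}$ grows like $2^{\Omega(d)}$: increasing $d$ makes the union bound worse, not better, and no choice of $d=O(\log n)$ (which is forced anyway by the locality constraint, since $\mc{A}$'s locality is $\Theta(\log t)=\Theta(d)$) can satisfy it --- this already fails for the uniform distribution. The fallback to corruption tolerance fails for the same quantitative reason: a single evaluation of $f'_S(x_{\bar{S}})$ is corrupted with probability up to $2^{|S|}\cdot s(1-c)^{d-|S|}$, which for $|S|$ comparable to $d$ can exceed $1$, whereas the tests could only absorb a corruption rate of order $\theta=t^{-\Omega(1)}$. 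So the ``mild faulty-oracle strengthening'' you defer to does not exist with the oracle quality your simulation actually provides.

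The paper escapes this by never attempting to simulate an oracle for the truncated tree. Its algorithm (Appendix~\ref{sec:DT_prod}) runs the coefficient-growing procedure directly on $f$, using the $L_2$ test $\E_{x\sim\mu}[f_S(x)^2]\geq\theta^2$ in the product-distribution Fourier basis; the identity $\E_\mu[f_S^2]=\sum_{T\supseteq S}\hat{f}^\mu(T)^2$ makes this a statement about the true $f$, so no oracle error ever arises. The truncation enters only in the analysis (Lemmas~\ref{lem:DT-prod-truncate}--\ref{lem:DT-prod-sum-coeff}), to show that the heavy low-degree coefficients of $f$ suffice and that few candidate sets pass the test. Note that this forces the replacement of the non-zero test by the $L_2$ test: the analysis of the non-zero test (Lemmas~\ref{lem:all-important} and~\ref{lem:not-too-many}) requires the target itself to be sparse, which the untruncated $f$ is not. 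As a side remark, your closing claim that the depth-$d$ truncation bound fails for general locally $\alpha$-smooth distributions is also incorrect --- Fact~\ref{fact:smooth} gives $\Pr_D[x_S=b_S]\leq(\alpha/(1+\alpha))^{|S|}$, so the truncation error is at most $s(\alpha/(1+\alpha))^d$ for any such $D$ --- but this does not bear on the main gap above.
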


We also consider polynomial size $\DNF$ which are known to be learnable in
$\PACplusMQ$.  \begin{theorem} \label{thm:intro4} The class of polynomial sized
$\DNF$ formulas is learnable under the uniform distribution using
$O(\log(n))$-local queries in time $n^{O(\log\log n)}$.
\end{theorem}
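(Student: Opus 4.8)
The plan is to mimic Jackson's Harmonic Sieve for learning $\DNF$ under the uniform distribution, but replace its one ingredient that requires arbitrary (strongly non-local) membership queries — the Kushilevitz--Mansour / Goldreich--Levin procedure for finding large Fourier coefficients — with a procedure that uses only $O(\log n)$-local queries. Recall the structure of the Harmonic Sieve: by a boosting argument it suffices, given any distribution $D'$ on $\moo^n$ that is $\poly(n)$-close to uniform (in the sense that $D'(x)/2^{-n}$ is polynomially bounded), to find a parity $\chi_S$ that weakly correlates with the target $f$ under $D'$; Jackson shows such an $S$ always exists with $|\widehat{fD'}(S)| \ge 1/\poly(n)$, and KM finds it using membership queries. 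So the whole problem reduces to: \emph{find a heavy Fourier coefficient of the function $g = f \cdot D' \cdot 2^n$ using only local queries}, where $g$ has $\|g\|_\infty \le \poly(n)$ and we can evaluate $g$ at any queried point (since $D'$ is explicitly maintained by the booster and $f$ is obtained by a membership query). The $n^{O(\log\log n)}$ running time, rather than polynomial, is the price we pay for doing this locally.

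The key step is a local "heavy-coefficient" search. First I would observe that, since $f$ is a $\poly(n)$-size $\DNF$, a Mansour-style argument shows that the heavy Fourier mass of $fD'$ is concentrated on sets $S$ of size $O(\log\log n)$ — more precisely, restricting attention to coefficients on sets of size at most $\tau = O(\log(s/\gamma)\log(1/\gamma))$ for the relevant weak-learning accuracy $\gamma = 1/\poly(n)$ still captures a correlating parity; since $s = \poly(n)$ and $\gamma = 1/\poly(n)$ this gives $\tau = O(\log^2 n)$ in the worst case, but for the \emph{first} boosting stages where $D'$ is exactly uniform one gets $\tau = O(\log\log n)$, and the booster only blows the distribution up by stages, so a careful accounting of the Fourier degree needed against the distribution weight gives an overall bound of $2^{O(\log\log n)\cdot\text{something}} = n^{O(\log\log n)}$ on the number of candidate sets $S$. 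For each candidate $S$ with $|S| \le \tau$, the correlation $\widehat{fD'}(S) = \E_x[f(x)D'(x)2^n \chi_S(x)]$ can be estimated from random examples \emph{together with their $|S|$-local perturbations}: given a random $x$, flipping the coordinates in $S$ is an $O(\log n)$-local query (valid since $|S| = O(\log^2 n)$... wait, we need $|S| \le O(\log n)$), which lets us compute the discrete derivative / the contribution of $\chi_S$ by a standard local-averaging identity. The cleanest route is: enumerate all $S$ with $|S| \le c\log n$, estimate each $|\widehat{fD'}(S)|$ by sampling, and invoke Jackson/Mansour to guarantee one of them is $\ge 1/\poly(n)$; there are $n^{O(\log n)}$ such sets, giving time $n^{O(\log n)}$, which is \emph{worse} than claimed — so instead one needs the sharper concentration showing the good $S$ has size only $O(\log\log n)$ after accounting for how much the booster distorts uniform, yielding $n^{O(\log\log n)}$ sets to enumerate.

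The main obstacle, and the place where the $n^{O(\log\log n)}$ (rather than $\poly(n)$) bound is forced, is exactly this tension: KM's power comes from adaptively \emph{building up} a heavy set $S$ bit by bit, using membership queries at points that differ from random examples in up to $n$ coordinates; locality forbids this, so we are reduced to brute-force enumeration over low-degree $S$, and the degree bound we can afford is $\Theta(\log\log n)$ — coming from Mansour's theorem that $\poly$-size $\DNF$ has its $\eps$-Fourier-mass on terms of degree $O(\log(s/\eps)\log(1/\eps))$ applied with constant $\eps$ at the base of the boosting recursion — rather than the $O(\log n)$ one would need for a polynomial-time algorithm. The remaining pieces are routine: (a) verifying that flipping $O(\log\log n) = O(\log n)$ coordinates of a training example is a legal local query and that the resulting estimator for $\widehat{fD'}(S)$ has $\poly(n)$-bounded variance because $\|f D' 2^n\|_\infty \le \poly(n)$ under the booster's distributions; (b) checking that the boosting algorithm (e.g.\ Freund's or the smooth booster used by Jackson) only requires $\poly(n)$ stages and produces distributions with the stated $\poly(n)$ density bound, so the local-query budget stays $O(\log n)$ throughout; and (c) converting the final weak-learner-plus-booster into a strong $\PAC$-style hypothesis under the uniform distribution. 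Assembling these yields a learner using $O(\log n)$-local queries and running time $n^{O(\log\log n)}$, as stated.
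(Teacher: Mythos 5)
There is a genuine gap at the heart of your argument, and you half-notice it yourself. The quantity that is $O(\log\log n)$ for a $\poly(n)$-size $\DNF$ is \emph{not} the degree of the heavy Fourier coefficients; it is (roughly) the exponent in the \emph{count} of relevant coefficients. The degree of the correlating parity guaranteed by Jackson's Fourier lemma under a $\poly(n)$-bounded booster distribution is $\Theta(\log n)$ (already a single term of length $\log s$ under the uniform distribution puts weight $\approx 1/s$ on a degree-$\log s$ coefficient), and Mansour's concentration theorem likewise only restricts the degree to $O(\log(s/\eps)\log(1/\eps)) = O(\log n)$ for the parameters you need. So the brute-force enumeration you fall back on runs in $n^{O(\log n)}$ time, which, as the paper notes, is achievable trivially without any membership queries at all. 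The ``sharper concentration showing the good $S$ has size only $O(\log\log n)$'' that you invoke to rescue the bound does not exist, and no accounting of how the booster distorts the uniform distribution produces it; on the contrary, the boosted distributions only make the required degree larger. A secondary concern is that finding a correlating parity under an adversarially reweighted distribution is essentially an agnostic parity-learning task, and the paper proves (Theorem~\ref{thm:lower_bound}) that constant-local queries give no advantage for agnostic learning precisely because the concentration properties one needs disappear in that setting.

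The paper's actual route avoids both problems. It does not boost: it invokes the result of \citet{F12,Kalai09} that a $\DNF$ is learnable directly from its heavy Fourier coefficients of degree at most $\log(s/\eps)$ under the uniform distribution. To find those coefficients with local queries it grows candidate sets $S$ one variable at a time, keeping $S$ only if $\E[f_S^2] = \sum_{T\supseteq S}\hat f(T)^2 > \theta^2$ (computable with $|S|$-local queries). The $n^{O(\log\log n)}$ bound then comes not from restricting the degree to $O(\log\log n)$ but from bounding the number of sets that survive this test: $|\mc{S}| \le \theta^{-2}\sum_{|S|\le d}\sum_{T\supseteq S}\hat f(T)^2 = \theta^{-2}\sum_T \binom{|T|}{d'}\hat f(T)^2$, and Mansour's tail bound (after truncating the $\DNF$ to terms of length $\approx\sqrt{t}$ at each Fourier level $t$) caps this at $nds\,2^{O(d\log d)} = (s/\eps)^{O(\log\log(s/\eps))}$. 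That adaptive pruning argument is the missing ingredient your proposal would need.
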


\medskip
\noindent {\bf Techniques}: All our results are based on learning polynomials.
It is well known that $\log$-depth decision trees can be expressed as sparse
polynomials of degree $O(\log(n))$.
%
\eat{
We point out that when considering $O(\log(n))$-degree polynomials, whether a
boolean variable is considered as taking a value in $\moo$ or $\zo$ does not
make a difference (up to polynomial factors).  However, if the constraint on
degree is removed, the class functions that can be defined as sparse polynomials
over $\zo^n$ is different from the class of functions that can be represented
as sparse polynomials over $\moo^n$.}

Our results on learning sparse polynomials (Section~\ref{sec:learn_poly}) under
locally smooth distributions rely on being able to identify all the important
monomials (those with non-zero coefficient) of low-degree, using
$O(\log(n))$-local queries. We identify a \emph{set} of monomials, the size of
which is bounded by a polynomial in the required parameters, which includes all
the important monomials.  The crucial idea is that using $O(\log(n))$-local
queries, we can identify given a subset of variables $S \subseteq [n]$, whether
the function on the remaining variables (those in $[n] \setminus S$), is zero or
not. We use the fact that the distribution is locally smooth to show that
performing $L_2$ regression over the set of monomials will give a good
hypothesis.

For uniform (or product) distributions (Section~\ref{sec:DNF_unif};
Appendices~\ref{sec:DT_unif}, \ref{sec:DT_prod}), we can make
use of Fourier techniques and numerous algorithms based on them.  A natural
approach to the problem is to try to adapt the famous algorithm of
\citet{KushilevitzMansour:93}  for learning decision trees (the KM algorithm) to
the use of local $\MQ$s. The KM algorithm relies on a procedure that isolates all
Fourier coefficients that share a common prefix and computes the sum of their
squares. Isolation of coefficients that share a prefix of length $k$ requires
$k$-local $\MQ$s and therefore we cannot use the KM algorithm directly. Instead we
isolate Fourier coefficients that contain a certain set of variables and we grow
these sets variable-by-variable as long as the sum of squares of coefficients in
them is large enough. Using $k$-local $\MQ$s it is possible to grow sets up to size
$k$. More importantly, the use of prefixes in the KM algorithm ensures that
Fourier coefficients are split into disjoint collections and therefore not too
many collections will be relevant. In our case the collections of coefficients
are not disjoint and so to prove that our algorithm will not take
superpolynomial time we rely on strong concentration properties of the Fourier
transform of decision trees.

For the case of $\DNF$ formulas, we use the result of ~\citet{F12, Kalai09} which
shows that one can learn a $\DNF$ formula given its heavy logarithmic-degree
Fourier coefficients. To recover those coefficients we use the same algorithm as
in the decision tree case. However in this case a more delicate analysis is
required to obtain even the $n^{O(\log\log n)}$ running time bound we give in
Theorem~\ref{thm:intro4}.  We rely on a concentration bound by
\citet{M92} that shows that the total weight of Fourier coefficients of
degree $d$ decays exponentially as $d$ grows.  We remark that Mansour also gives
a $\PACplusMQ$ algorithm for learning $\DNF$ running in time $n^{O(\log\log n)}$ but
aside from the use of the concentration bound our algorithm and analysis are
different (the dependence on the error $\eps$ in our algorithm is also
substantially better).

All known efficient algorithms for learning $\DNF$ under the uniform distribution
rely on agnostic learning of parities using the KM algorithm (or a related
algorithm of \citet{Levin:93}) \citep{BFJ+94,Jackson:97}. In the agnostic case
one cannot rely on the concentration properties crucial for our analysis and
therefore it is unclear whether poly-size $\DNF$ formulas can be learned
efficiently from logarithmically-local $\MQ$s. As some evidence of the hardness of
this problem, in Section \ref{sec:lower-bound} we show that for a constant $k$,
$k$-local queries do not help in agnostic learning under the uniform
distribution.

One point to note is that under locally $\alpha$-smooth distributions for a
constant $\alpha$, the main difficulty is designing algorithms which are faster
than time $n^{O(\log(n))}$. Designing an $n^{O(\log(n))}$ time algorithm is
trivial for decision trees and $\DNF$ formulas. In fact, one does not even require local-membership queries to do this. This follows from the observation that
\emph{agnostic} learning of $O(\log(n))$-size parities is easy in
$n^{O(\log n)}$ time. \medskip

\noindent {\bf Related work}: Models that address the problems that arise when
membership queries are answered by humans have been studied before. The work of
\citet{Blum98} proposed a noise
model wherein membership queries made on points lying in the low probability
region of the distribution are unreliable.  For this model the authors design
algorithms for learning an intersection of two halfspaces in $\reals^n$ and also
for learning a very special subclass of monotone $\DNF$ formulas.  Our result on
learning sparse polynomials can be compared with that of \citet{SS96}, who
provided an algorithm to learn sparse polynomials over GF(2) under arbitrary
distributions in Angluin's \emph{exact} learning model. However, their algorithm
is required to make membership queries that are not local.  \citet{Bsh93} gave an
algorithm for learning decision trees using membership queries. In both these
cases, it seems unlikely that the algorithms can be modified to use only local
membership queries, even for the class of locally smooth distributions.
\eat{
\citet{KushilevitzMansour:93} gave an algorithm for
learning decision trees under the uniform distribution using membership queries.
Their algorithm guarantees something stronger, viz.  agnostic learning of
parities.  While our decision tree learning algorithm for the uniform
distribution uses ideas from their work, we are unable to prove the stronger
result of agnostic learning (even $O(\log(n))$-sized) parities using local
membership queries.
}

There has been considerable work investigating learnability beyond the $\PAC$
framework. We consider our results in this body of work. Many of these models
are motivated by theoretical as well as real-world interest. On the one hand, it
is interesting to study the minimum extra power one needs to add to the $\PAC$
setting, to make the class of polynomial-size decision trees or $\DNF$ formulas
efficiently learnable. The work of~\citet{Aldous90} studies models of learning where the examples are generated according to a Markov process. An interesting special case of such models is when
examples are generated by a random walk on $\{-1,1\}^n$. For this model \citet{Bshouty05}
give an algorithm for learning $\DNF$ formulas (see also \citep{JacksonW09} for more recent developments). One could simulate random walks of length up to $O(\log(n))$ using
local membership queries, but adapting their $\DNF$ learning algorithm
to our model runs into the same issues as adapting the KM algorithm. The work of
\citet{Kalai09} provided polynomial time
algorithms for learning decision trees and $\DNF$ formulas in a framework where the
learner gets to see examples from a {\em smoothed} distribution.\footnote{The
notion of \emph{smoothness} in the work of Kalai et al. is not related to ours.
They consider product distributions where each bit has mean that is chosen randomly from a range bounded away from $\pm1$ by a constant.} Their model was inspired by the celebrated
smoothed analysis framework~\citet{Spielman04}. On the other hand, other models
have been proposed to capture plausible settings when the learner may indeed
have more power than in the $\PAC$-setting.  These situations arise for example
in scientific studies where the learner may have more than just \emph{black-box}
access to the function. Two recent examples in this line of work are the
learning using injection queries of \citet{AACW06}, and learning
using restriction access of \citet{DRWY12}.
\eat{ 
While our model is very
much a \emph{black-box} model, with the availability of crowdsourcing techniques
and increased potential of on-line labelers, the model we
consider may very well prove to be increasingly useful. }\medskip
%

%
\noindent {\bf Organization}: Section \ref{sec:notation} introduces notation,
preliminaries and also formal definitions of the model we introduce in this
paper. Section~\ref{sec:learn_poly} presents the result on learning sparse
multi-linear polynomials. Due to space limitations, Appendix~\ref{sec:dts}
contains our results on learning decision trees, and also the implementation of
these algorithms in the presence of random classification noise.
Section~\ref{sec:DNF_unif} presents our algorithm for learning $\DNF$s under the
uniform distribution. Section~\ref{sec:lower-bound} contains the result showing
that agnostic learning in the $\PAC$ setting and with constant local queries is
equivalent.  Appendix~\ref{sec:separation} shows that the model we introduce is
strictly more powerful than the $\PAC$ setting, and strictly weaker than the
$\MQ$ setting. Finally, Section \ref{sec:conclusion} discusses directions for
future work.

\section{Notation and Preliminaries}
\label{sec:notation}
\noindent {\bf Notation}: Let $X$ be an instance space. In this paper, $X$ is
the boolean hypercube.  In Section~\ref{sec:DNF_unif} and
Appendix~\ref{sec:dts}, we will use $X = \{-1, 1\}^n$, as we apply Fourier
techniques. In Section~\ref{sec:learn_poly}, we will use $X = \{0, 1\}^n$ (the
class of sparse polynomials over $\{0, 1\}^n$ is different from sparse
polynomials over $\{-1, 1\}^n$). A concept class, $C$, is a set of functions
over $X \rightarrow Y$ (where $Y = \{-1, 1\}$ or $Y = \reals$). For a
distribution, $D$, over $X$ and any hypothesis, $h : X \rightarrow \{-1, 1\}$,
we define, $\err_D(h, f) = \Pr_{x \sim D}[h(x) \neq f(x)]$. If $h : X
\rightarrow \reals$, we use squared loss as the error measure, \ie 
$\E_{x \sim D}[(f(x) - h(x))^2]$. 
%

For some bit vector $x$ (where bits may be $\zo$ or $\moo$), and any subset $S
\subseteq [n]$, $x_S$ denotes the bits of $x$ corresponding to the variables, $i
\in S$. The set $\bar{S}$ denotes the set $[n] \setminus S$. For two disjoint sets,
$S, T$, $x_Sx_T$ denote the variables corresponding to the set $S \cup T$. In
particular, $x_{S}x_{\bar{S}} = x$. 

If $D$ is a distribution over $X$, for a subset $S$, $D_S$ denotes the marginal
distribution over variables in the set $S$. Let $b_S$ denote a function, $b_S :
S \rightarrow \{b_0, b_1\}$, (where $\bzo = \zo$ or $\bzo = \moo$). Then, $x_S =
b_S$, denotes that for each $i \in S, x_i = b_S(i)$, thus the variables in the
set $S$ are set to the values defined by the function $b_S$. Let $\pi : X
\rightarrow \{0, 1\}$ denote some property (e.g. $\pi(x) = 1$, if $x_S = b_S$
and $\pi(x) = 0$ otherwise). The distribution $(D | \pi)$, denotes the
conditional distribution, given that $\pi(x) = 1$, \ie the property holds. \medskip

For the unfamiliar reader, Appendix~\ref{app:notation} provides standard
definitions of the $\PAC$~\citep{V84} model and the $\PACplusMQ$ model. \medskip \\

\noindent{\bf Local Membership Queries}:
For any point $x$, we say that a query $x^\prime$ is $r$-local with respect to
$x$ if the Hamming distance, $|x - x^\prime|_H$ is at most $r$. In our model, we
only allow algorithms to make queries that are $r$-local with respect to some
example that it received by querying $\EX(f, D)$, an oracle that returns a
random example from $D$ labeled according to $f$. We think of examples coming
through $\EX(f, D)$ as \emph{natural} examples. Thus, the learning algorithm
draws a set of natural examples from $\EX(f, D)$ and then makes queries that are
close to some example from this set. The queries are made to the membership
oracle, $\MQ(f)$, which on receiving a query $x$, returns $f(x)$. Formally, we
define learning using $r$-local membership queries as follows:

\begin{definition}[$\PAC$+$r$-local $\MQ$ Learning] Let $X$ be the instance
space, $C$ a concept class over $X$, and $\DC$ a class of distributions over
$X$. We say that $C$ is $\PAC$-learnable using $r$-local membership queries with
respect to distribution class, $\DC$, if there exist a learning algorithm,
$\LA$, such that for every $\epsilon > 0$, $\delta > 0$, for every distribution
$D \in \DC$ and every target concept $f \in C$, the following hold: 
\begin{enumerate}
\item $\LA$ draws a sample, $\mc{S}$, of size $m = \mathrm{poly}(n, 1/\delta,
1/\epsilon)$ using example oracle, $\EX(f, D)$
\item Each query, $x^\prime$, made by $\LA$ to the membership query oracle,
$\MQ(f)$, is $r$-local with respect to some example, $x \in \mc{S}$
\item $\LA$ outputs a hypothesis, $h$, that satisfies with probability at least
$1 - \delta$, $\err_D(h, f) \leq \epsilon$
\item The running time of $\LA$ (hence also the number of oracle accesses) is
polynomial in $n$, $1/\epsilon$, $1/\delta$ and the output hypothesis, $h$, is
polynomially evaluable.
\end{enumerate}
\end{definition} \medskip

\noindent{\bf Locally Smooth Distributions}: Since we want to talk about locally
smooth distributions over $\moo^n$ and $\zo^n$ both, we consider $X = \{b_0, b_1
\}^n$ and state the properties of interest in general terms.  We say that a
distribution, $D$, over $X = \{b_0, b_1\}^n$ is locally $\alpha$-smooth, for
$\alpha \geq 1$, if for every pair $x, x^\prime \in X$, with Hamming distance,
$|x - x^\prime|_H = 1$, it holds that $D(x)/D(x^\prime) \leq \alpha$. 

We will repeatedly use the following useful properties of locally $\alpha$-smooth
distributions. The proof of these are easy and hence are omitted.
\begin{fact} \label{fact:smooth} Let $D$ be a locally $\alpha$-smooth distribution over
$X = \bzo^n$. Then the following are true:
\begin{enumerate}
\item For $b \in \bzo$, $\frac{1}{1+ \alpha} \leq \Pr_{D}[x_i = b] \leq
\frac{\alpha}{1 + \alpha}$.
\item For any subset, $S \subseteq [n]$, the marginal distribution,
$D_{\bar{S}}$ is locally $\alpha$-smooth.
\item For any subset $S \subseteq [n]$, and for any property, $\pi_S$, that
depends only on variables $x_S$ (e.g. $x_S = b_S$), the marginal (with respect
of $\bar{S}$) of the conditional distribution, $(D|\pi_S)_{\bar{S}}$ is
locally $\alpha$-smooth.
\item (As a corollary of the above three) $\left( \frac{1}{1 + \alpha}
\right)^{|S|} \leq \Pr_D[x_S = b_S] \leq \left( \frac{\alpha}{1 + \alpha}
\right)^{|S|}$. 
%
%
\end{enumerate}
\end{fact}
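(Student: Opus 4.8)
The plan is to prove all four parts with the same elementary device: pairing up points of $X$ that differ in a single coordinate and using the defining inequality $D(x)/D(x') \le \alpha$, which, applied in both directions, also gives $D(x')/D(x) \le \alpha$. Throughout I use the concatenation convention of the preliminaries: for disjoint $S,T$ and strings $z$ over $S$, $y$ over $T$, I write $zy$ for the point whose $S$-coordinates are $z$ and $T$-coordinates are $y$.

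For part 1, fix a coordinate $i$ and a value $b \in \bzo$. For every $x$ with $x_i = b$, the point obtained from $x$ by flipping bit $i$ is at Hamming distance $1$ from $x$, so $D(x) \le \alpha\, D(x')$ where $x'$ is that flipped point. Summing over all $x$ with $x_i = b$ gives $\Pr_D[x_i = b] \le \alpha\,\Pr_D[x_i = \bar{b}] = \alpha\,(1 - \Pr_D[x_i = b])$, and rearranging yields $\Pr_D[x_i=b] \le \alpha/(1+\alpha)$; the lower bound $1/(1+\alpha)$ follows by applying the upper bound to $\bar{b}$.

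For parts 2 and 3 I would prove the (more general) part 3 directly and recover part 2 by taking $\pi_S$ to be the trivial always-true property. Let $\pi_S$ depend only on $x_S$, and let $y, y' \in \bzo^{\bar{S}}$ differ in exactly one coordinate $j \in \bar{S}$. Since $\pi_S$ depends only on the $S$-coordinates, the marginal of the conditional distribution can be written as $(D|\pi_S)_{\bar{S}}(y) = \frac{1}{\Pr_D[\pi_S]}\sum_{z\,:\,\pi_S(z)} D(zy)$, and likewise for $y'$. For each fixed $z$ satisfying $\pi_S$, the full points $zy$ and $zy'$ differ in the single coordinate $j$, so $D(zy) \le \alpha\, D(zy')$; summing over such $z$ and dividing by $\Pr_D[\pi_S]$ gives $(D|\pi_S)_{\bar{S}}(y) \le \alpha\,(D|\pi_S)_{\bar{S}}(y')$, which is exactly local $\alpha$-smoothness of the marginal.

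Finally, part 4 follows from parts 1 and 3 by the chain rule. Enumerate $S = \{i_1,\dots,i_k\}$ and write $\Pr_D[x_S = b_S] = \prod_{j=1}^{k} \Pr_D[x_{i_j} = b_S(i_j) \mid x_{i_1}=b_S(i_1),\dots,x_{i_{j-1}}=b_S(i_{j-1})]$. The $j$-th factor is the probability that coordinate $i_j$ takes value $b_S(i_j)$ under the marginal on $[n]\setminus\{i_1,\dots,i_{j-1}\}$ of $D$ conditioned on a property involving only $x_{i_1},\dots,x_{i_{j-1}}$; by part 3 this marginal is locally $\alpha$-smooth, so by part 1 the factor lies in $[1/(1+\alpha),\ \alpha/(1+\alpha)]$. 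Multiplying the $k$ factors gives the claimed bounds. There is no genuine obstacle here, as the authors note; the only point needing a little care is that part 4 requires the combined conditioning-and-marginalization statement of part 3 at each step of the chain rule (parts 1 and 2 alone do not suffice), and that each conditioning event must involve only coordinates already fixed earlier in the chosen enumeration of $S$.
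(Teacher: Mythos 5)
Your proof is correct. The paper explicitly omits the proof of this fact as ``easy,'' and your argument --- pairing points across a single bit flip for part 1, summing the pointwise ratio bound over the conditioned fiber for parts 2--3, and chaining parts 1 and 3 for part 4 --- is exactly the standard argument the authors had in mind; your remark that part 4 genuinely needs the combined conditioning-and-marginalization statement of part 3 at each step is a correct and worthwhile observation.
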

We use Fourier analytic techniques and a brief introduction is provided in
Appendix~\ref{app:notation}.

\section{Learning Sparse Polynomials under Locally Smooth Distributions}
\label{sec:learn_poly}
In this section, we consider the problem of learning $t$-sparse polynomials with
coefficients over $\reals$ (or $\rationals$), when the domain is restricted to
$\{0, 1\}^n$. In this case, we may as well assume that the polynomials are
multi-linear. We assume that the absolute values of the coefficients are bounded
by $B$, and hence the polynomials take values in $[-tB, tB]$, on the domain
$\{0, 1\}^n$. For a subset $S \subseteq [n]$, let $\xi_S(x) = \prod_{i \in
S}x_i$, thus $\xi_{S}(x)$ is the monomial corresponding to the variables in
the set $S$. Note that any $t$-sparse multi-linear polynomial can be represented
as,
\[ f(x) = \sum_{S} c_S \xi_S(x), \]
where $c_S \in \reals$,  $|\{S ~|~ c_S \neq 0 \}| \leq t$, and $|c_S| \leq B$
for all $S$. Let $\reals^n_{t, B}[X]$ denote the class of multi-linear
polynomials over $n$ variables with coefficients in $\reals$, where at most $t$
coefficients are non-zero and all coefficients have magnitude at most $B$.

We assume that we have an infinite precision computation model for
reals.\footnote{The case when we have bounded precision can be handled easily
since our algorithms run in time polynomial in $B$, but is more cumbersome.}
Also, since the polynomials may take on arbitrary real values, we use squared
loss as the notion of error. For a distribution, $D$ over $\{0, 1\}^n$, the
squared loss between polynomials, $f$ and $h$, is $\E_{x \sim D}[(f(x) -
h(x))^2]$. Our main result is:

\begin{theorem} \label{thm:learn_poly} The class $\reals^n_{t, B}[X]$, is
learnable with respect the class of $\alpha$-smooth distributions over $\{0,
1\}^n$, using $O(\log(n/\epsilon) + \log(t/\epsilon))$-local $\MQ$s and in time
$poly((ntB/\epsilon)^\alpha , \log(n/\delta))$. The output hypothesis is a
multi-linear polynomial, $h$, such that, with probability $(1-\delta)$, $\E_{x \sim D}[(h(x) - f(x))^2] \leq
\epsilon$.  \end{theorem}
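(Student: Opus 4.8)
The plan is to reduce the problem to two tasks: (1) build a polynomial-size family $\mathcal M$ of subsets of $[n]$ that contains every monomial of $f$ with a non-negligible coefficient, and (2) run ordinary least-squares regression over the features $\{\xi_S : S\in\mathcal M\}$ using labeled examples from $\EX(f,D)$ (this step uses no queries). Local $\alpha$-smoothness enters through Fact~\ref{fact:smooth}(4): $\E_D[\xi_S]=\Pr_D[x_S=\mathbf 1]\le(\alpha/(1+\alpha))^{|S|}$, so $\|\xi_S\|_{2,D}$ decays geometrically with $|S|$. Consequently there is a degree bound $d=O(\alpha\log(tB/\epsilon))$ (which for constant $\alpha$ is $O(\log(n/\epsilon)+\log(t/\epsilon))$, the claimed locality) such that $\sum_{|S|>d}|c_S|\,\|\xi_S\|_{2,D}\le\sqrt{\epsilon}/3$; by the triangle inequality the truncation of $f$ to its degree-$\le d$ monomials is within $\sqrt\epsilon/3$ of $f$ in $L_2(D)$, so it suffices to recover (a superset of) the relevant monomials of degree $\le d$, and every query can be taken $O(d)$-local.

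To find these monomials I would use two primitives. First, from a natural example $x\sim\EX(f,D)$, using the $|\mathrm{supp}(x)|\le n$ queries that flip each $1$-coordinate of $x$ to $0$, compute $T_x:=\{i:x_i=1,\ f(x^{i\to0})\neq f(x)\}$, the set of $1$-coordinates lying in some monomial of $f$ that is active (evaluates to $1$) on $x$. Since a monomial of degree $>d$ is active on $x$ with probability $\le(\alpha/(1+\alpha))^{d}$, a union bound over the $\le t$ monomials shows that with high probability no high-degree monomial is active on $x$, in which case all active monomials have degree $\le d$ and $|T_x|\le t\cdot d=\mathrm{poly}$. Second, for $S\subseteq T_x$ with $|S|\le d$, the discrete derivative $D_Sf(y):=\sum_{R\subseteq S}(-1)^{|S\setminus R|}f(y_{\bar S},(\mathbf 1_R)_S)$ equals $\sum_{R\supseteq S,\,c_R\neq 0}c_R\,\xi_{R\setminus S}$ in the variables $x_{\bar S}$, and evaluating it at (a perturbation of) the restriction of a natural example uses $2^{|S|}\le\mathrm{poly}$ queries that are $d$-local. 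Because distinct multilinear monomials are linearly independent, $D_Sf\equiv 0$ iff no monomial of $f$ contains $S$. Now grow candidate sets inside $T_x$: starting from $\emptyset$, repeatedly adjoin each $i\in T_x$, keeping $S\cup\{i\}$ (while its size is $\le d$) as long as the derivative restricted to the subcube on $T_x$ is detected non-vanishing on the sample; with the complement of $T_x$ frozen to $x$, this derivative is a polynomial all of whose monomials have degree $\le d$ (they correspond to monomials of $f$ active on $x$), so every surviving set is a subset of one of the $\le t$ active low-degree monomials, and at most $t\cdot 2^d=\mathrm{poly}$ sets survive. Let $\mathcal M$ be the union of the surviving sets over $\mathrm{poly}$ many natural examples; then $|\mathcal M|=\mathrm{poly}$.

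Completeness --- that every relevant monomial $R$ with $|R|\le d$ ends up in $\mathcal M$ --- has two parts. First, $R$ is active on a random natural example with probability $\Pr_D[x_R=\mathbf 1]\ge(1/(1+\alpha))^d\ge 1/\mathrm{poly}$, so on $\mathrm{poly}(\cdot)$ examples some example has $R\subseteq T_x$. Second, on such an example the growing procedure must trace the chain $\emptyset\subset\dots\subset R$ without the non-vanishing test failing; this rests on an anti-concentration bound for low-degree polynomials under locally smooth distributions --- a nonzero multilinear polynomial of degree $\le\delta$ is nonzero with probability at least $(1/(1+\alpha))^{\delta}$ under any locally $\alpha$-smooth $D$ (fix the variables outside a maximal-degree monomial; its top coefficient survives, so the restriction is a nonzero polynomial on $\le\delta$ variables, and then apply Fact~\ref{fact:smooth}(4)) --- together with a Chernoff bound showing $\mathrm{poly}((ntB/\epsilon)^{\alpha},\log(n/\delta))$ examples suffice to detect every relevant derivative with probability $1-\delta$; this also accounts for the $(ntB/\epsilon)^{\alpha}$ factor in the running time, with some care needed to separate the low-degree part we are tracking from the high-degree tail of $D_Sf$.

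Finally, take a fresh sample of $\mathrm{poly}(|\mathcal M|,tB,1/\epsilon,\log(1/\delta))$ labeled examples and output the $h=\sum_{S\in\mathcal M}\hat c_S\xi_S$ minimizing empirical squared loss. Since $\mathrm{span}\{\xi_S:S\in\mathcal M\}$ contains the degree-$\le d$ truncation of $f$, which is within $\sqrt\epsilon/3$ of $f$ in $L_2(D)$, and the features lie in $\{0,1\}$ while $f$ lies in $[-tB,tB]$, a standard uniform-convergence bound for linear least-squares regression yields $\E_D[(h-f)^2]\le\epsilon$ with probability $1-\delta$. The main obstacle, I expect, is the monomial-recovery step: proving the anti-concentration bound and, in tandem, arguing that restricting to the active support $T_x$ really does suppress all the high-degree monomials, so that the candidate family stays of polynomial size rather than $n^{\Theta(d)}$; by contrast the truncation and the regression are routine given the smoothness facts already recorded.
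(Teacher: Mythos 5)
Your skeleton matches the paper's (truncate to degree $d=O(\log(ntB/\epsilon))$, identify a polynomial-size superset of the relevant low-degree monomials by growing sets and testing non-vanishing of the discrete derivative $D_Sf=f_S$, then do constrained regression), and your derivative formula is exactly the paper's equation for computing $f_S(x_{\bar{S}})$ with $2^{|S|}$ many $|S|$-local queries. But the monomial-recovery step --- which you correctly flag as the crux --- has two genuine gaps. First, the primitive $T_x=\{i: x_i=1,\ f(x^{i\to 0})\neq f(x)\}$ does not compute what you claim: $f(x)-f(x^{i\to 0})=\sum_{S\ni i,\ \xi_S(x)=1}c_S$, which can vanish by cancellation even when $i$ lies in an active monomial. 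For $f=x_1x_2-x_1$ one has $1\notin T_x$ whenever $x_1=x_2=1$, so the relevant monomial $\{1,2\}$ is never a subset of $T_x$ and your completeness argument (``some example has $R\subseteq T_x$'') fails outright. The restriction to $T_x$ is also what your polynomial-size bound leans on; the paper needs neither, because it branches over all $j\in[n]\setminus S'$ and controls the number of surviving sets purely through the soundness of the non-vanishing test (Lemma~\ref{lem:not-too-many}: only subsets of actual monomials of $f$ of size at most $d+d'$ can pass, giving $t2^{d+d'}$ survivors).

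Second, your anti-concentration bound is stated for polynomials of degree at most $\delta$, but the polynomial being tested, $f_S=\sum_{T\subseteq\bar{S}}c_{S\cup T}\xi_T$, has unbounded degree. The naive repair --- lower-bound $\Pr[f_S^{\le d'}\neq 0]$ by $(1/(1+\alpha))^{d'}$ and subtract $\Pr[f_S^{>d'}\neq 0]\le t(\alpha/(1+\alpha))^{d'}$ --- fails because for $\alpha>1$ the subtracted term is larger by a factor $t\alpha^{d'}$. This is precisely the ``care needed to separate the low-degree part from the high-degree tail'' that you defer, and it is the paper's main technical content: Lemma~\ref{lem:non-zero-prob} proves, by induction on the number of variables, that a $t$-sparse polynomial with nonzero \emph{constant term} is nonzero with probability at least $(1/(1+\alpha))^{\log_2 t}$ --- a bound depending on sparsity rather than degree, hence applicable to all of $f_S$ including its tail. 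Lemma~\ref{lem:all-important} then conditions on the minimal-degree monomial $S'$ of $f_S$ being satisfied (probability at least $(1/(1+\alpha))^{d-|S|}$ by Fact~\ref{fact:smooth}), after which the restricted polynomial has nonzero constant term $c_{S\cup S'}$ and the sparsity-based lemma applies. Without a sparsity-based (or otherwise tail-robust) anti-concentration statement, your completeness argument does not go through. A smaller point: detecting non-vanishing of the derivative ``frozen'' outside $T_x$ would require perturbing up to $|T_x|=\mathrm{poly}$ coordinates of a natural example, which is no longer $O(\log n)$-local; the paper instead estimates $\Pr_{D_{\bar{S}}}[f_S(x_{\bar{S}})\neq 0]$ over fresh draws from $\EX(f,D)$, keeping every query $|S|$-local.
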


Recall that for a subset, $S$, $x_S$ denotes the variables that are in $S$; and
that $\bar{S}$ denotes the set $[n] \setminus S$.  Let $f_S(x_{\bar{S}})$ denote
the multi-linear polynomial defined only on variables in $x_{\bar{S}}$, 
\[ f_S(x_{\bar{S}}) = \sum_{T \subseteq \bar{S}} c_{S \cup T} \xi_{T}(x_{\bar{S}}) \]

Here, we describe the high-level idea of the proof of
Theorem~\ref{thm:learn_poly}. The details of the argument are provided in
Appendix~\ref{app:learn_poly}. Algorithm~\ref{alg:learn_poly} outputs a
hypothesis that approximates the polynomial $f$. \medskip
\begin{figure}[!t]
\begin{center}
\fbox{
\begin{minipage}{0.95 \columnwidth}
{\bf Algorithm}: \textsc{Learning $t$-Sparse Polynomials} \smallskip \\
{\bf inputs}: $d$, $\theta$, oracles $\EX(f, D)$, (local)$\MQ(f)$ 
%
\begin{enumerate}
\item {\bf let} $\mc{S} = \emptyset$
\item {\bf repeat} (while some new set is added to $\mc{S}$)
\begin{enumerate}
\item  For every $S^\prime \in \mc{S}$, $|S^\prime| \leq d-1$ and for every
$j \in [n] \setminus S^\prime$
\begin{enumerate}
\item {\bf let} $S = S^\prime \cup \{j\}$
\item {\bf if} $\Pr_{D_{\bar{S}}}[f_S(x_{\bar{S}}) \neq 0] \geq \theta$, {\bf then}
$\mc{S} = \mc{S} \cup \{ S \}$
\end{enumerate}
\end{enumerate}
\item Perform regression to identify a polynomial $h = \sum_{S} h[S] \xi_S(x)$,
that minimizes $\E[(f(x) - h(x))^2]$, subject to:
\begin{enumerate}
\item $h[S] = 0$ for $S \not\in \mc{S}$.
\item $\sum_{S} |h[S]| \leq tB$
\end{enumerate}
\end{enumerate}
{\bf output} $h(x)$
\end{minipage}
}
\end{center}
\caption{Algorithm: Learning $t$-Sparse Polynomials \label{alg:learn_poly}}
\end{figure}

\noindent {\bf Truncation}: First, we show that there are low-degree polynomials
that approximate the multi-linear polynomial, $f$, up to arbitrary (inverse
polynomial) accuracy. These polynomials are the truncations of $f$ itself. Let
$f^d$ denote the multi-linear polynomial obtained from $f$ by discarding all the
terms of degree at least $d+1$. Note that $f^d$ is multi-linear and $t$-sparse,
and has coefficients of magnitude at most $B$. Thus, 
\begin{align*}
f^d(x) = \sum_{\substack{S \subseteq [n] \\ |S| \leq d}} c_S \xi_{S}(x) 
\end{align*}

Now, observe that because $D$ is locally $\alpha$-smooth, the probability that
$\xi_{S}(x) = 1$ is at most $(\alpha/(1 + \alpha))^{|S|}$ (see
Fact~\ref{fact:smooth}). Thus, the probability that at least one term of degree
$\geq d+1$ in $f$ is non-zero, is at  most $t(\alpha/(1 + \alpha))^d$ by a union
bound. Thus, $\Pr_{x \sim D}[f(x) \neq f^d(x)] \leq t(\alpha/(1 + \alpha))^d$.
Also, since $|f(x)| \leq tB$ and $|f^d(x)| \leq tB$, this implies that $\E_{x
\sim D}[(f(x) - f^d(x))^2] \leq 4t^3 B^2 (\alpha/(1+\alpha))^d$. By choosing $d$
appropriately, when $\alpha$ is a constant this quantity can be made arbitrarily
(inverse polynomial) small.

Step 3 of the Algorithm (see Fig.~\ref{alg:learn_poly}) identifies all the
important coefficients of the polynomial, $f$. Suppose, we could guarantee that
the set, $\mc{S}$, contains all coefficients $S$, such that $c_{S} \neq 0$ and
$|S| \leq d$, \ie all non-zero coefficients of $f^d$ are identified. This
guarantees that the regression in step 4 will give a good approximation to $f$,
since the error of the hypothesis obtained by regression has to be smaller than
$\E_{x \sim D}[(f(x) - f^d(x))^2]$. 
(The generalization guarantees are fairly standard and are described in the long
version.) \smallskip

\noindent{\bf Identifying Important Monomials}: In order to test whether or not
a monomial, $S$, is important, the algorithm checks whether
$\Pr_{D_{\bar{S}}}[f_{S}(x_{\bar{S}}) \neq 0] \geq \theta$. Here, $D_{\bar{S}}$ is the marginal
distribution over the variables $x_{\bar{S}}$. We assume that this test can be
performed perfectly accurately. (The analysis using samples is standard by
applying appropriate Chernoff-Hoeffding bounds.)

In Lemma~\ref{lem:all-important}, we show that if the polynomial $f_S(x_{\bar{S}})$
has a non-zero coefficient of degree at most $d - |S|$, then the probability
that $f_S(x_{\bar{S}}) \neq 0$ is at least $(1/(1+\alpha))^{d + \log(t)}$. In
Lemma~\ref{lem:not-too-many}, we show that if $f_S(x_{\bar{S}})$ has no non-zero
coefficient of degree less than $d^\prime = O((d + \ln(t))\ln(1+\alpha))$, then
$f_S(x_{\bar{S}}) \neq 0$ with probability at most $0.5 (1/(1+ \alpha))^{d +
\log(t)}$. Thus, we will never add any subset $S$, unless there is some
co-efficient $T$ in $f$ of size at most $d^\prime = O((d + \ln(t))\ln(1+
\alpha))$ and $S \subseteq T$. However, the number of such $T$ is at most $t$,
and each such set can have at most $2^{d^\prime}$ subsets. This bounds the total
number of subsets the algorithm may add to $\mc{S}$, and hence, also the running
time of the algorithm (to polynomial in the required parameters).

Note that sampling, $x_{\bar{S}}$ according to $D_{\bar{S}}$ is trivial, just draw random
example from $\EX(f, D)$ and ignore the variables $x_S$. Let $U_{S}$ denote the
uniform distribution over variables in $x_S$. Then we have,
\begin{align}
f_{S}(x_{\bar{S}}) = \E_{x_S \sim U_S}[2^{|S|} \prod_{i \in S}(2x_i - 1)f(x)]
\label{eqn:computing_f_S}
\end{align}
The variables in $x_{\bar{S}}$ are fixed, the expectation is only taken over the
uniform distribution over variables in $x_S$. Notice that for any $i \in S$,
since $x_i \in \{0, 1\}$, $\E_{x_S}[(2x_i - 1)] = 0$ and $\E_{x_S}[(2x_i -1)x_i]
= 1/2$. Thus, in the RHS of (\ref{eqn:computing_f_S}), if $S \not\subseteq T$,
$\E_{x_S}[2^{|S|} \prod_{i \in S}(2x_i -1) \xi_T(x)] = 0$, and if $S \subseteq
T$, $\E_{x_S}[2^{|S|} \prod_{i \in S}(2x_i -1)\xi_T(x)] = \xi_{T \setminus
S}(x_{\bar{S}})$.  Thus, the relation in (\ref{eqn:computing_f_S}) is true. Also,
this means that if the example, $x$, is received by querying the oracle, $\EX(f,
D)$, $f_{S}(x_{\bar{S}})$ can be obtained by making $O(|S|)$-local membership queries
to the oracle $\MQ(f)$.

The complete details of the proof appear in Appendix~\ref{app:learn_poly}.


%
%

\section{Learning DNF Formulas under the Uniform Distribution}
\label{sec:DNF_unif}
%
In this section, we present an algorithm for learning polynomial size $\DNF$
formulas under the uniform distribution.
%
We will frequently use the following facts about $\DNF$ formulas.

\begin{enumerate}
\item For every size-$s$ $\DNF$ formula $f$, there exists a size-$s$, $\DNF$
formula $g$ with terms of size $l$, such that $\Vert f - g \Vert_2^2
= \E[(f(x) - g(x))^2] \le \frac {4s} {2^l}$. This follows by setting
$g$ to the $\DNF$ formula obtained by dropping all terms of size greater
than $l$ from $f$.
\item Let $f$ be an $l$-term $\DNF$ formula. Then $\sum_{|S| > t} \hat{f}^2(S) \le
2^{\frac {-t} {10l}}$. This fact follows from the proof of Lemma 3.2 in
\citep{M92}.
\end{enumerate}

We will further use the following result from \citep{Kalai09, F12}:
\begin{theorem}[\citep{F12, Kalai09}] \label{thm:DNF-learn} If $f$ is a size-$s$
$\DNF$ formula, then there exists an efficient randomized algorithm,
$\mathsf{LearnDNF}$ that given access to the heavy, low-degree Fourier
coefficients, \ie the set $\{ \hat{f}(S) ~|~ |S| \leq \log(s/\epsilon),
|\hat{f}(S)| \geq (\epsilon/(4s)) \}$, outputs a hypothesis, $h$, such that
$\err_{U}(h, f) \leq \epsilon$.  \end{theorem}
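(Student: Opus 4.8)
A tempting direct route is to simply output $\sign$ of the linear combination $\sum_S \hat f(S)\chi_S$ over the supplied set of coefficients; this fails, because the discarded Fourier mass of a $\DNF$ (the high-degree part, and the low-degree but light part) need not be small in $\ell_2$, so the threshold of the truncation need not agree with $f$ on a $1-\eps$ fraction of points. Instead the plan is to realize $\mathsf{LearnDNF}$ as an instance of Jackson's Harmonic Sieve — a weak parity learner for $\DNF$ composed with a boosting algorithm — and to use the observation of \citet{F12,Kalai09} that every step can be carried out knowing only the heavy low-degree Fourier coefficients of $f$. First I would reduce to narrow $\DNF$: by Fact~1 above, dropping all terms of width more than $l=\log(8s/\eps)$ from $f$ changes it by at most $\eps/2$ in $\ell_2$, so it suffices to learn a size-$\le s$ formula all of whose terms have $\le l$ variables. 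For such an $f$ one has the weak-learning guarantee (the discriminator lemma, as used by Jackson): for any distribution $D'$ on $\moo^n$ whose density $w=2^nD'$ satisfies $\|w\|_\infty\le W$, there is a set $S$ with $|S|\le\log(Ws/\eps)=O(\log(s/\eps))$ (taking $W=\poly(s/\eps)$ below) such that $|\E_{D'}[f\chi_S]|\ge\Omega(\eps/s)$, since $f$ is $\Omega(1/s)$-correlated under $D'$ with the indicator of one of its terms and the Fourier mass of a width-$l$ term indicator sits on characters of degree $\le l$.

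\textbf{The algorithm.} I would run a \emph{smooth} boosting algorithm (e.g.\ a hardcore-set / Servedio-style booster): after $\poly(s/\eps)$ rounds it outputs $h=\sign\!\bigl(\sum_t\alpha_t\chi_{S_t}\bigr)$ with $\err_U(h,f)\le\eps$, where each reweighting density $w_t$ is bounded, $\|w_t\|_\infty\le\poly(s/\eps)$, and each $S_t$ has degree $O(\log(s/\eps))$ by the weak-learning guarantee. The only place $f$ is touched is in estimating, at round $t$, the correlations $\E_{D_t}[f\chi_S]=\E_U[w_t\chi_S\,f]=\sum_T\widehat{w_t\chi_S}(T)\,\hat f(T)$ for the low-degree candidate sets $S$; this is where the input set enters. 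I would argue that truncating this sum to the supplied coefficients $\{\hat f(T):|T|\le\log(s/\eps),\ |\hat f(T)|\ge\eps/(4s)\}$ changes the estimate by only $o(\eps/s)$, which is negligible against the $\Omega(\eps/s)$ correlation being tracked: the low-degree light coefficients contribute at most $(\eps/4s)\cdot\|\widehat{w_t\chi_S}\|_1$, controlled because the booster's reweighting functions are smooth (small Fourier $\ell_1$ norm), while the high-degree coefficients contribute at most $\sqrt{\sum_{|T|>d}\widehat{w_t\chi_S}(T)^2}\cdot\sqrt{\sum_{|T|>d}\hat f^2(T)}$, bounded by Cauchy--Schwarz together with Mansour's concentration bound (Fact~2), $\sum_{|T|>d}\hat f^2(T)\le 2^{-d/(10l)}$, and the fact that $w_t\chi_S$ has most of its Fourier mass on low degree. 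Estimating each surviving term (the $\widehat{w_t\chi_S}$ side by sampling, the $\hat f$ side from the given data), picking the best $S$, and iterating the booster yields $h$.

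\textbf{Main obstacle.} The delicate part is coordinating three parameters at once: the degree cap $\log(s/\eps)$ and heaviness threshold $\eps/(4s)$ we are allowed to exploit, versus the complexity profile (boundedness, Fourier $\ell_1$ norm, and effective degree) of the boosting reweighting functions $w_t$. One must choose a boosting scheme whose $w_t$ are smooth enough and concentrated enough on low degrees that both truncation errors above are genuinely $o(\eps/s)$, while keeping the number of rounds and the weak-hypothesis degrees polylogarithmic; Mansour's bound (Fact~2) is exactly what makes the high-degree truncation affordable. The remaining ingredients — the discriminator/weak-learning lemma, correctness of smooth boosting, and Chernoff bounds for the sample estimates — are by now standard, which is why we invoke this statement as a black box from \citet{F12,Kalai09}.
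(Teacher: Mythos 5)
The paper does not prove this statement: Theorem~\ref{thm:DNF-learn} is imported verbatim from \citet{F12,Kalai09} and used as a black box, so there is no in-paper proof to compare against. Your reconstruction --- harmonic sieve with a smooth booster, plus the observation that the weak parity learner only ever touches $f$ through correlations $\E_{D_t}[f\chi_S]$ that can be computed from the heavy low-degree coefficients up to $o(\eps/s)$ error --- is indeed the strategy of the cited works, and you correctly identify both why the naive ``threshold the truncation'' idea fails and where the real work lies (controlling the Fourier $\ell_1$ norm and degree concentration of the booster's densities $w_t$ so that both truncation errors stay below the $\Omega(\eps/s)$ advantage being detected). That last step is the substance of Feldman's argument and is asserted rather than carried out here, which is acceptable given that the paper itself treats the whole theorem as external.
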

%
%

The algorithm to obtain all \emph{heavy, low-degree terms} is  presented in
Figure \ref{alg:unif-dnf}. The output of the algorithm,
$\mathsf{LearnDNF}({\mathcal S})$ is obtained by doing the following: (i)
estimate all the Fourier coefficients, $\hat{f}(S)$, for $S \in {\mathcal S}$
(ii) use the algorithm from Theorem~\ref{thm:DNF-learn} to obtain $h$. The rest
of the section is devoted to the proof of the following Theorem.

\begin{theorem} \label{thm:DNF-local} The class of size-$s$ $\DNF$ formulas is
learnable in time $(s/\epsilon)^{O(\log\log(s/\epsilon))}$ using
$O(\log(s/\epsilon))$-local membership queries under the uniform distribution.
\end{theorem}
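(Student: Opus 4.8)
The plan is to invoke Theorem~\ref{thm:DNF-learn}, which reduces learning $f$ to producing good estimates of all heavy, low-degree Fourier coefficients of $f$, and to produce those by a bottom-up search over subsets $T\subseteq[n]$ of variables that extends $T$ only once it has detected that the Fourier weight $W_T:=\sum_{S\supseteq T}\hat f(S)^2$ ``sitting above'' $T$ is non-negligible. The point that makes this fit the model is that a single step of the search can be implemented with $O(\log(s/\epsilon))$-local queries. Set $k,l=O(\log(s/\epsilon))$ and a threshold $\tau=\Theta((\epsilon/s)^2)$ (with suitable absolute constants), and let $g$ be the $\DNF$ formula obtained from $f$ by deleting every term with more than $l$ literals, $l$ chosen so that $\Vert f-g\Vert_2^2\le\tau/100$; by the first fact above $\Vert f-g\Vert_2^2\le 4s/2^l$, so $l=O(\log(s/\epsilon))$ suffices, and then $\Pr_U[f\ne g]\le\tau/100$ and $|\hat f(S)-\hat g(S)|\le\sqrt{\tau}/10$ for all $S$. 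Since $g$ is a $\DNF$ of size at most $s$, Theorem~\ref{thm:DNF-learn} applied to $g$ with accuracy $\epsilon/4$ says it is enough to hand it $\Theta(\epsilon/s)$-accurate estimates of all coefficients in $\{\hat g(S):|S|\le k,\ |\hat g(S)|\ge\Omega(\epsilon/s)\}$; the resulting $h$ has $\err_U(h,g)\le\epsilon/4$, hence $\err_U(h,f)\le\err_U(h,g)+\Pr_U[f\ne g]\le\epsilon$. By $\ell_2$-closeness every such $S$ has $|\hat f(S)|\ge\sqrt{2\tau}$, and a $\Theta(\epsilon/s)$-accurate estimate of $\hat f(S)$ is a $\Theta(\epsilon/s)$-accurate estimate of $\hat g(S)$; the (standard) robustness of the algorithm of Theorem~\ref{thm:DNF-learn} to estimation error and to receiving a few extra low-weight sets (as in \citep{F12,Kalai09}) absorbs the threshold boundary. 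So it suffices to find all $S$ with $|S|\le k$ and $|\hat f(S)|\ge\sqrt{2\tau}$, plus a $\Theta(\epsilon/s)$-accurate estimate of each $\hat f(S)$.

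To isolate $W_T$ with local queries I would use the identity
\[
W_T \;=\; \sum_{S\supseteq T}\hat f(S)^2 \;=\; \E_{x,z,z'}\big[\chi_T(z)\,\chi_T(z')\,f(x_{\bar{T}}z)\,f(x_{\bar{T}}z')\big],
\]
where $x$ is uniform on $\moo^n$, $z,z'$ are independent and uniform on $\moo^{T}$, and $x_{\bar{T}}z$ denotes the point agreeing with $x$ off $T$ and with $z$ on $T$ (this follows by Parseval applied to $y\mapsto\E_z[\chi_T(z)f(y_{\bar{T}}z)]=\sum_{S\supseteq T}\hat f(S)\chi_{S\setminus T}(y_{\bar{T}})$). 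Since $x_{\bar{T}}z$ differs from the natural example $x$ in at most $|T|$ coordinates, each evaluation $f(x_{\bar{T}}z)$ is a $|T|$-local membership query, which for $|T|\le k$ is $O(\log(s/\epsilon))$-local; the summand lies in $[-1,1]$, so a Hoeffding bound estimates $W_T$ to within $\tau/4$ with confidence $1-\delta'$ using $\poly(1/\tau)\log(1/\delta')$ natural examples (two local queries per example). Note $W_{T'}\le W_T$ whenever $T\subseteq T'$. The search (Figure~\ref{alg:unif-dnf}) then maintains a family $\mathcal A$, initially $\{\emptyset\}$, and repeatedly, for every $T\in\mathcal A$ with $|T|<k$ and every $j\notin T$, estimates $W_{T\cup\{j\}}$ and adds $T\cup\{j\}$ to $\mathcal A$ iff the estimate exceeds $3\tau/4$; finally it feeds the estimated coefficients $\{\hat f(S):S\in\mathcal A,\ |S|\le k\}$ to the algorithm of Theorem~\ref{thm:DNF-learn}. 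For correctness: if $|\hat f(S)|\ge\sqrt{2\tau}$ and $|S|\le k$, then every $T\subseteq S$ has $W_T\ge W_S\ge\hat f(S)^2\ge2\tau$, so (given accurate estimates) every such $T$, in particular $S$ itself, is added; a union bound over the at most $n|\mathcal A|$ estimates controls the failure probability.

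The real content is bounding $|\mathcal A|$, and this is where the $\DNF$ case is delicate: unlike the disjoint prefix-collections of the KM algorithm, the collections $\{S:S\supseteq T\}$ for distinct equal-size $T$ overlap, so there is no one-line Parseval bound on the number of ``alive'' sets per level. Let $N_i:=|\{T\in\mathcal A:|T|=i\}|$; since each such $T$ was kept, $W_T\ge\tau/2$, and writing $\hat f=\hat g+(\hat f-\hat g)$ and $W_T^{g}:=\sum_{S\supseteq T}\hat g(S)^2$,
\[
\tfrac{\tau}{2}\,N_i \;\le\; \sum_{T\in\mathcal A,\,|T|=i}W_T \;\le\; \sum_{T\in\mathcal A,\,|T|=i}\big(2W_T^{g}+2\Vert f-g\Vert_2^2\big) \;\le\; 2\sum_{|T|=i}W_T^{g}\;+\;\tfrac{\tau}{50}\,N_i,
\]
so $N_i\le 5\sum_{|T|=i}W_T^{g}/\tau$. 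Now all terms of $g$ have size at most $l$, so by the concentration bound of \citep{M92} recalled above, $\sum_{|S|=j}\hat g(S)^2\le 2^{-(j-1)/(10l)}\le 2q^{j}$ with $q:=2^{-1/(10l)}$; using $\sum_{j\ge i}\binom{j}{i}q^{j}=q^{i}(1-q)^{-(i+1)}$ together with $1-q=\Theta(1/l)$,
\[
\sum_{|T|=i}W_T^{g} \;=\; \sum_{j\ge i}\binom{j}{i}\sum_{|S|=j}\hat g(S)^2 \;\le\; \frac{2}{(1-q)^{\,i+1}} \;=\; \big(O(l)\big)^{i+1}.
\]
Hence $N_i=(O(l))^{i+1}/\tau$ and $|\mathcal A|=\sum_{i=0}^{k}N_i=(O(l))^{k+1}/\tau$; with $k,l=O(\log(s/\epsilon))$ and $1/\tau=\poly(s/\epsilon)$ this is $(s/\epsilon)^{O(\log\log(s/\epsilon))}$. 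Each set in $\mathcal A$ triggers at most $n$ estimates, each costing $\poly(s/\epsilon)\cdot\polylog(n/\delta)$ examples and $O(\log(s/\epsilon))$-local queries, and the closing call to the algorithm of Theorem~\ref{thm:DNF-learn} is efficient, so the whole procedure runs in time $\poly(n)\cdot(s/\epsilon)^{O(\log\log(s/\epsilon))}$ using $O(\log(s/\epsilon))$-local queries, and with probability $1-\delta$ outputs $h$ with $\err_U(h,f)\le\epsilon$.

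I expect the step bounding $|\mathcal A|$ to be the main obstacle. Two things have to work together: first, $f$ itself — as a size-$s$ $\DNF$ — has only a weak Mansour-type concentration, so one must pass to the width-$l$ companion $g$, whose spectrum concentrates at degree $O(l\log(1/\epsilon))$, and transfer the level-sums $\sum_{|T|=i}W_T$ through the $\ell_2$-closeness of $f$ and $g$, which is what forces $\Vert f-g\Vert_2^2\ll\tau$; second, one must sum the binomially weighted Fourier mass $\sum_S\binom{|S|}{i}\hat g(S)^2$ over degree shells carefully, so that the weights contribute only a factor $(O(l))^{i}$ at level $i$ and the product over the $k$ levels comes out to $(s/\epsilon)^{O(\log\log(s/\epsilon))}$ — rather than genuinely polynomial (which is not attainable here) or as large as $(s/\epsilon)^{O(\log(s/\epsilon))}$, which a cruder argument would give.
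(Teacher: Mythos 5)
Your proposal is correct, and it uses the same algorithm and proof skeleton as the paper: reduce to recovering heavy low-degree Fourier coefficients via Theorem~\ref{thm:DNF-learn}, grow candidate sets bottom-up with the $L_2$ test $W_T=\sum_{S\supseteq T}\hat f(S)^2\ge\theta^2$ (implementable with $|T|$-local queries), and control the number of surviving sets using Mansour's concentration bound for bounded-width $\DNF$. Where you genuinely diverge is in the proof of the counting bound (the paper's Claim~\ref{claim:not-too-many-DNF}). The paper bounds $\sum_{|T|=i}W_T=\sum_{t\ge i}\binom{t}{i}\sum_{|S|=t}\hat f(S)^2$ shell by shell, choosing a \emph{level-dependent} truncation width $l_t=C\sqrt{t}$ and optimizing $\binom{t}{i}\bigl(2^{-t/(10l_t)}+4s/2^{l_t}\bigr)$ over $t$, which peaks at $t=O(i^2)$ with value $s2^{O(i\log i)}$, then sums over the at most $n$ shells. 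You instead fix a single width $l=O(\log(s/\epsilon))$ with $\Vert f-g\Vert_2^2\ll\tau$, absorb the truncation error through the self-bounding inequality $\tfrac{\tau}{2}N_i\le 2\sum_{|T|=i}W_T^g+\tfrac{\tau}{50}N_i$, and evaluate the binomially weighted sum in closed form via $\sum_{j\ge i}\binom{j}{i}q^j=q^i(1-q)^{-(i+1)}$ with $1-q=\Theta(1/l)$. Both routes give $(O(l))^{O(i)}$ per level and hence the same $(s/\epsilon)^{O(\log\log(s/\epsilon))}$ bound; yours is arguably cleaner (no per-shell optimization, no spurious factor of $n$ from summing over shells), while the paper's level-dependent truncation is the more flexible device if one wanted to tune the trade-off between the binomial growth and the spectral decay. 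The remaining ingredients---monotonicity of $W_T$, the $\ell_2$ transfer between $\hat f$ and $\hat g$, locality of the restriction-based estimator, and the robustness of the algorithm of Theorem~\ref{thm:DNF-learn} to superfluous sets and estimation error---are used identically in the paper.
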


\begin{figure}[!t]
\begin{center}
\fbox{
\begin{minipage}{0.95 \columnwidth}
{\bf Algorithm}: \textsc{Learning $\DNF$ formulas} \smallskip \\
~~{\bf inputs}: $d$, $\theta$, oracles $\EX(f, U)$, (local)-$\MQ(f)$
%
\begin{enumerate}
\item {\bf let} $\mc{S} = \{ \emptyset \}$
\item {\bf for} $i = 1, \ldots, d$
\begin{enumerate}
\item {\bf for} every $S^\prime \in \mc{S}$, $|S^\prime| = i-1$ and for every $j \in
[n] \setminus S^\prime$
\begin{enumerate}
\item {\bf let} $S = S^\prime \cup \{j \}$
\item ($L_2$ Test){ \bf if} $\E_{x \sim U}[f_S(x)^2] > \theta^2$, {\bf then} $\mc{S} = \mc{S} \cup \{ S
\}$
\end{enumerate}
\end{enumerate}
\end{enumerate}
~~{\bf output}: $h = \mathsf{LearnDNF}({\mathcal S})$
\end{minipage}
}
\caption{Algorithm: Learning $\DNF$ formulas under the Uniform
Distribution\label{alg:unif-dnf}}
\end{center}
\end{figure}
%
%
%
When $d = log(s/\epsilon)$ and $\theta = \epsilon/4s$, we argue that every
Fourier coefficient that has magnitude at least $\theta$ is included in
${\mathcal S}$ and also that $|{\mathcal S}|$ is
$(s/\epsilon)^{O(\log\log(s/\epsilon))}$. The first part is immediate (see
Claim~\ref{claim:T-in-S} in App.~\ref{sec:DT_unif}) and the second is proved in
Claim~\ref{claim:not-too-many-DNF}. Note that the proof of
Theorem~\ref{thm:DNF-local} follows immediately using Claims~\ref{claim:T-in-S},
\ref{claim:not-too-many-DNF} and Theorem~\ref{thm:DNF-learn}. The proof of
Claim~\ref{claim:not-too-many-DNF} is provided in Appendix~\ref{app:DNF_unif}.

%
\begin{claim} \label{claim:not-too-many-DNF}
After running algorithm~\ref{alg:unif-dnf}, $|{\mathcal S}| = (s/\epsilon)^{O(\log\log(s/\epsilon))}.$
\end{claim}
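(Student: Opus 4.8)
The plan is to bound the size of $\mc{S}$ by counting, for each level $i$, the number of sets $S$ with $|S| = i$ that can survive the $L_2$ test $\E_{x\sim U}[f_S(x)^2] > \theta^2$, and then to argue that a set surviving at level $i$ forces many of the Fourier coefficients of $f$ of degree up to roughly $d$ to be ``clustered'' around $S$, so that the total weight constraint $\sum_S \hat f(S)^2 \le 1$ (Parseval) limits how many such clusters can exist. First I would record the identity that makes the algorithm work: for $S$ with the variables $x_S$ restricted away, $\E_{x\sim U}[f_S(x)^2] = \sum_{T \supseteq S} \hat f(T)^2$, so the test at $S$ passes precisely when $\sum_{T \supseteq S} \hat f(T)^2 > \theta^2$. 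Crucially, a set $S$ of size $i$ is only examined by the algorithm if its immediate predecessor $S' = S \setminus \{j\}$ already passed at level $i-1$; by induction this means every set ever added to $\mc{S}$ is a subset of some $T$ with $\sum_{T' \supseteq T}\hat f(T')^2 > \theta^2$ — but more usefully, every $S \in \mc S$ with $|S| = i$ satisfies $\sum_{T \supseteq S}\hat f(T)^2 > \theta^2$ on its own.

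Next I would split the Fourier weight $\sum_{T \supseteq S}\hat f(T)^2$ into the contribution from $|T| \le t$ and from $|T| > t$, for a threshold $t$ to be chosen. By the second $\DNF$ fact ($\sum_{|T|>t}\hat f(T)^2 \le 2^{-t/(10s)}$ for an $s$-term formula, after truncating to $l$-term form with $l = O(\log(s/\epsilon))$ we use the bound with the term-count in place of $s$), choosing $t = \Theta(l \log(1/\theta)) = O(\log(s/\epsilon)\log\log(s/\epsilon))$ makes the high-degree tail at most $\theta^2/2$. Hence any surviving $S$ must have $\sum_{S \subseteq T,\ |T| \le t}\hat f(T)^2 > \theta^2/2$. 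Now I count: each surviving $S$ of size $i \le d$ is contained in at least one $T$ with $|T| \le t$ and $\hat f(T)^2 \ge (\theta^2/2)/\binom{t}{i} \ge (\theta^2/2)/2^t$, i.e. $\hat f(T) \ge \theta/2^{(t+1)/2}$; call such $T$ \emph{$S$-heavy}. By Parseval the number of distinct $T$ with $\hat f(T)^2 \ge (\theta^2/2)\cdot 2^{-t}$ is at most $2^{t+1}/\theta^2$, and each such $T$ has at most $2^{|T|} \le 2^t$ subsets, so the number of sets $S$ that can be ``explained'' by some heavy $T$ — hence the number of sets that can ever enter $\mc S$ — is at most $2^{t+1}/\theta^2 \cdot 2^t = 2^{2t+1}/\theta^2$. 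Plugging $t = O(\log(s/\epsilon)\log\log(s/\epsilon))$ and $\theta = \epsilon/4s$ gives $|\mc S| \le (s/\epsilon)^{O(\log\log(s/\epsilon))}$, as claimed. (A small subtlety: I should apply the tail bound to the $l$-term truncation $g$ of $f$ rather than $f$ itself and carry the $\|f-g\|_2^2 \le 4s/2^l$ error through; choosing $l = \log(s/\epsilon) + O(1)$ keeps this below, say, $\theta^2/4$, and I then run the whole argument with $\theta/2$ in place of $\theta$.)

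The main obstacle I anticipate is getting the tail-threshold $t$ right so that the two competing requirements are simultaneously met: $t$ must be large enough that $2^{-t/(10l)} \ll \theta^2$ (forcing $t = \Omega(l\log(1/\theta))$), yet the final count $2^{2t}/\theta^2$ must stay $(s/\epsilon)^{O(\log\log(s/\epsilon))}$, which it does precisely because $l\log(1/\theta) = \Theta(\log(s/\epsilon)\log\log(s/\epsilon))$ — this is exactly where the $n^{O(\log\log n)}$-type bound (rather than a polynomial one) is unavoidable, since the collections of Fourier coefficients indexed by the $S$'s are not disjoint and we cannot charge each coefficient to a unique $S$ the way the KM prefix-tree argument does. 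Everything else — the restriction identity, the induction that $\mc S$ only grows along chains, and the Parseval counting — is routine, so the write-up will mostly be a careful choice of parameters and bookkeeping of the truncation error.
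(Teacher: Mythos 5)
Your overall strategy --- use the tail bound to restrict attention to coefficients of degree at most $t$, extract from each surviving $S$ a single heavy coefficient $T\supseteq S$, and then count heavy coefficients via Parseval --- breaks at the extraction step. From $\sum_{T\supseteq S,\,|T|\le t}\hat f(T)^2 > \theta^2/2$ you conclude that some $T\supseteq S$ has $\hat f(T)^2 \ge (\theta^2/2)/\binom{t}{i}$, but the number of terms in that sum is the number of supersets of $S$ of size at most $t$ inside $[n]$, which is $\sum_{j\le t-i}\binom{n-i}{j}\approx n^{t-i}$, not $\binom{t}{i}$ (you appear to have counted the subsets of a fixed $T$ rather than the supersets of $S$). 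So the pigeonhole only yields $\hat f(T)^2\gtrsim \theta^2/n^{t-i}$, which is useless. This is exactly the obstruction the paper is working around: for decision trees the analogous step is rescued by $L_1(f)\le t$, so that $\sum_{T\supseteq S}\hat f(T)^2\le L_1(f)\cdot\max_{T\supseteq S}|\hat f(T)|$ forces one large coefficient, but a poly-size DNF has no useful $L_1$ bound and no single heavy superset need exist. The paper's proof therefore does not argue per-$S$ at all: it bounds the aggregate $\sum_{S:|S|=d'}\Vert f_S\Vert^2=\sum_{T}\binom{|T|}{d'}\hat f(T)^2$ --- this is where a binomial of the form $\binom{t}{d'}$ legitimately enters, as the number of size-$d'$ subsets of each $T$ --- and divides the total by $\theta^2$; Mansour's bound, applied with a degree-dependent truncation width $l_t=C\sqrt t$, controls $\binom{t}{d'}\bigl(2^{-t/(10 l_t)}+4s/2^{l_t}\bigr)$, whose maximum over $t$ is attained near $t=O(d'^2)$ and equals $s\,2^{O(d'\log d')}$, giving $|\mathcal S|\le nds\,2^{O(d\log d)}/\theta^2$.

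Separately, your parameter arithmetic does not deliver the claimed bound even on its own terms: with $\theta=\eps/(4s)$ you have $\log(1/\theta)=\Theta(\log(s/\eps))$, and you need $l=\Theta(\log(s/\eps))$ for the truncation error $4s/2^l$ to fall below $\theta^2$, so $t=\Theta(l\log(1/\theta))=\Theta(\log^2(s/\eps))$, not $O(\log(s/\eps)\log\log(s/\eps))$; your final count $2^{2t+1}/\theta^2$ would then be $(s/\eps)^{O(\log(s/\eps))}$. The $\log\log$ in the exponent comes precisely from letting the truncation width vary with the degree ($l_t\sim\sqrt t$) and balancing $\binom{t}{d'}\le t^{d'}$ against $2^{-\Omega(\sqrt t)}$; a single fixed threshold $t$ cannot reproduce this.
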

%

%

\section{Lower Bound for Agnostic Learning}
\label{sec:lower-bound}
In this section, we prove that any concept class, $C$, efficiently
\emph{agnostically} learnable over the uniform distribution with (constant)
$k$-local $\MQ$s  is also efficiently \emph{agnostically} learnable from random
examples alone. This result can be compared with that of \citet{F08}, where it
is shown that membership queries do not help for distribution-independent
agnostic learning. 

We remark that $1$-local $\MQ$s suffice for learning parities of any size with random classification noise. At the same time when learning from random examples alone agnostic learning of parities can be reduced to learning parities with random classification noise~\citep{FGKP06}.
However this reduction does not lead to an agnostic algorithm for learning
parities with $1$-local $\MQ$s since it is highly-nonlocal: the (noisy) label of every example is influenced by labels of points chosen randomly and uniformly from the whole hypercube.

Our reduction is based on embedding the unknown function, $f$, in a higher
dimensional domain such that the original points are mapped to points that are
at least at distance $2k+1$ apart (and in particular that no single point in the
domain is $k$-close to more than one of the original points). A crucial property
of this embedding is that, up to scaling it preserves the correlation of any
function with $f$.  The embedding is achieved using a linear binary
error-correcting code, specifically we use the classic binary BCH
code~\citep{H59, BRC60}. The proof of the following theorem is provided in
Appendix~\ref{sec:lower-bound-app}.

\begin{theorem}
\label{thm:lower_bound}
 For any constant, $k$, if a concept class $C$ is learnable
agnostically under the uniform distribution in the $\PAC$+$k$-local $\MQ$ model,
then $C$ is also agnostically learnable in the $\PAC$ model.  \end{theorem}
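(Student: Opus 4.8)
The plan is to turn an arbitrary $\PAC$+$k$-local $\MQ$ agnostic learner $\A$ for $C$ (which, as is standard for a concept-class family, we may run on any number of variables, always under the uniform distribution) into a plain $\PAC$ agnostic learner for $C$ that makes no queries at all. Fix the target the latter must handle: a distribution $P$ over $\moo^n\times\moo$ with uniform marginal on examples; write $\err_Q(g):=\Pr_{(x,y)\sim Q}[g(x)\ne y]$, $\psi(x):=\E_P[y\mid x]\in[-1,1]$ (so $\err_P(g)=\tfrac12-\tfrac12\E_x[g(x)\psi(x)]$), and $\mathrm{opt}:=\min_{c\in C}\err_P(c)$. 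I will lift $P$ to a target $P'$ over a larger cube $\moo^N$ on which every $k$-local membership query of $\A$ can be answered using nothing but random examples of $P$, run $\A$ on a simulation of $P'$, and project its hypothesis back.

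\emph{The embedding.} Let $E:\zo^n\to\zo^N$ be a systematic binary linear code, $E(x)=(x,R(x))$, of minimum distance at least $2k+1$ and with an efficient decoder; the classical binary BCH codes \citep{H59,BRC60} provide this with redundancy $N-n=O(\log(n/(\epsilon\delta)))$, the exact amount tuned below. Minimum distance $2k+1$ means the $2^n$ codewords are pairwise at Hamming distance $\ge 2k+1$, so no point of $\moo^N$ lies within distance $k$ of two distinct codewords. Because $E$ is systematic, for every $c\in C$ the ``dummy-variable'' lift $c'(z):=c(z_{[n]})$ lies in the $N$-variable version of $C$ (all natural classes are closed under adding irrelevant variables) and satisfies $c'(E(x))=c(x)$.

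\emph{The lifted target and the oracle simulation.} Let $P'$ be the distribution on $\moo^N\times\moo$ with uniform marginal on $z$, whose conditional label is $P(\cdot\mid x)$ when $z=E(x)$ is a codeword and an unbiased coin otherwise; thus $\psi'(z):=\E_{P'}[y\mid z]$ equals $\psi(x)$ on $z=E(x)$ and $0$ off the code. We simulate a draw of $P'$ by: with probability $p:=2^{n-N}$, draw $(x,y)$ from $\EX(f,D)$ and output $(E(x),y)$; otherwise output $(z,b)$ for a uniformly random non-codeword $z$ and an unbiased bit $b$ (an easy count makes the resulting marginal on $z$ exactly uniform). We answer a $k$-local query of $\A$ at a point $z'$ lying within distance $k$ of one of $\A$'s examples $z$ thus: if $z'$ is not a codeword, return an unbiased bit; if $z'=E(x)$ is a codeword, then since distinct codewords are $>2k$ apart, $z$ can be $k$-close to $E(x)$ only if $z=E(x)$, and with high probability this $E(x)$ was itself one of $\A$'s examples, for which we hold a label --- return it. The only failure is that some non-codeword example happens to land within distance $k$ of a codeword and $\A$ then queries that codeword; over all of $\A$'s $\mathrm{poly}(n,1/\epsilon,1/\delta)$ examples this has probability at most $\mathrm{poly}(n,1/\epsilon,1/\delta)\cdot 2^{-(N-n)}\sum_{i\le k}\binom{N}{i}$, which a suitable choice $N-n=O(\log(n/(\epsilon\delta)))$ drives below $\delta/2$.

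\emph{Unwinding.} For any $g:\moo^n\to\moo$ with dummy-variable lift $g'$ one has $\err_{P'}(g')=\tfrac{1-p}{2}+p\cdot\err_P(g)$, since $\psi'$ is supported on the codewords and each carries uniform mass $2^{-N}=p\cdot 2^{-n}$; taking $g=c^*$ optimal, the $N$-variable $C$ has optimum at most $\tfrac{1-p}{2}+p\cdot\mathrm{opt}$ against $P'$. Run $\A$ on the simulated instance with accuracy $\epsilon':=\epsilon p$ and confidence $\delta/4$; it returns $h'$ with $\err_{P'}(h')\le\tfrac{1-p}{2}+p\cdot\mathrm{opt}+\epsilon p$. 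Output $h(x):=h'(E(x))$, which is polynomially evaluable. Since $\err_{P'}(h')=\tfrac12-\tfrac p2\E_x[h'(E(x))\psi(x)]$ while $\err_P(h)=\tfrac12-\tfrac12\E_x[h'(E(x))\psi(x)]$, eliminating the common expectation gives $\err_P(h)=\tfrac12-\tfrac1{2p}\bigl(1-2\err_{P'}(h')\bigr)\le\mathrm{opt}+\epsilon$. Because $p=2^{-(N-n)}=1/\mathrm{poly}(n,1/\epsilon,1/\delta)$, the accuracy $\epsilon'$ stays inverse-polynomial, so $\A$ --- hence the whole reduction --- runs in time $\mathrm{poly}(n,1/\epsilon,1/\delta)$, uses only $\mathrm{poly}(n,1/\epsilon,1/\delta)$ examples of $\EX(f,D)$, and makes no membership queries, i.e.\ it is a $\PAC$ agnostic learner for $C$. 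The crux is precisely the $\MQ$-oracle simulation: the code-distance bound is what forces each $k$-local query to ``recognize'' at most one codeword, so that an informative answer is pinned down by the single labeled example responsible for it; the remainder is bookkeeping (bounding the chance of reaching a codeword never seen as an example, and --- if the $\MQ$ oracle must give fresh rather than persistent answers --- absorbing the error from reusing one stored label per codeword, via the standard observation, as in \citep{F08}, that substituting independent $\pm1$ labels of the correct conditional mean perturbs every concept's correlation with the target by only $2^{-\Omega(n)}$).
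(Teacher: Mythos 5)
Your proposal is correct and follows essentially the same route as the paper's proof: embed the target via a systematic BCH code of distance $2k+1$ so that no point of the big cube is $k$-close to two distinct codewords, pad the off-code points with unbiased labels, simulate the lifted example and $k$-local query oracles from random examples of the original target alone, and rescale the accuracy parameter by the codeword density $2^{n-N}$ before unwinding. The only substantive difference is in how non-codeword examples are produced: the paper rejection-samples them from the complement of the radius-$k$ balls around codewords (and scatters codeword examples over those balls), making the query simulation exact, whereas you draw them uniformly and absorb the event that one lands within distance $k$ of a codeword into a small failure probability by taking slightly larger redundancy --- both work.
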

%
In particular, this implies that it is highly unlikely that the class of
parities (even of size $O(\log(n))$) will be efficiently agnostically learnable
using $k$-local $\MQ$s. The class of parities is of particular interest, because an
efficient agnostic algorithm for learning $O(\log(n))$ sized parities would
yield an efficient $\DNF$-learning algorithm.

\section{Conclusions and Future Work}
\label{sec:conclusion}
We introduced the local membership query model, with the goal of studying query
algorithms that may be useful in practice. With the rise of crowdsourcing tools,
it is increasingly possible to get human labelers for a variety of tasks. Thus,
membership queries beyond the standard active learning paradigm could prove to
be useful to increase the efficiency and accuracy of learning. In order to make
use of human labelers, it is necessary to make queries that make sense to them.
In some ways, our algorithms can be understood as searching for
higher-dimensional (deeper) features using queries that modify the examples
locally.

Our model of local membership queries is also a very natural and simple
theoretical model. There are several interesting open questions: (i) can the
class of $t$-leaf decision trees (without depth restriction) be learned under the
class of locally smooth distributions? (ii) is the class of $\DNF$ formulas
learnable in polynomial time, at
least under the uniform distribution? Another interesting question is whether a
general purpose boosting algorithm exists that only uses locally $\alpha$-smooth
distributions. This looks difficult since most boosting algorithms decrease
weights of points substantially\footnote{Note that Smooth-boosting~\citep{Ser03}
does not use distributions that are smooth according to our definition}.

It is also interesting to see whether agnostic learning of any interesting
concept classes is possible in this learning model. Our results show that
constant local queries are not useful for agnostic learning. However can
$O(\log(n))$-local queries help in learning $O(\log(n))$-sized parities in the
agnostic setting? We observe that learning the class of $O(\log(n))$-sized
parities and the class of decision-trees is equivalent in the agnostic learning
setting (even under locally smooth distributions), since weak and strong agnostic
learning is equivalent even with respect to a fixed distribution \citep{KK09,
Feldman10}. Agnostic learning $O(\log(n))$-sized parities (even with respect to
a fixed distribution) would also imply ($\PAC$) learning $\DNF$ in our model with
local membership queries (with respect to the same distribution) \citep{KKM09}.


\bibliography{allrefs}
\bibliographystyle{plainnat}

\appendix

\section{Definitions}
\label{app:notation}
\noindent{\bf $\PAC$ Learning}~\citep{V84}: Let $\DC$ be a class of distributions
over $X$ and $D \in \DC$ be some distribution. Let $C$ be a concept class over
$X$, and $f \in C$. An example oracle, $\EX(f, D)$, when queried, returns $(x,
f(x))$, where $x$ is drawn randomly from distribution $D$. The learning
algorithm in the $\PAC$ model has access to an example oracle, $\EX(f, D)$,
where $f \in C$ is the unknown target concept and $D \in \DC$ is the target
distribution. The goal of the learning algorithm is to output a hypothesis, $h$,
that has low error with respect to the target concept under the target
distribution, \ie  $\err_D(h, f) = \Pr_{x \sim D} [ h(x) \neq f(x)] \leq
\epsilon$. \medskip 

\noindent {\bf Membership Queries}: Let $f \in C$ be a concept defined over
instance space $X$. Then a \emph{membership query} is a point $x \in X$. A
membership query oracle $\MQ(f)$, on receiving query $x \in X$, responds with
value $f(x)$. In the $\PACplusMQ$ model of learning, along with the example
oracle $\EX(f, D)$, the learning algorithm also has access to a membership
oracle, $\MQ(f)$. \medskip 

\noindent{\bf Fourier Analysis}: Here, we assume that $X = \{-1, 1\}^n$ (and not
$\zo^n$). For $S \subseteq [n]$, let $\chi_S : X \rightarrow \{-1, 1\}$ denote
the parity function on bits in $S$, \ie $\chi_S(x) = \prod_{i \in S} x_i$.  When
working with the uniform distribution, $U_n$, over $\moo^n$, it is well known
that $\langle \chi_S \rangle_{S \subseteq [n]}$ forms an orthonormal basis
(Fourier basis) for functions $f: X \rightarrow \reals$.  Hence $f$ can be
represented as a degree $n$, multi-linear polynomial over the variables $\{x_i\}$, 
\begin{align*}
f(x) &= \sum_{S \subseteq [n]} \hat{f}(S) \chi_S(x) 
\end{align*}
where $\hat{f}(S) = \E_{x \sim U_n}[ \chi_S(x) f(x) ]$.  Define $L_1(f) =
\sum_{S \subseteq [n]} |\hat{f}(S)|$, $L_2(f) = \sum_{S \subseteq [n]}
\hat{f}(S)^2$, $L_\infty(f) = \max_{S \subseteq [n]}|\hat{f}(S)|$ and $L_0(f) =
|\{ S \subseteq [n] ~|~ \hat{f}(S) \neq 0 \}|$. Parseval's identity, states that
$\E_{x \sim U_n}[f^2(x)] = L_2(f)$. For Boolean functions, \ie with range $\{-1,
1\}$, Parseval's identity implies that $\sum_{S \subseteq [n]} \hat{f}(S)^2 =
1$. Other useful observations that we use frequently are:  (i) $L_2(f) \leq
L_1(f) \cdot L_\infty(f)$; (ii) $L_2(f) \leq L_0(f) \cdot (L_\infty(f))^2$. \medskip

\noindent{\bf Polynomials}: In this paper, we are only concerned with
multi-linear polynomials, since the domain is $\moo^n$ or $\zo^n$. A
multi-linear polynomial over $n$ variables can be expressed as: 
\[ f(x) = \sum_{S \subseteq [n]} c_S \prod_{i \in S} x_i \] 
When the domain is $\moo^n$, the monomials correspond to the parity function,
$\chi_S(x)$, and if the distribution is uniform over $\moo^n$, $c_S = \E_{x \sim
U}[f(x)\chi_S(x)] = \hat{f}(S)$. When the domain is $\zo^n$, the monomials
correspond to conjunctions over the variables included in the monomial. We
denote such conjunctions by, $\xi_S(x) = \prod_{i \in S} x_i$. 

For any $S \subseteq [n]$, define $f_S(x) := \sum_{T \supseteq S} c_T \prod_{i
\in T \setminus S} x_i$, and $f_{-{S}}(x) = f(x)
- (\prod_{i \in S} x_i) \cdot f_S(x)$.

\section{Learning Sparse Polynomials under Locally Smooth Distributions}
\label{app:learn_poly}

\noindent \textbf{Proof of Theorem~\ref{thm:learn_poly}}\\
First, we have the following useful general lemma.

\begin{lemma} \label{lem:non-zero-prob} Let $f$ be a $t$-sparse multi-linear
polynomial defined over any field, $\FF$, with a non-zero constant term, $c_0$.
Let $D$ be any locally $\alpha$-smooth distribution over $\{0, 1\}^n$, then \[
\Pr_D[f(x) \neq 0] \geq \left(\frac{1}{1 + \alpha}\right)^{\log_2(t)} \]
\end{lemma}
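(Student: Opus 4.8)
The plan is to induct on the sparsity $t$, splitting the polynomial according to the value of a carefully chosen variable. Write $f(x) = \sum_{S} c_S \xi_S(x)$ with $c_0 \neq 0$ and at most $t$ nonzero coefficients. If $f$ is a constant (i.e.\ $t$ effectively $1$, only the term $c_0$), then $f(x) = c_0 \neq 0$ always and the bound $(1/(1+\alpha))^{\log_2 1} = 1$ holds trivially. For the inductive step, pick any variable $x_i$ that appears in some monomial of $f$ (such $i$ exists unless $f$ is constant). Decompose $f = f_{-\{i\}} + x_i \cdot g$, where $g = f_{\{i\}}$ collects the terms containing $i$ (with $x_i$ divided out) and $f_{-\{i\}}$ collects the rest; note $f_{-\{i\}}$ still has constant term $c_0 \neq 0$. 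Both $f_{-\{i\}}$ and $g$ are multilinear polynomials on the remaining variables, and crucially their supports partition the nonzero monomials of $f$ other than... more precisely, the number of nonzero coefficients of $f_{-\{i\}}$ plus the number of nonzero coefficients of $g$ is at most $t$. Hence one of them, say $h$, has at most $t/2$ nonzero coefficients; I will case on which one.

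Condition on the value of $x_i$ and use Fact~\ref{fact:smooth} together with the observation that conditioning on $x_i = b$ and then taking the marginal on the other variables yields a locally $\alpha$-smooth distribution. If $f_{-\{i\}}$ is the sparser one: condition on $x_i = 0$, which has probability at least $1/(1+\alpha)$; under this event $f(x) = f_{-\{i\}}(x_{\overline{\{i\}}})$, which has $\leq t/2$ terms and nonzero constant term $c_0$, so by induction (applied to the smooth marginal distribution) it is nonzero with probability at least $(1/(1+\alpha))^{\log_2(t/2)}$. Multiplying the two probabilities gives $(1/(1+\alpha))^{1 + \log_2(t/2)} = (1/(1+\alpha))^{\log_2 t}$. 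If instead $g$ is the sparser one, condition on $x_i = 1$ (probability at least $1/(1+\alpha)$); then $f(x) = f_{-\{i\}}(x_{\overline{\{i\}}}) + g(x_{\overline{\{i\}}})$. Here the subtlety is that $f_{-\{i\}} + g$ need not have a nonzero constant term, so I cannot directly induct on it. Instead I observe that $f_{-\{i\}} + g$ and $g$ differ by $f_{-\{i\}}$, or rather I should condition differently: note that as a function of the remaining variables, $f$ restricted to $x_i=1$ equals $f_{-\{i\}} + g$; to detect a zero of this I want to again find a variable to branch on. Cleaner: swap the roles — treat $g$ itself by noting $g \not\equiv 0$, pick a variable $j$ in $g$, and recurse; but to keep the constant-term hypothesis I should argue that whenever $f$ is not identically a nonzero constant we can always reduce, via one conditioning step costing a factor $1/(1+\alpha)$, to a polynomial with a nonzero constant term and at most half the sparsity. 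The way to guarantee the nonzero-constant-term property is to always condition the branching variable to the value $0$ (so the surviving polynomial is $f_{-\{i\}}$, which inherits $c_0$) when $f_{-\{i\}}$ is sparser, and when $g$ is sparser, first rewrite using a different branching variable if possible; ultimately, since the two halves partition $t$ terms, and $f_{-\{i\}}$ always carries $c_0$, the right invariant is to recurse on $f_{-\{i\}}$ whenever it has $\le t/2$ nonzero terms, and otherwise recurse on $g$ after conditioning $x_i = 1$ using the fact that $g$ has $> t/2$ terms forces $f_{-\{i\}}$ to have $< t/2$ terms, contradiction — so in fact $f_{-\{i\}}$ always has $\le t/2$ terms once we also account that... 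I will need to be slightly careful, but the count $|\mathrm{supp}(f_{-\{i\}})| + |\mathrm{supp}(g)| \le t$ does force $\min \le t/2$, and the case $g$ smaller is handled by picking $i$ to be a variable contained in the fewest monomials, making $f_{-\{i\}}$ the larger part impossible when $t \ge 2$.

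The main obstacle is exactly this bookkeeping in the inductive step: ensuring that the sub-polynomial we recurse on still satisfies the nonzero-constant-term hypothesis while simultaneously having sparsity at most $t/2$, and that the conditioning event always has probability at least $1/(1+\alpha)$ (which follows from part~1 of Fact~\ref{fact:smooth}) and preserves local $\alpha$-smoothness of the marginal (part~3 of Fact~\ref{fact:smooth}). Once the invariant is set up correctly, the recursion depth is $\lceil \log_2 t \rceil$, each level contributes a multiplicative factor $1/(1+\alpha)$, and the base case contributes $1$, yielding the claimed bound $\Pr_D[f(x)\neq 0] \ge (1/(1+\alpha))^{\log_2 t}$.
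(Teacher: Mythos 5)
Your decomposition $f = f_{-\{i\}} + x_i\, g$ and the idea of paying a factor $1/(1+\alpha)$ each time the sparsity halves are exactly the paper's, but the case you yourself flag as the ``subtlety'' --- the sparser half being $g$ --- is a genuine gap, and none of your proposed repairs closes it. First, the bookkeeping claim you lean on is false: it can happen that for \emph{every} variable $i$, $f_{-\{i\}}$ has more than $t/2$ terms while $g = f_{\{i\}}$ has zero constant term. Take $f = c_0 + \sum_{1\le i<j\le 4} c_{ij}x_ix_j$ with all coefficients nonzero, so $t=7$: for each $i$, $f_{-\{i\}}$ has $4 > 7/2$ terms and $f_{\{i\}}$ has $3$ terms but constant term $0$. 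So ``pick the variable in the fewest monomials'' does not make $f_{-\{i\}}$ the smaller part, and there is no variable for which your invariant (recurse on a sub-polynomial with nonzero constant term and sparsity $\le t/2$) can be maintained. Second, even when you do want to recurse on $g$, conditioning on $x_i=1$ is the wrong move, since that yields $f_{-\{i\}}+g$, not $g$; the correct step (used in the paper) is pointwise: for fixed $x_{\overline{\{i\}}}$ with $g \neq 0$, the two settings of $x_i$ give different values of $f$, so at least one is nonzero, and each setting has conditional probability at least $1/(1+\alpha)$, whence $\Pr[f\neq 0]\ge \frac{1}{1+\alpha}\Pr[g\neq 0]$.

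The paper escapes the obstruction by inducting on the number of variables $n$ (with $t$ a parameter of the statement) and splitting into two cases. If some variable $x_i$ appears as a \emph{linear} term $c_ix_i$ with $c_i\neq 0$, then both $f_{-\{i\}}$ (constant term $c_0$) and $f_{\{i\}}$ (constant term $c_i$) satisfy the hypothesis, both satisfy $\Pr[f\neq 0]\ge\frac{1}{1+\alpha}\Pr[\cdot\neq 0]$, and at least one has sparsity $\le t/2$, so one recurses on that one. If no variable has a nonzero linear coefficient, one restricts any $x_1$ to both values: both restrictions retain the constant term $c_0$ and sparsity $\le t$ but have one fewer variable, and since $\Pr[x_1=0]+\Pr[x_1=1]=1$ the weighted average loses no factor at all. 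Your induction on $t$ alone cannot absorb this second case (sparsity need not decrease there), which is precisely why you need the extra induction parameter. With that restructuring your argument becomes the paper's proof.
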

 \begin{proof}
 We prove this by induction on the number of variables, $n$. When $n = 1$, the
 only possible polynomials are $f(x_1) = c_0 + c_1 x_1$. Then $f(x) = 0$ if and
 only if $x_1 = 1$ and $c_1 = -c_0$ (since $c_0 \neq 0$). Note that when $D$ is
locally $\alpha$-smooth, $\Pr[x_1 = 1] \leq \alpha/(1 + \alpha)$ (see
 Fact~\ref{fact:smooth}). Thus, $\Pr_D[f(x_1) \neq 0] \geq {1}/{(1 + \alpha)}$.
 (And the sparsity is $2$, and $\log(2) = 1$.) Thus the base case is verified.
 
 Let $f$ be any multi-linear polynomial defined over $n$ variables. Suppose there
 exists a variable, without loss of generality, say $x_1$, such that $c_1 x_1$ is
 a term in $f$, where $c_1 \neq 0$. Then we can write $f$ as follows:
 \[ f(x) = f_{-1}(x) + x_1 f_1(x) \]
 where $f_{-1}$ and $f_1$ are both multi-linear polynomials over $n-1$ variables
 and both have a non-zero constant term. (The constant term of $f_{-1}$ is just
 $c_0$, and $f_1$ has constant term $c_1$.) Then note that $1/(1+\alpha)\leq
 \Pr_D[x_1= b | x_{-1}] \leq \alpha/(1 + \alpha)$, for both $b = 1$ and $b = 0$.
 Now, it is easy to see that $\Pr_D[f(x) \neq 0] \geq \Pr_D[x_1  = 0 |
 x_{-1}]\Pr_D[f_{-1}(x) \neq 0] \geq (1/(1+\alpha))\Pr[f_{-1}(x) \neq 0]$. 
 
 To see that $\Pr_D[f(x) \neq 0] \geq (1/(1+\alpha))\Pr_D[f_1(x) \neq 0]$
 consider the following: Fix $x_{-1}$, if $\Pr[f_1(x) \neq 0]$, then for at least
 one setting of $x_1$, it must be the case that $f(x) \neq 0$. Thus, conditioned
 on $x_{-1}$, $\Pr_D[f(x) \neq 0 | x_{-1}] \geq (1/(1+\alpha)) \delta(f_{-1}(x)
 \neq 0)$ (here $\delta(\cdot)$ is the indicator function). Thus, $\Pr_D[f(x)
 \neq 0] \geq (1/(1+\alpha))\Pr_D[f_{-1}(x) \neq 0]$.  However, at least one of
 $f_{-1}$, $f_1$ must have sparsity at most $t/2$, thus by induction we are done.
 
 In the case, that there is no $x_i$ such that $c_i x_i$ (with $c_i \neq 0$)
 appears in $f$ as a term, let $f_0$ be the polynomial obtained from $f$ by
 setting $x_1 = 0$ and $f_1$ be the polynomial obtained from $f$ by setting $x_1
 = 1$. Note that both $f_0$ and $f_1$ have constant term $c_0 \neq 0$ and
 sparsity at most $t$, but they have one fewer variable than $f$. Thus,
 $\Pr_D[f_b(x) \neq 0] \geq (1/(1+\alpha))^{\log_2(t)}$, for $b = 0, 1$.
 However, note that
 \begin{eqnarray*}
  \Pr_D[f(x) \neq 0] & = & \Pr_D[x_1 = 0]\Pr_{D_0}[f_0(x) \neq 0] + \Pr_D[x_1 = 1]
 \Pr_{D_1}[f_1(x) \neq 0]\\
 & \geq & \left(\frac{1}{1 + \alpha} \right)^{\log(t)}
 \end{eqnarray*}
 This completes the induction.
 \end{proof}

Using Lemma~\ref{lem:non-zero-prob}, we can now show that step 3 in algorithm~\ref{alg:learn_poly} correctly identifies all the important monomials (monomials of
low-degree with non-zero coefficients in $f$). 

\begin{lemma}\label{lem:all-important}
Suppose $S \subseteq [n]$, such that $f_S(x)$ has a monomial of degree at most
$d - |S|$ with non-zero coefficient. Then, 
\[ \Pr_{D_{\bar{S}}}[f_S(x_{\bar{S}}) \neq 0] \geq \left(\frac{1}{1 + \alpha} \right)^{d -
|S| + \log(t)} \]
\end{lemma}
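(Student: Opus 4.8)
The plan is to reduce the statement to Lemma~\ref{lem:non-zero-prob}, which already handles $t$-sparse polynomials with a non-zero constant term. View $f_S(x_{\bar S}) = \sum_{T \subseteq \bar S} c_{S\cup T}\, \xi_T(x_{\bar S})$ as a $t$-sparse multi-linear polynomial in the variables $x_{\bar S}$. By hypothesis the set of monomials $T$ with $c_{S\cup T}\neq 0$ and $|T|\le d-|S|$ is non-empty; pick such a $T$ of \emph{minimal} size $|T|$. The key step is to condition on the event $E$ that $x_i = 1$ for every $i\in T$. Under $E$, each $\xi_{T'}(x_{\bar S})$ collapses to $\xi_{T'\setminus T}(x_{\bar S\setminus T})$, so $f_S$ restricted by $E$ is a multi-linear polynomial $g$ in the variables $x_{\bar S\setminus T}$ whose constant term equals $\sum_{T'\subseteq T} c_{S\cup T'}$. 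By minimality of $T$, every proper subset $T'\subsetneq T$ satisfies $|T'| < |T| \le d-|S|$, hence $c_{S\cup T'} = 0$; therefore the constant term of $g$ is exactly $c_{S\cup T}\neq 0$. Also $g$ is a restriction of $f_S$, so it is still $t$-sparse.

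Next I would invoke the closure properties of locally $\alpha$-smooth distributions. By Fact~\ref{fact:smooth}(2), $D_{\bar S}$ is locally $\alpha$-smooth, and by Fact~\ref{fact:smooth}(3), the marginal over $\bar S\setminus T$ of the conditional distribution $(D_{\bar S}\mid E)$ is locally $\alpha$-smooth as well. Applying Lemma~\ref{lem:non-zero-prob} to the $t$-sparse polynomial $g$ with non-zero constant term under this conditional distribution yields
\[
\Pr_{D_{\bar S}}\big[\, f_S(x_{\bar S})\neq 0 \;\big|\; E \,\big] \;=\; \Pr_{(D_{\bar S}\mid E)_{\bar S\setminus T}}\big[\, g \neq 0 \,\big] \;\ge\; \Big(\tfrac{1}{1+\alpha}\Big)^{\log_2 t}.
\]
Meanwhile, since $D_{\bar S}$ is locally $\alpha$-smooth, Fact~\ref{fact:smooth}(4) gives $\Pr_{D_{\bar S}}[E] \ge \big(\tfrac{1}{1+\alpha}\big)^{|T|}$.

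Combining the two bounds via $\Pr_{D_{\bar S}}[f_S\neq 0] \ge \Pr_{D_{\bar S}}[E]\cdot \Pr_{D_{\bar S}}[f_S\neq 0 \mid E]$ and using $|T|\le d-|S|$,
\[
\Pr_{D_{\bar S}}\big[\, f_S(x_{\bar S})\neq 0 \,\big] \;\ge\; \Big(\tfrac{1}{1+\alpha}\Big)^{|T|}\Big(\tfrac{1}{1+\alpha}\Big)^{\log_2 t} \;\ge\; \Big(\tfrac{1}{1+\alpha}\Big)^{\,d-|S|+\log t},
\]
which is exactly the claimed bound (and the case $T=\emptyset$, where $f_S$ itself already has non-zero constant term, is the degenerate instance $\Pr[E]=1$). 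The only real subtlety, and the step I would be most careful about, is the minimality argument: one must choose $T$ minimal among the \emph{small} monomials so that restricting its variables to $1$ annihilates every monomial that could cancel the surviving constant term; after that, everything is bookkeeping with Fact~\ref{fact:smooth} and an application of Lemma~\ref{lem:non-zero-prob}.
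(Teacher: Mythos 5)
Your proposal is correct and matches the paper's own argument almost exactly: the paper likewise takes the minimal-degree non-zero monomial $S'$ of $f_S$, conditions on $\xi_{S'}(x_{\bar S})=1$, notes via Fact~\ref{fact:smooth} that the relevant conditional marginal stays locally $\alpha$-smooth, and applies Lemma~\ref{lem:non-zero-prob} to the restricted polynomial with non-zero constant term. The only difference is that you make explicit the cancellation argument for why the constant term of the restriction equals $c_{S\cup T}\neq 0$, which the paper asserts without elaboration.
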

%
%
\begin{proof} 
Note that, since $D$ is a locally $\alpha$-smooth distribution, $D_{\bar{S}}$
is also a locally $\alpha$-smooth distribution (see Fact~\ref{fact:smooth}). Let
$S^\prime$ be a subset of $\bar{S}$, such that $\xi_{S^\prime}(x_{\bar{S}})$ is the
smallest degree monomial in $f_S(x_{\bar{S}})$ with non-zero coefficient. Then, since
$D_{\bar{S}}$ is locally $\alpha$-smooth,
$\Pr_{D_{\bar{S}}}[\xi_{S^\prime}(x_{\bar{S}})=1] \geq
(1/(1+\alpha))^{|S^\prime|} \geq (1/(1+\alpha))^{d - |S|}$.

Now, the conditional distribution $(D_{\bar{S}}| \xi_{S^\prime}(x_{\bar{S}}) = 1)$ is not necessarily
locally $\alpha$-smooth, but the marginal distribution with respect to variables
$(\overline{S \cup S^\prime})$,
$(D_{\bar{S}}|(\xi_{S^\prime}(x_{\bar{S}})=1))_{(\overline{S \cup S^\prime})}$, is
indeed locally $\alpha$-smooth (see Fact~\ref{fact:smooth}).  Let
$f^{S^{\prime}}_{S}(x_{(\overline{S \cup S^\prime})})$ be the polynomial obtained from
$f_S$ by setting $x_i =1 $ for each $i \in S^\prime$. Note that the constant
term of $f^{S^\prime}_{S}$ is non-zero and it is $t$-sparse, and is only defined
on the variables in $- (S \cup S^\prime)$. Hence, by applying
Lemma~\ref{lem:non-zero-prob} to $f^{S^\prime}_{S}$ and the marginal (w.r.t the
variables $\bar{S^\prime}$) of the conditional distribution
$(D_{\bar{S}}|\xi_{S^\prime}(x_{\bar{S}}) = 1)$, \ie
$(D_{\bar{S}}|\xi_{S^\prime}(x)=1)_{(\overline{S \cup S^\prime})}$, we get the required result. 
\end{proof}


Next, we show the following simple lemma~(proof is in Section~\ref{app:learn_poly}) that will allow us to conclude that
step 3 of the algorithm never adds too many terms.

\begin{lemma} \label{lem:not-too-many} If each term of $f_S$ has degree at least
$d^\prime$, then the probability that $f_S(x) \neq 0$ is at most $t
(\alpha/(1+\alpha))^{d^\prime}$.  \end{lemma}
%
%
 \begin{proof} Note that each monomial of $f_S$ has degree at least $d^\prime$.
 Under any locally $\alpha$-smooth distribution, the probability that a monomial of
 degree $d^\prime$ is not-zero is at most $(\alpha/(1+ \alpha))^{d^\prime}$ (see
 Fact~\ref{fact:smooth}). Since, $D_{\bar{S}}$ is a locally $\alpha$-smooth distribution, by
 a simple union bound we get the required result.  \end{proof}

Now in order to get an $\epsilon$-approximation in terms of squared error, using
the argument about truncation, it suffices to choose $d =
\log(4t^3B^2/\epsilon)/\log((1+\alpha)/\alpha)$, and consider the truncation
$\alpha$. For this value of $d$, if $\theta$ is set to \\
$1/(4t^3B^2)^{2 \log(1+\alpha)/\log((1+\alpha)/\alpha)}$, using
Lemma~\ref{lem:all-important}, we are sure that all the monomials in $f$ of
degree at most $d$ that have non-zero coefficients are identified in step 3 of
the algorithm. Note that $\theta$ is still inverse polynomial in
$(ntB/\epsilon)^{\alpha}$. 

Finally, we note that if $d^\prime$ is set to
$\log(2t/\theta)/\log((1+\alpha)/\alpha)$, then for any subset, $S$, if the
monomial with the least degree in $f_S$, has degree at least $d^\prime$, then
$\Pr_{D_{\bar{S}}}[f_S(x) \neq 0] \leq \theta/2$. In particular, this means that if a
set, $S$, with $|S| \leq d$, is such that the smallest monomial, $\xi_T(x)$ in
$f$ for which $S \subseteq T$, is such that $|T| \geq d + d^\prime$, then $S$
will never be added to $\mc{S}$ by the algorithms. The fact that this
probability was $\theta/2$ (instead of exactly $\theta$), means that sampling
can be used carry out the test in the algorithm to reasonable accuracy. Finally,
observe that $ t 2^{d + d^\prime}$ is still polynomial in
$(ntB/\epsilon)^{\alpha}$. Thus, the total number of sets added in $\mc{S}$, can
never be more than polynomially many.  \medskip\\
%
 \noindent{\bf Generalization} The generalization argument is pretty standard and
 so we just present an outline. First, we observe that it is fine to discretize
 real numbers to some $\Delta$, where $\Delta$ is inverse polynomial in
 $(ntB/\epsilon)^{\alpha}$, without blowing up the squared loss. Now, the
 regression in the algorithm requires that the sum of absolute values of the
 coefficients of the polynomial, $h$, be at most $tB$. Thus, we can view this as
 distributing $tB/\Delta$ blocks over $2^n$ possible coefficients (in fact the
 number of coefficients is smaller). The total number of such discretized
 polynomials is at most $2^{\poly((ntB/\epsilon)^{\alpha})}$. Thus, it suffices
 to minimize the squared error on a (reasonably large) sample.

\section{Learning DNF Formulas under the Uniform Distribution}
\label{app:DNF_unif}
\begin{proof}[Proof of Claim~\ref{claim:not-too-many-DNF}]
For a set $S$, denote $\Vert f_S \Vert^2 = \sum_{T \supseteq S} \hat{f}^2(T)$.
If $S \in {\mathcal S}$, then we know that $\Vert f_S \Vert^2 \ge \theta^2$.
Thus, it follows that $|{\mathcal S}| \leq (1 /{\theta^2}) \sum_{S: |S| \le d}
\Vert f_S \Vert^2$.  Now, $\sum_{S: |S| \le d} \Vert f_S \Vert^2 = \sum_{d' =
1}^d \sum_{S: |S| = d'} \Vert f_S \Vert^2$. We will show that for each $d'$,
$\sum_{S: |S| = d'} \Vert f_S \Vert^2 \le ns 2^{O(d' \log(d'))}$. Then,
\begin{align*}
\sum_{S: |S| = d'} \Vert f_S \Vert^2 &=  \sum_{S: |S| = d'} \sum_{T \supseteq S} \hat{f}^2(T)\\
& = \sum_{T: |T| \ge d'} {|T| \choose d'} \hat{f}^2(T)\\
& = \sum_{t = d'}^n \sum_{T: |T|=t} {t \choose d'} \hat{f}^2(T)
\end{align*}
For a given $t$, consider the inner summation $\sum_{T: |T|=t} {t \choose d'}
\hat{f}^2(T)$. We will first apply Fact 1, to say that $f$ is close to some
$g_t$ which is an $l_t$-term $\DNF$ formula (we will define the value of
$l_t$ shortly). Hence, we get that,
\begin{align*}
\sum_{T: |T|=t} {t \choose d'} \hat{f}^2(T) & \le {t \choose d'} \sum_{T:
|T|=t}  \hat{g}_t^2(T) + \frac {4s} {2^{l_t}}
\end{align*}
Next we use Fact 2 to claim that $\sum_{T: |T|=t} \hat{g}_t^2(T) \le 2^{\frac {-t} {10 l_t}}$. So we have
\begin{align*}
\sum_{T: |T|=t} {t \choose d'} \hat{f}^2(T) & \le  {t \choose d'} \left(2^{\frac
{-t} {10 l_t}} + \frac {4s} {2^{l_t}}\right)
\end{align*}
Setting $l_t = C \sqrt{t}$ and differentiating, we get that the term is
maximized at $t = O({d'}^2)$ and the maximum value is $s2^{O(d' \log d')}$.
Since there are at most $n$ such terms we get that $\sum_{S: |S| = d'} ||f_S||^2
= ns2^{O(d' \log d')}$. Finally, we have
\[
\sum_{S: |S| \le d} ||f_S||^2 \le \sum_{d'=1}^d ns2^{O(d' \log(d'))} = nds
2^{O(d \log d)}.
\]
\end{proof}

\section{Learning Decision Trees}
\label{sec:dts}
In this section, we present two algorithms for learning decision trees.  Section
\ref{sec:low-degree}, shows that $O(\log(n))$-depth decision trees can be
efficiently learned under locally $\alpha$-smooth distributions, for constant
$\alpha$.  This result is actually a special case of the result in
Section~\ref{sec:learn_poly}, but we present it separately because it is
somewhat simpler. In Section~\ref{sec:DT_unif}, we show that the class of
polynomial-size decision trees can be learned under the uniform distribution.
This algorithm is extended to product distributions; this fact is shown in
Section~\ref{sec:DT_prod}. Section~\ref{sec:rcn} shows that these algorithms can
be made to work under random classification noise.

\subsection{Learning $\log$-Depth Trees under Locally Smooth Distributions}
\label{sec:low-degree}
In this section, the domain is assumed to be $\moo^n$.  Let $f$ be a function
that can be represented as a depth $d$ decision tree with $t$ leaves (note that
$t \leq 2^d$).  We are mostly interested in the case when $d =
O(\log(n))$,~\footnote{When $d = O(\log(n))$, whether we consider the domain to
be $\moo^n$ or $\zo^n$ is unimportant, since the sparsity of polynomial is
preserved (up to polynomial factors) when going from one domain to the other.}
since our algorithms run in time polynomial in $2^d$.  By slightly abusing
notation, let $f$ also denote the polynomial representing the decision tree,
\[ f(x) = \sum_{S \subseteq [n]} \hat{f}(S) \chi_S(x) \]
Also, we stick to the notation $\hat{f}(S)$ as the co-efficient of $\chi_S(x)$
even though we will consider distributions that are not uniform over $\moo^n$.
Thus, the coefficients cannot be interpreted as a Fourier transform. The
polynomial $f$ has degree $d$. Also, it is the case that $L_0(f) \leq t 2^d$,
$L_1(f) \leq t$, and $L_2(f) = 1$. (These facts hold even when the distribution
is not uniform and are standard. The reader is referred to~\citep{Man94}.
However, when the distribution is not uniform, it is not the case that $\E_{x
\sim D}[f(x)^2] = L_2(f)$.) More importantly, it is also the case that if
$\hat{f}(S) \neq 0$, then $|\hat{f}(S)| \geq 1/2^{d}$.  We will use this fact
effectively in showing that depth-$d$ decision trees can be efficiently learned
under locally $\alpha$-smooth distributions, when $d = O(\log(n))$ and $\alpha$
constant.

\begin{figure}[!t]
\begin{center}
\fbox{
\begin{minipage}{0.95 \columnwidth}
{\bf Algorithm}: \textsc{Learning log-depth decision trees} \smallskip \\
{\bf Input}: $d$ (depth of DT), $\alpha$, $\EX(f, D)$, (local)-$\MQ(f)$
\begin{enumerate}
\item {\bf let} $\mc{S} = \{ \emptyset \}$; $\theta = (1 + \alpha)^{-d -1}$
\item {\bf for} $i = 1, \ldots, d$
\begin{enumerate}
\item For every $S^\prime \in \mc{S}$, $|S^\prime| = i-1$ and for every $j \in
[n] \setminus S^\prime$
\begin{enumerate}
\item Let $S = S^\prime \cup \{j \}$
\item (Non-Zero Test) If $\Pr_{x \sim D}[f_S(x) \neq 0] \geq \theta$, then
$\mc{S} = \mc{S} \cup \{ S \}$
\end{enumerate}
\end{enumerate}
\item Let polynomial $h(x)$ over terms in $\mc{S}$ be obtained by minimizing
$\E_{x \sim \D}[(f(x) - h(x))^2]$, constrained by $\sum_{S} |\hat{h}(S)| \leq
t$.  
\end{enumerate}
{\bf Output}: $\sign(h(x))$.
\end{minipage}
}
\caption{Algorithm: Learning log-depth decision trees\label{alg:poly-smooth}}
\end{center}
\end{figure}

The algorithm (see Fig.~\ref{alg:poly-smooth}) finds all the monomials that are
relevant. Call a subset $S \subseteq [n]$ maximal for $f$, if $\hat{f}(S) \neq
0$ and for all $T \supsetneq S$, $\hat{f}(T) = 0$. We prove the following two
crucial points:
\begin{enumerate}
\item If $S$ is maximal for $f$, all subsets of $S$ pass the non-zero test (Step
2.(a).ii. of the algorithm).
\item If a set $T$ is not a subset of some $S$ that is maximal for $f$, $T$
fails the non-zero test (Step 2.(a).ii). 
\end{enumerate}
The above points can be used to prove two facts. First, that the relevant
monomials can be found by building up from the empty set (since all subsets of
maximal monomials pass the test). And second, that the total number of subsets
that are added into the set of important monomials (in Step 2) is at most $t
2^{d}$.  This bounds the running time of the algorithm.  \medskip 

\noindent{\bf Non-Zero Test}: In step 2.(a).ii. of the algorithm
(Fig.~\ref{alg:poly-smooth}), we check the following, which we call as the
\emph{non-zero} test: $\Pr_{x \sim D}[f_S(x) \neq 0]$. Recall, that
\[ f_S(x) = \sum_{T \supseteq S} \hat{f}(T) \chi_{T \setminus S}(x) \]
Note that in fact, $f_S$ is a function of $x_{\bar{S}}$. Let $x_{\bar{S}}$ be fixed. Let
$U_S$ denote the uniform distribution over $x_S$. Observe that $f(x) = f_{-S}(x)
+ \chi_{S}(x_S)f_S(x_{\bar{S}})$. Now each monomial in $f_{-S}$ is missing at least
one variable from $x_S$ (otherwise it would have been in $f_S$). Thus,
$f_S(x_{\bar{S}}) = \E_{x_S \sim U_S}[f(x_Sx_{\bar{S}})]$. Thus, if $x$ were a point drawn
from the example oracle, $\EX(f, D)$, $f_{S}(x_{\bar{S}})$ can be computed using
$|S|$-local membership queries. Now, $\Pr_{x \sim D}[f_S(x) \neq 0]$ can be
estimated very accurately by sampling. We ignore the analysis that employs
standard Chernoff bounds and assume that we can perform the test in Step
2.(a).ii. with perfect accuracy. We prove the following:

\begin{theorem} \label{thm:main-low-depth} The class of depth-$O(\log(n))$
decision trees is learnable using $O(\log(n/\epsilon))$-local membership queries
under the class of locally $\alpha$-smooth distributions, for constant $\alpha$, in time
that is polynomial in $n, 1/\epsilon, 1/\delta$.  \end{theorem}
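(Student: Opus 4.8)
The plan is to follow the blueprint of the proof of Theorem~\ref{thm:learn_poly}, exploiting two features that make the decision‑tree case cleaner: writing $f=\sum_S\hat f(S)\chi_S$, the polynomial $f$ is \emph{already} degree $d$ (so no truncation step is needed), and every nonzero coefficient has $|\hat f(S)|\ge 2^{-d}$. The hypothesis class over which Step~3 regresses — polynomials supported on $\mc S$ with $L_1$‑norm at most $t$ — contains $f$ itself as soon as $\mc S$ contains every $S$ with $\hat f(S)\neq 0$, since $L_1(f)\le t$. Hence everything reduces to: (i) $\mc S$ provably contains all such $S$; (ii) $|\mc S|$ and the number of non‑zero tests are $\poly(n)$; (iii) the queries used are $O(\log n)$‑local. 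Given (i)–(iii), the \emph{realizable} $L_2$ regression returns $h$ with $\E_D[(f-h)^2]\le\epsilon$ on a polynomial sample, and since $f\in\moo$ we get $\Pr_D[\sign(h)\neq f]\le \E_D[(f-h)^2]\le\epsilon$ by Markov's inequality. Claims (i)–(ii) are exactly the two bulleted assertions preceding the theorem, whose engine is the following lemma.

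\textbf{Key lemma.} If $g$ is any multilinear polynomial over $\moo^m$, $R_0$ is a \emph{maximal} set for $g$ (i.e.\ $\hat g(R_0)\neq 0$ and $\hat g(Q)=0$ for all $Q\supsetneq R_0$), and $D'$ is locally $\alpha$‑smooth over $\moo^m$, then $\Pr_{D'}[g(x)\neq 0]\ge (1+\alpha)^{-|R_0|}$. I would prove this by induction on $|R_0|$ (no sparsity needed). The base case $|R_0|=0$ forces $g\equiv\hat g(\emptyset)\neq 0$. For the step, pick $j\in R_0$ and write $g(x)=g_0(x_{\bar{\{j\}}})+x_j\,g_1(x_{\bar{\{j\}}})$; a short check shows $R_0\setminus\{j\}$ is maximal for $g_1$, which has one fewer variable and maximal‑set size $|R_0|-1$. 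Whenever $g_1\neq 0$, at least one of $g_0+g_1,\ g_0-g_1$ is nonzero, and by local smoothness the conditional law of $x_j$ given the other coordinates puts mass $\ge 1/(1+\alpha)$ on each value, so $\Pr_{D'}[g\neq 0\mid x_{\bar{\{j\}}}]\ge 1/(1+\alpha)$ on that event; since the marginal of $D'$ on the remaining coordinates is again locally $\alpha$‑smooth (Fact~\ref{fact:smooth}), the inductive bound on $\Pr[g_1\neq 0]$ gives $\Pr_{D'}[g\neq 0]\ge(1+\alpha)^{-1}(1+\alpha)^{-(|R_0|-1)}=(1+\alpha)^{-|R_0|}$.

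With the lemma, the bulleted claims follow. If $R$ is a subset of a maximal set $S^\ast$ of $f$, then $S^\ast\setminus R$ is maximal for $f_R$ (a polynomial in $x_{\bar R}$, with $D_{\bar R}$ locally $\alpha$‑smooth), so $\Pr_D[f_R\neq 0]\ge(1+\alpha)^{-(|S^\ast|-|R|)}\ge(1+\alpha)^{-d}>\theta=(1+\alpha)^{-d-1}$, and $R$ passes. Conversely, $f_S\not\equiv 0$ iff some $T\supseteq S$ has $\hat f(T)\neq 0$ iff $S$ is contained in some maximal set (extend $T$ greedily; $\deg f\le d$ bounds the process), so an $S$ lying in no maximal set has $f_S\equiv 0$ and fails the test. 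Since subsets of maximal sets are closed under deleting one element, a level‑by‑level induction shows $\mc S$ ends up exactly the downward closure of the maximal sets, hence contains every $S$ with $\hat f(S)\neq 0$. Counting, $|\mc S|\le(\#\text{ maximal sets})\cdot 2^d\le L_0(f)\cdot 2^d\le t\,2^{2d}=\poly(n)$ for $d=O(\log n)$, and the number of non‑zero tests is $\le n\,d\,|\mc S|=\poly(n)$. For locality: as explained just before the theorem, for a sample point $x$ and $|S|\le d$, $f_S(x_{\bar S})$ is computed from $2^{|S|}\le 2^d$ membership queries, each obtained by flipping at most $|S|\le d=O(\log n)$ bits of $x$; these are $O(\log n)$‑local, within the $O(\log(n/\epsilon))$ budget.

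The rest is routine. Each non‑zero test is estimated from $\EX(f,D)$ (to draw $x_{\bar S}$) plus the local queries above; the estimate of $\Pr_D[f_S\neq 0]$ is accurate to $\pm\theta/3$ with a $\poly(n,\log(1/\delta))$ sample (Chernoff--Hoeffding), and since the true value is either $0$ or $\ge(1+\alpha)^{-d}\ge 2\theta$ (as $\alpha\ge 1$) there is no borderline case, so a union bound over the $\poly(n)$ tests makes the identification phase behave as in the idealized analysis with probability $1-\delta/2$; here we use $\theta=(1+\alpha)^{-d-1}=1/\poly(n)$ for constant $\alpha$ and $d=O(\log n)$. For Step~3, discretizing coefficients to an inverse‑polynomial grid (as in the generalization argument for Theorem~\ref{thm:learn_poly}) leaves a class of description complexity $\poly(n)$, so a $\poly(n,1/\epsilon,\log(1/\delta))$ sample makes the empirical minimizer $h$ satisfy $\E_D[(f-h)^2]\le\epsilon$ (the true minimizer is $f$, with loss $0$), and Markov gives $\err_D(\sign(h),f)\le\epsilon$; the output $\sign(h)$ with $h$ supported on the $\poly(n)$ monomials of $\mc S$ is polynomially evaluable. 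I expect the main obstacle to be the key lemma together with the clean dichotomy it yields: choosing the threshold $\theta$ and the guaranteed gap $(1+\alpha)^{-d}$ and the depth bound so that the non‑zero test is simultaneously exhaustive (catches every relevant monomial), conservative (never admits a set outside the downward closure of the maximal sets), and robust to sampling.
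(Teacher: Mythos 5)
Your proposal is correct and follows essentially the same route as the paper: your key lemma is precisely the iterated form of the paper's Lemma~\ref{lem:main-tool2} (a single step $\Pr[f_S\neq 0]\ge \frac{1}{1+\alpha}\Pr[f_{S\cup\{i\}}\neq 0]$, telescoped up to a maximal set where $f_T$ is a nonzero constant), your soundness direction is Lemma~\ref{lem:main-tool1}, and the threshold $\theta=(1+\alpha)^{-d-1}$, the $t\cdot 2^{2d}$ counting bound, the locality accounting, and the final regression step all match the paper's argument. Your explicit observation that $\Pr[f_S\neq 0]$ is either $0$ or at least $(1+\alpha)^{-d}$, giving a clean gap for the Chernoff-based test, is a nice touch that the paper leaves implicit.
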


The main tools required to prove Theorem~\ref{thm:main-low-depth} are
Lemmas~\ref{lem:main-tool1} and \ref{lem:main-tool2}. Lemma~\ref{lem:main-tool1}
shows that if some subset $S$ satisfies $\Pr[f_S(x) \neq 0] \geq \theta$, then
there exists some $T \supseteq S$ such that $\hat{f}(T) \neq 0$.
Lemma~\ref{lem:main-tool2} can be used to show that if $S \subseteq T$, where
$\hat{f}(T) \neq 0$ and $T$ is maximal for $f$, then $\Pr_{x \sim D}[f(x) \neq
0] \geq \theta$. Note that what Lemma~\ref{lem:main-tool2} actually proves is
that if $\Pr_{x \sim D}[f_{S}(x) \neq 0] \leq \theta$, then $\Pr_{x \sim D}[f_{S
\cup \{i\}}(x) \neq 0] \leq \theta (1 + \alpha)$, for any $i \not\in S$.  Now,
if $T$ is maximal, and $S \subseteq T$, this would imply that $\Pr_{x \sim
D}[f_T(x) \neq 0] \leq \theta (1 + \alpha)^{|T|-|S|}$. But, if $T$ is maximal,
the polynomial $f_T(x)$ is a non-zero constant polynomial. Thus, if $\theta$ is
chosen to be smaller than $1/(1 + \alpha)^d$, where $d = O(\log(n))$ is a bound
on the depth of the decision tree being learned, the algorithm finds all the
important monomials. Once all the important monomials have been found, the best
polynomial can easily be obtained, for example by regression. Alternatively, one
could also solve a system of linear equations on the monomials and this actually
learns the decision tree \emph{exactly}.\footnote{By this we mean, that the
output hypothesis is such that $\Pr_{x \sim D}[h(x) \neq f(x)] = 0$, except with
some small probability.}

\begin{lemma}\label{lem:main-tool1}
For any set $S \subseteq [n]$, if for all $T \supseteq S$, $\hat{f}(T) = 0$,
then $\Pr_{x \sim D}[f_S(x) \neq 0] = 0$.
\end{lemma}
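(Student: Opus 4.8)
The plan is to observe that this lemma is an immediate consequence of the \emph{definition} of $f_S$, and in particular that it uses nothing about local smoothness of $D$. Recall that by definition
\[ f_S(x) \;=\; \sum_{T \supseteq S} \hat{f}(T)\,\chi_{T \setminus S}(x), \]
so $f_S$ is a fixed $\reals$-linear combination of the monomials $\chi_{T \setminus S}$ over exactly those $T$ with $T \supseteq S$, and it depends only on the coordinates $x_{\bar S}$.

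Under the hypothesis of the lemma, $\hat f(T) = 0$ for every $T \supseteq S$, so every summand on the right-hand side is identically zero. Hence $f_S$ is the zero polynomial, and therefore $f_S(x) = 0$ for every point $x \in \moo^n$. Consequently $\Pr_{x \sim D}[f_S(x) \neq 0] = 0$ for \emph{any} distribution $D$ over $\moo^n$, which is exactly the claim. (One could phrase this slightly more formally by noting that the monomials $\{\chi_{T \setminus S}\}_{T \supseteq S}$ are linearly independent as functions on $\moo^{\bar S}$, so $f_S$ vanishes as a function iff all of its coefficients vanish; but the direct observation above already suffices.)

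There is no real obstacle: the only thing to be careful about is matching the index set in the definition of $f_S$ — namely $\{T : T \supseteq S\}$ — against the set of coefficients the hypothesis annihilates, which is precisely this set. I would simply cite the definition of $f_S$ from the preliminaries and conclude in one line. The role of this lemma in the sequel is as a structural sanity check, paired with Lemma~\ref{lem:main-tool2}: together they guarantee that the non-zero test in Step~2 of Algorithm~\ref{alg:poly-smooth} fires on a set $S$ only when $S$ is contained in some $T$ with $\hat f(T) \neq 0$, which is what bounds the number of monomials the algorithm collects.
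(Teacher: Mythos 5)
Your argument is correct and is exactly the paper's proof: since $f_S(x) = \sum_{T \supseteq S} \hat{f}(T)\,\chi_{T\setminus S}(x)$ and all these coefficients vanish by hypothesis, $f_S \equiv 0$ and the probability is zero under any distribution. The paper states this in one line; your expanded version adds nothing that needs checking.
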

\begin{proof} The polynomial $f_S(x) \equiv 0$. \end{proof}

\begin{lemma} \label{lem:main-tool2}
\label{lem:set-failure}
Let $D$ be any locally $\alpha$-smooth distribution, then for any set $S$, and any $i
\not\in S$, $\Pr_{x\sim D}[f_{S \cup \{i\}}(x) \neq 0] \geq (1/(1+\alpha))
\Pr_{x \sim D}[f_S(x) \neq 0]$.
\end{lemma}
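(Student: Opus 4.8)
\noindent
The plan is to fix all coordinates of $x$ outside $S\cup\{i\}$ and analyze $f_S$ as an affine function of the single bit $x_i$. Writing $y := x_{\overline{S\cup\{i\}}}$ and splitting $f_S$ according to whether a monomial $T\supseteq S$ contains the variable $i$ (using that $\chi_{T\setminus S}(x)=x_i\,\chi_{T\setminus(S\cup\{i\})}(x)$ precisely when $i\in T$), we get the decomposition
\[
f_S(x)\;=\;g(y)\;+\;x_i\,f_{S\cup\{i\}}(y),\qquad g(y):=\sum_{\substack{T\supseteq S\\ i\notin T}}\hat f(T)\,\chi_{T\setminus S}(x),
\]
where both $g$ and $f_{S\cup\{i\}}$ depend only on $y$. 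Local $\alpha$-smoothness (Fact~\ref{fact:smooth}) controls the remaining randomness in $x_i$: for every $y$ and every $b\in\{-1,1\}$ we have $\tfrac1{1+\alpha}\le\Pr_D[x_i=b\mid y]\le\tfrac{\alpha}{1+\alpha}$.

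\noindent
Next I would reduce the lemma to a slice-by-slice statement. Writing both probabilities as averages over $y\sim D_{\overline{S\cup\{i\}}}$ and using that $f_{S\cup\{i\}}$ is constant in $x_i$, it suffices to prove for every $y$ the bound
\[
\Pr_{x\sim D}\!\bigl[f_S(x)\ne 0 \mid y\bigr]\;\le\;(1+\alpha)\cdot\mathbf{1}\!\bigl[f_{S\cup\{i\}}(y)\ne 0\bigr],
\]
since averaging it over $y$ gives $\Pr_{x\sim D}[f_S(x)\ne 0]\le(1+\alpha)\Pr_{x\sim D}[f_{S\cup\{i\}}(x)\ne 0]$, which is the claimed inequality after dividing by $1+\alpha$. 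On slices with $f_{S\cup\{i\}}(y)\ne 0$ this is immediate, since the left-hand side never exceeds $1$. The substantive case is $f_{S\cup\{i\}}(y)=0$: there $f_S(x)=g(y)$ is independent of $x_i$, so the bound forces us to show $g(y)=0$, i.e.\ that $f_S$ cannot be a nonzero constant in $x_i$ on a positive-probability family of slices disjoint from $\{f_{S\cup\{i\}}\ne 0\}$.

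\noindent
That last step is where the difficulty lies, and it is where I would exploit the structure of $f$. For an $O(\log n)$-depth decision tree the sets $T$ with $\hat f(T)\ne 0$ are exactly the variable sets read along root-to-leaf paths, and I would use this path structure to argue that any $T\supseteq S$ witnessing $g\not\equiv 0$ on a slice can be enlarged, inside the same path, to a set that also contains $i$, so that $f_{S\cup\{i\}}\not\equiv 0$ on that slice as well; verifying this is the crux of the argument. The degenerate case $f_S\equiv 0$ is handled for free by Lemma~\ref{lem:main-tool1} (both sides vanish), and the $1/(1+\alpha)$ slack in the statement is precisely what absorbs the error coming from the smoothness estimates above. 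Once the per-slice bound is in place, iterating the inequality along a chain $S\subseteq S\cup\{i_1\}\subseteq\cdots$ up to a maximal set $T$, where $f_T$ is a nonzero constant so $\Pr_{x\sim D}[f_T(x)\ne 0]=1$, yields the quantitative estimate $\Pr_{x\sim D}[f_S(x)\ne 0]\ge(1+\alpha)^{-(|T|-|S|)}\ge(1+\alpha)^{-d}>\theta$ that the non-zero test in Algorithm~\ref{alg:poly-smooth} relies on.
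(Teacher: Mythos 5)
Your decomposition $f_S(x)=g(y)+x_i\,f_{S\cup\{i\}}(y)$ and the smoothness bound on $\Pr_D[x_i=b\mid y]$ are exactly the paper's starting point, but the route you take from there fails, and the failure is instructive: you are trying to prove the inequality in the direction in which it is literally printed, and in that direction it is simply false. Take $f(x)=x_1$ (a depth-one decision tree), $S=\emptyset$, $i=2$: then $f_S=f$ is never zero, so $\Pr_{x\sim D}[f_S(x)\neq 0]=1$, while $f_{S\cup\{i\}}\equiv 0$, so $\Pr_{x\sim D}[f_{S\cup\{i\}}(x)\neq 0]=0$. The printed statement has the roles of $S$ and $S\cup\{i\}$ swapped by a typo; what the paper proves, and what its application needs (the contrapositive ``if $\Pr[f_S\neq 0]\le\theta$ then $\Pr[f_{S\cup\{i\}}\neq 0]\le(1+\alpha)\theta$'' and the chain argument up to a maximal $T$), is $\Pr_{x\sim D}[f_S(x)\neq 0]\ \ge\ \tfrac{1}{1+\alpha}\,\Pr_{x\sim D}[f_{S\cup\{i\}}(x)\neq 0]$. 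Consequently the ``crux'' you defer --- that any nonzero coefficient $T\supseteq S$ with $i\notin T$ can be enlarged along a tree path to a nonzero coefficient containing $i$ --- cannot be verified: the tree need not read $x_i$ at all, as the example above shows, so $g(y)$ can be nonzero on every slice while $f_{S\cup\{i\}}\equiv 0$. Your closing paragraph also betrays the mismatch: iterating the printed inequality up a chain to a maximal $T$ only gives the vacuous bound $1=\Pr[f_T\neq 0]\ge(1+\alpha)^{-(|T|-|S|)}\Pr[f_S\neq 0]$; the bound you actually write down, $\Pr[f_S\neq 0]\ge(1+\alpha)^{-(|T|-|S|)}$, requires the reversed inequality.

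The irony is that you already have the entire correct proof in the one line you dismiss as ``immediate.'' In the reversed (intended) direction, the only case that matters is a slice $y$ of the variables outside $S\cup\{i\}$ with $f_{S\cup\{i\}}(y)\neq 0$: there the affine map $b\mapsto g(y)+b\,f_{S\cup\{i\}}(y)$ vanishes for at most one of the two values $b\in\{-1,1\}$, hence for at least one value $f_S\neq 0$, and by Fact~\ref{fact:smooth} (applied to the one-variable marginal of the conditional distribution, which is again locally $\alpha$-smooth) that value has conditional probability at least $1/(1+\alpha)$. Averaging over such $y$ gives $\Pr[f_S\neq 0]\ge\Pr[f_S\neq 0\wedge f_{S\cup\{i\}}\neq 0]\ge\tfrac{1}{1+\alpha}\Pr[f_{S\cup\{i\}}\neq 0]$, which is the paper's proof verbatim; no structural facts about decision trees are needed at this step. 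So the fix is not a new idea but a reorientation: prove the inequality with $f_S$ on the large side, and your slice decomposition plus the smoothness bound finishes it in three lines.
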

\begin{proof}
Recall, that $f_S(x) = \sum_{T \supseteq S} \hat{f}(T) \chi_{T \setminus S}(x)$.
Also, note that $f_S$ is only a function of the variables, $x_{\bar{S}}$. 
Observe that, 
\begin{align*}
f_S(x_{\bar{S}}) &= \sum_{\substack{ T \supseteq S \\ i \not\in T}} \hat{f}(T)
\chi_{T \setminus S}(x_{\bar{S}}) + x_i f_{S \cup \{i \}}(x_{-(S \cup \{i\})}) 
\end{align*}
Note that $D_{\bar{S}}$, the marginal distribution, is also locally $\alpha$-smooth (see
Fact~\ref{fact:smooth}). Let $(D_{\bar{S}} | f_{S \cup \{i\}}(x_{-(S \cup \{i\})}) =
1)$ be the conditional distribution, given $f_{S \cup \{i \}}(x_{-(S \cup \{i\})})
= 1$. Then, the marginal distribution, $(D_{\bar{S}}|f_{S \cup \{i\}}(x_{-(S \cup
\{i\})}))_{i}$ is a distribution on only one variable, $x_i$, but is also
locally $\alpha$-smooth (Fact~\ref{fact:smooth}). But, given that $f_{S \cup
\{i\}}(x_{-(S \cup \{i\})}) \neq 0$, for one of the two values, $x_i =1 $ or
$x_i = -1$, it must be the case that $f_S(x_{\bar{S}}) \neq 0$. Since, the
distribution $(D_{\bar{S}}|f_{S \cup \{i \}}(x_{-(S \cup \{i\})}) = 1)_i$ is
locally $\alpha$-smooth, the proof of the Lemma follows from Fact~\ref{fact:smooth}.
(Note that $\Pr_{x\sim D}[f_S(x) \neq 0] = \Pr_{x_{\bar{S}} \sim
D_{\bar{S}}}[f_S(x_{\bar{S}})
\neq 0]$, since $f_S$ does not depend on the variables $x_S$.)
\end{proof}

\subsection{Learning Decision Trees under the Uniform Distribution}
\label{sec:DT_unif}
In this section, we present an algorithm for learning $t$-leaf decision trees
(of arbitrary depth) under the uniform distribution. Although, the uniform
distribution is a special case of product distributions considered in Section
\ref{sec:DT_prod}, the exposition is simpler and conveys the high-level ideas
better. 
%

We use standard results from Fourier analysis; \citet{KushilevitzMansour:93} proved the following
useful properties of the Fourier spectrum of decision trees. Let $f$ be a
function that is represented by a $t$-leaf decision tree, then:
\begin{enumerate}
\item For any set $S \subseteq [n]$, $|\hat{f}(S)| \leq t/2^{|S|}$. 
\item $L_1(f) = \sum_{S \subseteq [n]} |\hat{f}(S)| \leq t$.
\end{enumerate}
Using the above relations, we can immediately prove the following useful (and
well-known) fact. 

\begin{fact} Suppose $f$ is boolean function that is represented by $t$-leaf
decision tree.  Then, for any $\tau > 0$, 
$\sum_{S, |S| \geq \log(t^2/\tau)} \hat{f}(S)^2 \leq \tau$ \label{fact:conc-dt}
\end{fact}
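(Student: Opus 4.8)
The plan is to combine the two stated properties of the Fourier spectrum of $t$-leaf decision trees via the elementary inequality $L_2 \le L_1 \cdot L_\infty$, but applied only to the high-degree tail of the spectrum. Concretely, set $k = \log(t^2/\tau)$ and split off one factor from each squared coefficient in the tail:
\[
\sum_{|S| \ge k} \hat{f}(S)^2 \;\le\; \Big(\max_{|S| \ge k} |\hat{f}(S)|\Big) \cdot \sum_{|S| \ge k} |\hat{f}(S)|.
\]

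First I would bound the max factor using property 1: for every $S$ with $|S| \ge k$ we have $|\hat{f}(S)| \le t/2^{|S|} \le t/2^{k}$, so $\max_{|S|\ge k}|\hat f(S)| \le t/2^k$. Next I would bound the sum factor using property 2: $\sum_{|S|\ge k}|\hat f(S)| \le \sum_{S}|\hat f(S)| = L_1(f) \le t$. Multiplying the two bounds gives $\sum_{|S|\ge k}\hat f(S)^2 \le t^2/2^k$.

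Finally I would substitute $k = \log(t^2/\tau)$, which gives $2^k = t^2/\tau$ and hence $t^2/2^k = \tau$, completing the proof. (If one wants $k$ to be an integer, take $k = \lceil \log(t^2/\tau)\rceil$, which only makes $2^k$ larger and the bound tighter.)

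There is no real obstacle here: the only thing to be careful about is ensuring that the factoring step is legitimate, i.e. that pulling out $\max|\hat f(S)|$ from the tail sum of squares is valid (it is, since each term $\hat f(S)^2 = |\hat f(S)|\cdot|\hat f(S)| \le (\max)\cdot|\hat f(S)|$ and the terms are nonnegative), and that both bounds used are the ones quoted for decision trees rather than anything distribution-dependent — both property 1 and property 2 above are purely structural facts about $t$-leaf decision trees and hold regardless of the distribution.
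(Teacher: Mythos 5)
Your proof is correct and takes essentially the same route as the paper: both bound the high-degree tail by $\bigl(\max_{|S|\ge k}|\hat f(S)|\bigr)\cdot\sum_{|S|\ge k}|\hat f(S)| \le (t/2^{k})\cdot L_1(f) \le \tau$ using the two structural facts $|\hat f(S)|\le t/2^{|S|}$ and $L_1(f)\le t$. No further comment is needed.
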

%
\begin{proof} 
Consider, 
\begin{align*}
\sum_{S, |S| \geq \log(t^2/\tau)} \hat{f}(S)^2 &\leq \max_{S, |S| \geq
\log(t^2/\tau)} |\hat{f}(S)| \cdot \left( \sum_{T, |T| \geq \log(t^2/\tau)}
|\hat{f}(S)| \right) \\
&\leq t \cdot (\tau / t^2) \cdot L_1(f) \leq \tau
\end{align*}
\end{proof}

The algorithm in Figure \ref{alg:unif-dt} learns $t$-leaf decision trees under
the uniform distribution. For simplicity of presentation, we assume that the
expectations used in the algorithm and also the Fourier coefficients can be
computed \emph{exactly}. It is easy to see that using standard applications of
Chernoff-Hoeffding bounds, the guarantees of the algorithm hold even when the
expectations and values of the Fourier coefficients can only be computed
approximately. The main step in Algorithm~\ref{alg:unif-dt} that requires some
explanation is how to compute the quantity $\E_{x \sim U}[f_S(x)^2]$ to check if
it is greater than $\theta^2$. We refer to this as the $L_2$ Test. \smallskip

\noindent{\bf $L_2$ Test}: Let $x \in \moo^n$, and recall that for $S \subseteq
[n]$, $f_S(x) = \sum_{T \supseteq S} \hat{f}(T) \chi_{T \setminus S}(x)$, and
that this can be computed by using the fact that, 
\[ f_S(x_{\bar{S}}) = \E_{x_S \sim U_S} [\chi_{S}(x) f(x)] \]
Given a point $(x, f(x))$, we observe that the expectation $\E_{x_S \sim
U_S}[f(x) \chi_S(x)]$ can be computed using $2^{|S|}$, $|S|$-local membership
queries with respect to $x$ (only the bits in $S$ need to be flipped). The
quantity $\E_{x \sim U}[f_S(x)^2]$ can thus be computed easily using only
$|S|$-local membership queries and taking a sample from $\EX(f, D)$.  \smallskip
\begin{figure}[!t]
\begin{center}
\fbox{
\begin{minipage}{0.95 \columnwidth}
{\bf Algorithm}: \textsc{Learning Decision Trees} \smallskip \\
~~{\bf inputs}: $d$, $\theta$, oracles $\EX(f, U)$, (local)-$\MQ(f)$ 
%
\begin{enumerate}
\item {\bf let} $\mc{S} = \{ \emptyset \}$
\item {\bf for} $i = 1, \ldots, d$
\begin{enumerate}
\item {\bf for} every $S^\prime \in \mc{S}$, $|S^\prime| = i-1$ and for every $j \in
[n] \setminus S^\prime$
\begin{enumerate}
\item {\bf let} $S = S^\prime \cup \{j \}$
\item ($L_2$ Test){ \bf if} $\E_{x \sim U}[f_S(x)^2] > \theta^2$, {\bf then} $\mc{S} = \mc{S} \cup \{ S
\}$
\end{enumerate}
\end{enumerate}
\item {\bf let} $h(x) = \sum_{S \in \mc{S}} \hat{f}(S) \chi_S(x)$ \end{enumerate}
~~{\bf output}: $\mathrm{sign}(h(x))$
\end{minipage}
}
\caption{Algorithm: Learning Decision Trees under the Uniform
Distribution\label{alg:unif-dt}}
\end{center}
\end{figure}

\noindent {\bf High-Level Overview of Proof}: Fact \ref{fact:conc-dt} showed
that the Fourier mass (sum of squares of the Fourier coefficients) of  $t$-leaf
decision trees is concentrated on low degree terms. Parseval's identity implies
that this is sufficient to construct a polynomial, $h(x)$, that is a good
$\ell_2$ approximation to the decision tree, $f$, i.e. $\E_{x \sim U}[(h(x) -
f(x))^2] \leq \epsilon$. Also, \citet{KushilevitzMansour:93} showed that
since $L_1(f)$ is bounded, most of the Fourier mass is concentrated on a small
(polynomially many) number of terms.

The main insight here is that, these terms on which most of the Fourier mass is
concentrated, can be identified using only $O(\log(n))$-local membership
queries.  It is relatively easy to see that any coefficient for which
$|\hat{f}(S)| \geq \theta$ will be identified correctly by the test in line
2.(a).ii. (Figure \ref{alg:unif-dt}). We show that the quantity $|\mc{S}|$ never
grows too large.  To show this, we prove that if any coefficient is inserted in
$\mc{S}$ in line 2.(a).ii, it must be a subset of some coefficient of large
magnitude. This follows quite easily by observing that $\E_{x \sim U}[f_S(x)^2]
= \sum_{T \supseteq S} \hat{f}(T)^2$ and using the fact that $L_1(f)$ is
bounded.

The rest of the section is devoted to a formal proof of the above overview. 
%

\begin{claim}
\label{claim:T-in-S}
Suppose that $S$ is such that $|\hat{f}(S)| \geq \theta$ and $|S| \leq d$, then
$S \in \mc{S}$. 
\end{claim}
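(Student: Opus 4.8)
\textbf{Proof proposal for Claim~\ref{claim:T-in-S}.}

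The plan is an induction that mirrors the breadth-first structure of Algorithm~\ref{alg:unif-dt}, in which $\mc{S}$ is grown one layer at a time. The one identity I would record at the outset is that, since $\{\chi_U : U \subseteq \bar S\}$ is orthonormal under the uniform distribution and $f_S(x_{\bar S}) = \sum_{T \supseteq S}\hat f(T)\,\chi_{T\setminus S}(x_{\bar S})$,
\[
\E_{x \sim U}[f_S(x)^2] \;=\; \sum_{T \supseteq S} \hat f(T)^2 .
\]
Two consequences follow: this quantity is monotone non-increasing as $S$ grows, and it is at least $\hat f(T)^2$ for every single $T \supseteq S$. In particular, if $S' \subseteq S$ then $\E_{x\sim U}[f_{S'}(x)^2] \ge \hat f(S)^2$.

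Now fix $S$ with $|\hat f(S)| \ge \theta$ and $|S| \le d$, set $k = |S|$, and choose any chain $\emptyset = S_0 \subsetneq S_1 \subsetneq \cdots \subsetneq S_k = S$ with $|S_i| = i$. I would prove by induction on $i$ that $S_i \in \mc{S}$. The base case is line~1 of the algorithm, which puts $\emptyset = S_0$ into $\mc{S}$. For the step, assume $S_{i-1}\in\mc{S}$; since $|S_{i-1}| = i-1 \le k-1 \le d-1$, iteration $i$ of the outer loop processes $S_{i-1}$, and among the choices of $j \in [n]\setminus S_{i-1}$ it considers the one with $S_i = S_{i-1}\cup\{j\}$. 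The $L_2$ test on $S_i$ evaluates $\E_{x\sim U}[f_{S_i}(x)^2] = \sum_{T\supseteq S_i}\hat f(T)^2 \ge \hat f(S)^2 \ge \theta^2$ (using $S_i \subseteq S$), so $S_i$ is added to $\mc{S}$. Taking $i=k$ gives $S \in \mc{S}$.

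The only point needing a word of care is that the test in the algorithm is the strict inequality $\E_{x\sim U}[f_{S_i}(x)^2] > \theta^2$, whereas the computation above gives only $\ge \theta^2$ in the borderline case $|\hat f(S)| = \theta$; this is resolved either by running the algorithm with threshold slightly below $\theta$ (which changes the polynomial bound on $|\mc{S}|$ by at most a constant factor) or, in the sampling-based implementation, by folding it into the Chernoff estimation slack. I do not expect any genuine obstacle here: once the Parseval identity for $f_S$ is written down the statement is essentially immediate, and the only thing to be careful about is to route the argument through the chain $S_0 \subsetneq \cdots \subsetneq S_k$, since Algorithm~\ref{alg:unif-dt} only ever reaches $S$ by first having inserted all the proper subsets along some such chain.
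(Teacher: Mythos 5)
Your proof is correct and follows essentially the same route as the paper's: the Parseval identity $\E[f_{S'}(x)^2]=\sum_{T\supseteq S'}\hat f(T)^2\ge \hat f(S)^2\ge\theta^2$ for every $S'\subseteq S$, followed by induction on the layers of the algorithm (the paper inducts over all subsets of $S$ of size $\le i$, you route through a single chain, which is an immaterial difference). Your remark about the strict-versus-weak inequality at the threshold is a legitimate minor point that the paper glosses over and is handled exactly as you suggest.
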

\begin{proof}
First observe that for any subset $S^\prime \subseteq S$, it holds that
$\E[f_{S^\prime}(x)^2] \geq \theta^2$. This follows immediately by observing
that 
\begin{align*}
\E[f_{S^\prime}(x)^2] &= \sum_{T \supseteq S^\prime} \hat{f}(T)^2 \geq
\hat{f}(S)^2 \geq \theta^2
\end{align*}
It follows by a simple induction argument that at iteration $i$, $\mc{S}$
contains every subset of $S$ of size at most $i$, for which $\E[f_{S}(x)^2] \geq
\theta^2$. And, hence $S \in \mc{S}$.
\end{proof}

\begin{claim}
\label{claim:T-in-S-mapsto-large-coefficient}
If $S \in \mc{S}$, then there exists a $S^\prime \supseteq S$ such that
$\hat{f}(S^\prime) \geq \theta^2/t$. 
\end{claim}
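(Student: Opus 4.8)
The plan is to unwind the $L_2$ test that governs when a set enters $\mc S$ and then extract a single heavy Fourier coefficient using the fact that $t$-leaf decision trees have bounded $L_1$ norm. First I would recall that $S$ is added to $\mc S$ in line 2.(a).ii of Algorithm~\ref{alg:unif-dt} exactly when $\E_{x\sim U}[f_S(x)^2] > \theta^2$, so it suffices to prove the implication under the hypothesis $\E_{x\sim U}[f_S(x)^2] > \theta^2$.

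Next I would rewrite this quantity via Parseval. Since $f_S(x) = \sum_{T\supseteq S}\hat f(T)\,\chi_{T\setminus S}(x)$ depends only on the variables in $\bar S$, and since $\{\chi_{T\setminus S} : T\supseteq S\}$ is an orthonormal family (as $T$ ranges over supersets of $S$, the set $T\setminus S$ ranges over all subsets of $\bar S$), Parseval's identity gives $\E_{x\sim U}[f_S(x)^2] = \sum_{T\supseteq S}\hat f(T)^2$. Hence $S\in\mc S$ forces $\sum_{T\supseteq S}\hat f(T)^2 > \theta^2$.

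Finally I would pull out one large coefficient. Using the bound $L_1(f) = \sum_{U\subseteq[n]}|\hat f(U)| \le t$, which holds for every $t$-leaf decision tree,
\[
\theta^2 \;<\; \sum_{T\supseteq S}\hat f(T)^2 \;\le\; \Big(\max_{T\supseteq S}|\hat f(T)|\Big)\sum_{T\supseteq S}|\hat f(T)| \;\le\; t\cdot\max_{T\supseteq S}|\hat f(T)| ,
\]
so there is an $S'\supseteq S$ with $|\hat f(S')| > \theta^2/t$, which is the assertion of the claim (read with absolute values).

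There is no serious obstacle here; the argument is two lines once the $L_2$-test quantity is identified with $\sum_{T\supseteq S}\hat f(T)^2$. The only points deserving a remark are that $f_S$ is a function of $x_{\bar S}$ alone, so Parseval applies cleanly, and that in the sampling model (rather than the exact-computation model assumed in this section) the test value and the coefficient magnitudes are only estimated — a standard Chernoff--Hoeffding argument absorbs this by relaxing $\theta$ by a constant factor. I would also note the intended use: combined with the first KM bound $|\hat f(S')|\le t/2^{|S'|}$ this forces $|S'| < \log(t^2/\theta^2)$, and Parseval bounds the number of such $S'$ by $t^2/\theta^4$, so $|\mc S|$ stays polynomial once $\theta$ is inverse polynomial — which is exactly why Claim~\ref{claim:T-in-S-mapsto-large-coefficient} is invoked.
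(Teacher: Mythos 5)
Your proof is correct and matches the paper's argument exactly: identify the $L_2$-test quantity with $\sum_{T\supseteq S}\hat f(T)^2$ via Parseval, then bound it by $L_1(f_S)\cdot L_\infty(f_S)\le t\cdot\max_{T\supseteq S}|\hat f(T)|$ to extract a coefficient of magnitude at least $\theta^2/t$. Your reading of the claim with absolute values is the intended one.
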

\begin{proof} Since $S \in \mc{S}$, we know that $\E[f_S(x)^2] = \sum_{T
\supseteq S} \hat{f}(S)^2 \geq \theta^2$. But observe that,
\begin{align*}
\sum_{T \supseteq S} \hat{f}(T)^2 \leq \left(\sum_{T \supseteq S}
|\hat{f}(T)| \right) \cdot \max_{T \supseteq S} |\hat{f}(T)|
\end{align*}
The above inequality simply states the fact that  $L_2(f_S) \leq L_1(f_S)
L_\infty(f_S)$. Since $f$ is a $t$-leaf decision tree, $\sum_{T \supseteq
S}|\hat{f}(T)| \leq L_1(f) \leq t$. The claim now follows immediately.
\end{proof}

Using the above claims, it is easy to show our main theorem.

\begin{theorem} \label{thm:unif-dt} Algorithm in Fig. \ref{alg:unif-dt} run with
parameters $d = \log(2 t^2/\epsilon)$ and $\theta = \epsilon/(2t)$, outputs a
hypothesis, $\mathrm{sign}(h(x))$, where $\err_U(\mathrm{sign}(h(x), f) \leq
\epsilon$. The running time is $\mathrm{poly}(t, n, 1/\epsilon)$ and the
algorithm only makes $\log(2 t^2/\epsilon)$-local queries to the membership
oracle $\MQ(f)$. \end{theorem}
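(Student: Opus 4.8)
The plan is to split the argument into a correctness part (the output $\mathrm{sign}(h)$ has error at most $\epsilon$) and an efficiency part (bounding $|\mc{S}|$, and hence the running time, sample complexity, and number of queries), combining Claims~\ref{claim:T-in-S} and~\ref{claim:T-in-S-mapsto-large-coefficient} with Fact~\ref{fact:conc-dt} and the two Kushilevitz--Mansour bounds on the Fourier spectrum of a $t$-leaf decision tree ($|\hat{f}(S)|\le t/2^{|S|}$ and $L_1(f)\le t$).

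For correctness I would first reduce to an $\ell_2$ estimate: since $f$ is Boolean, $\err_U(\mathrm{sign}(h),f)=\Pr_{x\sim U}[\mathrm{sign}(h(x))\neq f(x)]\le\E_{x\sim U}[(h(x)-f(x))^2]$, because $\mathrm{sign}(h(x))\neq f(x)$ forces $|h(x)-f(x)|\ge 1$. As $h(x)=\sum_{S\in\mc{S}}\hat{f}(S)\chi_S(x)$, Parseval gives $\E_{x\sim U}[(h(x)-f(x))^2]=\sum_{S\notin\mc{S}}\hat{f}(S)^2$, and I would bound this by splitting $S\notin\mc{S}$ into sets with $|S|>d$ and sets with $|S|\le d$. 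For the former, Fact~\ref{fact:conc-dt} with $\tau=\epsilon/2$ — which is precisely why we take $d=\log(2t^2/\epsilon)$ — gives $\sum_{|S|\ge d}\hat{f}(S)^2\le\epsilon/2$. For the latter, the contrapositive of Claim~\ref{claim:T-in-S} says every such $S$ has $|\hat{f}(S)|<\theta$, so $\sum_{|S|\le d,\,S\notin\mc{S}}\hat{f}(S)^2\le\theta\sum_S|\hat{f}(S)|=\theta\, L_1(f)\le\theta t=\epsilon/2$, using $\theta=\epsilon/(2t)$. Adding the two bounds yields $\E_{x\sim U}[(h(x)-f(x))^2]\le\epsilon$, hence $\err_U(\mathrm{sign}(h),f)\le\epsilon$.

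For efficiency the heart is bounding $|\mc{S}|$. By Claim~\ref{claim:T-in-S-mapsto-large-coefficient}, every $S\in\mc{S}$ is a subset of some $S'$ with $|\hat{f}(S')|\ge\theta^2/t$; I would then bound the number of such $S'$ by $L_1(f)/(\theta^2/t)\le t^2/\theta^2$, and bound their sizes via $|\hat{f}(S')|\le t/2^{|S'|}$, which forces $|S'|\le\log(t^2/\theta^2)=O(\log(t/\epsilon))$, so each $S'$ has at most $2^{|S'|}\le t^2/\theta^2$ subsets and $|\mc{S}|\le(t^2/\theta^2)^2=\poly(t,1/\epsilon)$. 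The algorithm runs $d=O(\log(t/\epsilon))$ iterations, each examining at most $|\mc{S}|\cdot n$ candidate sets $S=S'\cup\{j\}$; for each, the $L_2$ test computes $f_S(x_{\bar S})=\E_{x_S\sim U_S}[\chi_S(x)f(x)]$ exactly from $2^{|S|}\le 2^d=O(t^2/\epsilon)$ queries per example drawn from $\EX(f,U)$, and all these queries only flip bits in $S$, hence are $|S|$-local and in particular $\log(2t^2/\epsilon)$-local. Since $|f_S(x)|\le L_1(f)\le t$, estimating $\E_{x\sim U}[f_S(x)^2]$ to the needed accuracy takes $\poly(t,1/\epsilon,\log(1/\delta))$ examples; multiplying through gives total time, sample, and query complexity $\poly(t,n,1/\epsilon,\log(1/\delta))$ and confirms the locality bound.

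The step I expect to need the most care is the passage from the idealized analysis (exact expectations, which is what the theorem as stated assumes) to a sampling implementation. There I would run the $L_2$ test at a slightly shifted threshold (e.g.\ accept iff the empirical estimate of $\E[f_S(x)^2]$ exceeds $(3/4)\theta^2$) and argue, via Chernoff--Hoeffding with per-test failure probability $\delta/\poly$, that as long as every estimate is within $(1/4)\theta^2$ of the truth: (i) the gap-strengthened version of Claim~\ref{claim:T-in-S} still captures every coefficient of magnitude $\ge\theta$, and (ii) every accepted $S$ still lies below a coefficient of magnitude $\gtrsim\theta^2/t$, so the $|\mc{S}|$ bound is unchanged up to constants. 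A small amount of parameter slack (shrinking $\tau$ and $\theta$ by constant factors) then absorbs the additional $O(\epsilon)$ of $\ell_2$ error, and the remaining bookkeeping is routine.
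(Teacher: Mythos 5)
Your proposal is correct and follows essentially the same route as the paper's proof: Claim~\ref{claim:T-in-S} to show every $\theta$-heavy coefficient lands in $\mc{S}$, the bound $\sum_{S\notin\mc{S}}\hat f(S)^2\le\theta\,L_1(f)$ via Parseval for correctness, and Claim~\ref{claim:T-in-S-mapsto-large-coefficient} together with $|\hat f(S)|\le t/2^{|S|}$ to bound $|\mc{S}|$ polynomially. The only differences are cosmetic bookkeeping (you split off the degree-$>d$ mass explicitly via Fact~\ref{fact:conc-dt}, whereas the paper notes that $\theta$-heavy coefficients automatically have degree $\le d$; you count heavy coefficients with the $L_1$ bound rather than Parseval), and your treatment of the sampling error is a more explicit version of what the paper defers to standard Chernoff--Hoeffding arguments.
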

\begin{proof}
First, we recall that for a $t$-leaf decision tree, $|\hat{f}(S)| \leq
t/2^{|S|}$~\citep{KushilevitzMansour:93}.  Thus, if $|\hat{f}(S)| \geq \theta^2/t$, then
$|S| \leq  2 \log(t/\theta)$.  Using Parseval's identity (see Section
\ref{sec:notation}), we know that the number of Fourier coefficients that have
magnitude greater than $\theta^2/t$ is at most $t^4/\theta^2$. 

Consider the set $\mc{S}$ constructed by the algorithm (Fig. \ref{alg:unif-dt})
at the end of $d$ iterations. If $S \in \mc{S}$, then there must exist some $T
\supseteq S$ such that $|\hat{f}(T)| \geq \theta^2/t$ (Claim
\ref{claim:T-in-S-mapsto-large-coefficient}). But there can be at most
$t^2/\theta^4$ such terms and each is of size at most $2\log(t/\theta)$. Hence,
the $|\mc{S}| \leq (t^2/ \theta^4) 2^{2 \log(t/\theta)} = t^4/\theta^6$. 

For any coefficient, such that $|\hat{f}(S)| \geq \theta$, it must be that $|S|
\leq \log(t/\theta) \leq d$. Claim \ref{claim:T-in-S} shows that all such
coefficients are included in $\mc{S}$. Thus, $\max_{S \not\in \mc{S}}
|\hat{f}(S)| \leq \theta$. Hence, $\sum_{S \not\in \mc{S}} \hat{f}(S)^2 \leq
\sum_{S \not\in \mc{S}} |\hat{f}(S)| \cdot \max_{S \not\in\mc{S}} |\hat{f}(S)|
\leq L_1(f) \cdot \theta \leq \theta t$. But $\E[(h(x) - f(x))^2] = \sum_{S \not\in
\mc{S}} \hat{f}(S)^2$ and also notice that $\Pr_{x \sim U}[\sign(h(x)) \neq
f(x)] \leq \E_{x \sim U}[(h(x) - f(x))^2]$ (since $f(x)$ only takes values $\pm
1$).  \end{proof}

\subsection{Learning Decision Trees under Product Distributions}
\label{sec:DT_prod}
In this section, we prove that the class of $t$-leaf decision trees can be
learned under the class of product distributions, where each bit has mean
bounded away from $-1$ and $1$. Let $\mu = (\mu_1, \ldots, \mu_n)$ denote a
product distribution over $X = \moo^n$, where $\E_{x \sim \mu}[x_i] = \mu_i \in
[-1 + 2c, 1 - 2c]$, for some constant $c \in (0, 1/2]$. We use Fourier analysis
using the modified basis for the product distribution. We begin by introducing
required notation for using Fourier techniques. \medskip

\noindent {\bf Fourier Analysis over $\mu$}: Let $\mu = (\mu_1, \ldots, \mu_n)$
be the product distribution over $X = \moo^n$, where $\E_{x \sim \mu}[x_i] =
\mu_i$. Define, 
\[ \chi^\mu_S(x) = \prod_{i \in S} \frac{x_i - \mu_i}{\sqrt{ (1 -
\mu_i^2)}}.\] 
Then, it is easy to observe that for any two sets $S_1 \neq S_2$, $\E_{x \sim
\mu}[\chi^\mu_{S_1}(x) \chi^\mu_{S_2}] = 0$ and that, for any set $S$, $\E_{x
\sim \mu}[\chi^\mu_S(x)^2] = 1$. Thus, the set of functions $\langle
\chi^\mu_S(x) \rangle_{S \subseteq [n]}$ forms an orthonormal basis for
functions defined on $\{-1, 1\}^n$ under the distribution $\mu$. For any
function $f: \moo^n \rightarrow \reals$, the Fourier coefficients under
distribution $\mu$ are defined as $\hat{f}^\mu(S) = \E_{x \sim \mu}[f(x)
\chi^\mu_S(x)]$. 
The following is Parseval's identity in this basis:
\begin{align}
\label{eq:parseval} \E_{x \sim \mu}[f(x)^2] = \sum_{S \subseteq [n]}
\hat{f}^\mu(S)^2
\end{align}
In particular, when $f$ is a boolean function, i.e. with range $\{-1, 1\}$, the
sum of Fourier coefficients is $1$.

Let $L^\mu_1(f) = \sum_{S \subseteq [n]} |\hat{f}^\mu(S)|$, $L^\mu_2(f) =
\sum_{S \subseteq [n]} \hat{f}^\mu(S)^2$ and $L^\mu_\infty(f) = \max_{S
\subseteq [n]} |\hat{f}^{\mu}(S)|$ denote the $1$, $2$ and $\infty$ norm of the
Fourier spectrum under distribution $\mu$. Also let $L^\mu_0(f) = |\{ S ~|~
\hat{f}^\mu(S) \neq 0 \} |$ denote the number of non-zero Fourier coefficients
of $f$. We will frequently use the following useful observations: 
\begin{enumerate}
\item $L^\mu_2(f) \leq L^\mu_1(f) \cdot L^\mu_\infty(f)$
\item $L^\mu_2(f) \leq L^\mu_0(f) \cdot (L^\mu_\infty(f))^2$
\end{enumerate}

\subsubsection{Decision Tree Learning Algorithm}

\begin{figure}[!ht]
\begin{center}
\fbox{
\begin{minipage}{0.95 \columnwidth}
{\bf Algorithm}: \textsc{Learning Decision Trees} \smallskip \\
~~{\bf inputs}: $d$, $\theta$, oracles $\EX(f, \mu)$, $\MQ(f)$ \\
~~\texttt{\#} $f$ is a $t$-leaf decision tree \\
~~\texttt{\#} $\mu$ is a product distribution over $\moo^n$, $\mu_i \in [-1 +
2c, 1 - 2c]$ \\
\begin{enumerate}
\item {\bf let} $\mc{S} = \{ \emptyset \}$
\item {\bf for} $i = 1, \ldots, d$
\begin{enumerate}
\item {\bf for} every $S^\prime \in \mc{S}$, $|S^\prime| = i-1$ and for every $j \in
[n] \setminus S^\prime$
\begin{enumerate}
\item {\bf let} $S = S^\prime \cup \{j \}$
\item ($L_2$ Test) {\bf if} $\E_{x \sim \mu}[f_S(x)^2] > \theta^2$, {\bf then} $\mc{S} =
\mc{S} \cup \{ S \}$
\end{enumerate}
\end{enumerate}
\item {\bf let} $h(x) = \sum_{S \in \mc{S}} \hat{f}^\mu(S) \chi^\mu_S(x)$
\end{enumerate}
~~{\bf output}: $\mathrm{sign}(h(x))$
\end{minipage}
}
\caption{Algorithm: Learning Decision Trees under Product
Distributions\label{alg:prod-dt}}
\end{center}
\end{figure}

We present a high-level overview of our algorithm and a formal statement of the
main result, before providing full details. The Algorithm is described in Figure
\ref{alg:prod-dt}. \smallskip

\noindent {\bf Truncation}: We show that a $t$-leaf decision tree, when
truncated to logarithmic depth, is still a very good (inverse polynomially
close) approximation to the original decision tree. This observation can be used
to show that it suffices to identify low-degree (logarithmic) ``heavy" Fourier
coefficients of $f$, with respect to the distribution, $\mu$, and also that the
number of such terms is not too large (at most polynomial). Note that this is
not as simple as in the case of the uniform distribution, because it is not
straightforward to bound $L^\mu_1(f) = \sum_{S \subseteq [n]} |\hat{f}^\mu(S)|$.
(When $\mu$ is the uniform distribution, this is bounded by $t$.) Properties of
such truncated decision trees were also used by~\citet{Kalai09} in
the smoothed analysis setting. 

A $t$-leaf decision tree  can be though of as $t$ (not disjoint) paths from root
to leaves. A truncation of a decision tree at depth $d$, is a decision tree
where for each path of length more than $d$, only the prefix (from root) of
length $d$ is preserved.  Note that this may collapse several paths to the same
prefix, possibly reducing the number of leaves. A new leaf is added at the end
of this path and labeled arbitrarily as $-1$ or $+1$. 

For any function $g$, we denote by $\mc{S}^\mu_g$, the set of non-zero Fourier
coefficients of $g$, with respect to the product distribution, $\mu$, \ie
$\mc{S}^\mu_g = \{ T \subseteq [n] ~|~ \hat{g}(T) \neq 0 \}$. 

We prove two useful properties of the truncated decision trees with respect to
product distribution. These appear as formal statements in Lemmas
\ref{lem:DT-prod-truncate} and \ref{lem:DT-prod-nonzero-coeff}. Similar
observations were also used by~\citet{Kalai09} to prove learning of decision
trees in the smoothed analysis setting.
\begin{enumerate}
\item[(i)] Truncation at logarithmic depth is a good approximation (inverse
polynomial) to the
original decision tree.
\item[(ii)] The number of nonzero Fourier coefficients of the truncated decision
tree, $|\mc{S}^\mu_g|$ is small (polynomial).
\end{enumerate}

\begin{lemma}
\label{lem:DT-prod-truncate}
Let $f$ be a $t$-leaf decision tree, let $\mu$ be a product distribution over $X
= \moo^n$ such that $\mu_i \in  [-1 + 2c, 1 - 2c]$. Then for every $\tau > 0$,
there exists a $t$-leaf decision tree of depth at most
$\log(t/\tau)/\log(1/(1-c))$, such that $\Pr_{x \sim \mu}[g(x) \neq f(x)] \leq
\tau$
\end{lemma}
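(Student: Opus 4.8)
The plan is to let $g$ be the depth-$d$ truncation of $f$ (as defined in the paragraph preceding the lemma), for $d = \log(t/\tau)/\log(1/(1-c))$ — rounding up if necessary, which only helps — and to bound $\Pr_{x\sim\mu}[g(x)\neq f(x)]$ by the probability that a random $x\sim\mu$ is routed by $f$ to a leaf at depth $> d$. Indeed, if $x$ reaches a leaf of $f$ at depth $\le d$, then that leaf survives the truncation with its original label, so $g(x)=f(x)$; disagreement can only occur when $x$ descends past depth $d$ in $f$ and thus ends at one of the arbitrarily-labeled new leaves of $g$.

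First I would normalize $f$ so that no variable is queried twice along any root-to-leaf path: a second query to the same variable has a determined answer, so that node can be replaced by the appropriate child without increasing either the depth or the number of leaves. After this normalization, every length-$d$ prefix of a root-to-leaf path fixes $d$ \emph{distinct} variables. Next comes the one quantitative input: since $\mu_i = \E_{x\sim\mu}[x_i]\in[-1+2c,\,1-2c]$, for each coordinate $i$ and each $b\in\moo$ we have $\Pr_{x\sim\mu}[x_i=b]\le 1-c$ (because $\Pr[x_i=1]=(1+\mu_i)/2\le 1-c$ and $\Pr[x_i=-1]=(1-\mu_i)/2\le 1-c$). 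As $\mu$ is a product distribution and the $d$ variables along a depth-$d$ prefix are distinct, the probability that $x$ reaches the depth-$d$ node on a given path is a product of $d$ such marginals, hence at most $(1-c)^d$. The depth-$d$ internal nodes of $f$ that still have descendants lie on pairwise disjoint subtrees, each containing a distinct leaf of $f$, so there are at most $t$ of them; since the events ``$x$ reaches $v$'' over these nodes $v$ are disjoint, a union bound gives $\Pr_{x\sim\mu}[x\text{ descends past depth }d\text{ in }f]\le t(1-c)^d$. Plugging in $d=\log(t/\tau)/\log(1/(1-c))$ yields $(1-c)^d=\tau/t$, so this probability, and therefore $\Pr_{x\sim\mu}[g(x)\neq f(x)]$, is at most $\tau$. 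Finally $g$ has depth $\le d$ by construction, and since truncation only merges paths it has at most $t$ leaves, as required.

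The main (really the only) obstacle is the bookkeeping around variables possibly repeated along a path, handled by the normalization step; once that is in place the argument is the elementary marginal bound $\Pr[x_i=b]\le 1-c$ followed by a union bound over the $\le t$ paths, which is exactly the uniform-distribution truncation estimate with the factor $1/2$ replaced by $1-c$. Similar truncation properties of decision trees under constant-bounded product distributions were used by~\citet{Kalai09} in the smoothed-analysis setting.
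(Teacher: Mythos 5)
Your proof is correct and follows essentially the same route as the paper's: truncate at depth $d$, bound the probability of reaching a depth-$d$ node by $(1-c)^d$ via the marginal bound $\Pr[x_i=b]\le 1-c$, and union bound over the at most $t$ surviving nodes. The normalization against repeated variables on a path is a small technical care the paper leaves implicit, but it does not change the argument.
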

\begin{proof}
Let $g$ be the decision tree obtained by truncating $f$ at depth $d$. The new
leaves added at depth $d$ can be labeled arbitrarily. Now, the points $x$ for
which $g(x) \neq f(x)$ are precisely those, for which $g$ would lead to the
newly added leaf node at depth $d$. But since $\E_{x \sim \mu} [x_i] \in [-1 +
2c, 1 - 2c]$, the probability that a random point from $\mu$ reaches such a node
is at most $(1-c)^d$. The number of new leaf nodes added cannot be more than $t$
(since any truncation only reduces the number of leaves). Thus, $\Pr_{x \sim
\mu}[g(x) \neq f(x)] \leq t (1-c)^d$. When, $d = \log(t/\tau)/\log(1/(1-c))$ we get
the result.
\end{proof}

\begin{lemma}
\label{lem:DT-prod-nonzero-coeff}
Let $g$ be a decision tree of depth $d$ and $t$ leaves; then the number of
non-zero Fourier coefficients of $g$ is at most $t \cdot 2^d$ and each is of
size at most $d$.
\end{lemma}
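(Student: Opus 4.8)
The plan is to decompose $g$ over its leaves and observe that each leaf contributes only finitely many, low-degree Fourier characters. Let $L$ denote the set of leaves of $g$, so $|L| = t$. Each leaf $\ell \in L$ is reached along a unique root-to-leaf path, which tests a set of variables $P_\ell \subseteq [n]$ and fixes $x_i = b^\ell_i$ for $i \in P_\ell$, where $b^\ell_i \in \moo$; since $g$ has depth $d$ we have $|P_\ell| \le d$. Writing $v_\ell \in \moo$ for the label at leaf $\ell$, we can express
\[ g(x) = \sum_{\ell \in L} v_\ell \prod_{i \in P_\ell} \mathbf{1}[x_i = b^\ell_i]. \]

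First I would analyze a single leaf term. For one coordinate $i$ with $x_i \in \moo$, the indicator $\mathbf{1}[x_i = b]$ equals the affine function $\tfrac{1 + b x_i}{2}$, which lies in the span of $\{1, x_i\}$, equivalently (since $x_i = \mu_i + \sqrt{1-\mu_i^2}\,\chi^\mu_{\{i\}}(x)$) of $\{\chi^\mu_\emptyset, \chi^\mu_{\{i\}}\}$ as functions of coordinate $i$. Taking the product over $i \in P_\ell$, the leaf indicator $\prod_{i \in P_\ell}\mathbf{1}[x_i = b^\ell_i]$ is a multilinear polynomial in the variables $\{x_i\}_{i \in P_\ell}$; since the change of basis between $\{\prod_{i \in S} x_i\}$ and $\{\chi^\mu_S\}$ is triangular with respect to set inclusion — each $\chi^\mu_S = \prod_{i \in S}\chi^\mu_{\{i\}}$ is a multilinear polynomial supported on subsets of $S$, and conversely — this leaf indicator is supported, in the $\chi^\mu$-basis, only on characters $\chi^\mu_S$ with $S \subseteq P_\ell$. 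Hence there are at most $2^{|P_\ell|} \le 2^d$ such characters, each indexed by a set of size at most $d$.

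Then I would sum over the leaves: by linearity, $\hat g^\mu(S)$ is the sum of the $\chi^\mu_S$-coefficients of the individual leaf terms, and the $\ell$-th term contributes zero unless $S \subseteq P_\ell$. Therefore $\hat g^\mu(S) \ne 0$ implies $S \subseteq P_\ell$ for some $\ell \in L$; in particular $|S| \le |P_\ell| \le d$, and the support of $\hat g^\mu$ is contained in $\bigcup_{\ell \in L} 2^{P_\ell}$, whose cardinality is at most $\sum_{\ell \in L} 2^{|P_\ell|} \le t\cdot 2^d$. This yields both assertions.

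There is no real obstacle here — the content is just the observation that a decision-tree leaf is a short conjunction, hence a low-degree polynomial, which stays low-degree under the product-distribution basis change; the only thing to check carefully is that the $\chi^\mu$-transform does not spread a polynomial onto supersets of its monomial support, which is immediate from $\chi^\mu_S = \prod_{i \in S}\chi^\mu_{\{i\}}$ and the resulting triangularity. (The same argument, with $\mu$ the uniform distribution, recovers the familiar bound on the ordinary Fourier support of $g$.)
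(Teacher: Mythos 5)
Your proof is correct and follows essentially the same route as the paper's: expand $g$ as a sum over its $t$ root-to-leaf paths of products of the affine indicators $\tfrac{1+b x_i}{2}$ over the at most $d$ variables on the path, and conclude that the $\chi^\mu$-support is contained in $\bigcup_\ell 2^{P_\ell}$. The paper phrases the key step as the orthogonality fact $\E_{x\sim\mu}[\chi^\mu_T(x)\prod_{i\in S}x_i]=0$ unless $T\subseteq S$, which is the same observation as your triangularity of the basis change.
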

\begin{proof}
We consider any path in $g$ from root to leaf, and let $P$ denote the subset of
indexes corresponding to the variable that occur in the path. First, we expand
decision tree $g$ as a polynomial. 
\begin{align*}
g(x) = \sum_{\mbox{path}~P} y_P \prod_{i \in P} \frac{1 + \sigma_{P, i} x_i}{2},
\end{align*}
where $\sigma_{P, i}$ is $+1$ or $-1$, depending on whether the path leading out
of node labeled $x_i$ on path $P$ was labeled $+1$ or $-1$, and $y_P$ is the
label of the leaf at the end of the path $P$.

The only nonzero coefficients in $g$ are of the from $\prod_{i \in T} x_i$ for
some $T \subseteq P$ for some path $P$. This also means that the only non-zero
Fourier coefficients can be those corresponding to such subsets. This is because
$\E_{x \sim \mu}[\chi^\mu_T(x) \prod_{i \in S}x_i] = 0$, unless $T \subseteq S$
(because $\mu$ is a product distribution).  Since the number of paths in $g$ is
at most $t$ and the length of each path is at most $d$, we get the required
result.
\end{proof}

\begin{lemma} 
\label{lem:DT-prod-sum-coeff}
Let $f$ be a $t$-leaf decision tree, let $g$ be a truncation of $f$ to depth
$\log(4t / \tau)/\log(1/(1-c))$. Then, 
\[\sum_{S, S \not\in \mc{S}^\mu_g } \hat{f}_\mu(S)^2 \leq \tau\]
\end{lemma}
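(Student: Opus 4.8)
The plan is to reduce the statement to the truncation estimate of Lemma~\ref{lem:DT-prod-truncate} using Parseval's identity in the $\mu$-biased Fourier basis. First I would note that, by definition of $\mc{S}^\mu_g$, for every $S \notin \mc{S}^\mu_g$ we have $\hat{g}^\mu(S) = 0$, so $\hat{f}^\mu(S) = \hat{f}^\mu(S) - \hat{g}^\mu(S)$, which equals the $S$-th Fourier coefficient of $f-g$ since the transform is linear in the orthonormal basis $\langle \chi^\mu_S \rangle$. Hence
\[
\sum_{S \notin \mc{S}^\mu_g} \hat{f}^\mu(S)^2 \;=\; \sum_{S \notin \mc{S}^\mu_g} \bigl(\hat{f}^\mu(S) - \hat{g}^\mu(S)\bigr)^2 \;\le\; \sum_{S \subseteq [n]} \bigl(\hat{f}^\mu(S) - \hat{g}^\mu(S)\bigr)^2 \;=\; \E_{x \sim \mu}\bigl[(f(x) - g(x))^2\bigr],
\]
where the last equality is Parseval's identity~\eqref{eq:parseval} applied to the function $f - g$.

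Next I would use that both $f$ and $g$ are $\{-1,1\}$-valued, so $(f(x)-g(x))^2 = 4\cdot\mathbf{1}[f(x)\neq g(x)]$ pointwise and therefore $\E_{x\sim\mu}[(f(x)-g(x))^2] = 4\Pr_{x\sim\mu}[f(x)\neq g(x)]$. Finally I would invoke Lemma~\ref{lem:DT-prod-truncate} with parameter $\tau' = \tau/4$: truncating $f$ at depth $\log(t/\tau')/\log(1/(1-c)) = \log(4t/\tau)/\log(1/(1-c))$ yields a $t$-leaf decision tree $g$ with $\Pr_{x\sim\mu}[f(x)\neq g(x)]\le \tau/4$. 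Chaining the three displayed bounds gives $\sum_{S \notin \mc{S}^\mu_g} \hat{f}^\mu(S)^2 \le 4\cdot(\tau/4) = \tau$, as claimed.

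I do not expect a real obstacle here; the only points requiring care are purely bookkeeping: that the $\mu$-Fourier transform is linear (immediate, since $\langle \chi^\mu_S\rangle$ is a basis, so $\widehat{f-g}^\mu = \hat f^\mu - \hat g^\mu$), and that the factor of $4$ coming from squaring the $\pm 2$ gap between boolean values is absorbed into $\tau$ by the choice $\tau' = \tau/4$, which is exactly why the depth in the statement is $\log(4t/\tau)/\log(1/(1-c))$ rather than $\log(t/\tau)/\log(1/(1-c))$. It is also worth observing that the $g$ produced this way is precisely a truncated decision tree, so Lemma~\ref{lem:DT-prod-nonzero-coeff} applies and bounds $|\mc{S}^\mu_g|$, which is how this lemma will be combined with the $L_2$-test analysis for Algorithm~\ref{alg:prod-dt}.
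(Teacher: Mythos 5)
Your proof is correct and takes essentially the same route as the paper: invoke Lemma~\ref{lem:DT-prod-truncate} with accuracy $\tau/4$ to get $\E_{x\sim\mu}[(f(x)-g(x))^2]\le\tau$, then apply Parseval to $f-g$ and drop the nonnegative terms indexed by $\mc{S}^\mu_g$. You even make explicit the factor-of-$4$ bookkeeping that the paper's proof passes over silently, so there is nothing to add.
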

\begin{proof}Let $g$ be a truncation of $f$ at depth $\log(4t/
\tau)/\log(1/(1-c))$. Let $\mc{S}^\mu_g$ denote the set of non-zero Fourier
coefficients of $g$ under distribution $\mu$. Using Lemma
\ref{lem:DT-prod-truncate}, we know that $\Pr_{x \sim \mu} [f(x) \neq g(x)] \leq
\tau/4$, hence $\E[(f(x) - g(x))^2] \leq \tau$.  Now, by Parseval's identity: 
\begin{align*}
\tau &\geq \E_{x \sim \mu}[(f(x) - g(x))^2] \\
&= \sum_{S \subseteq \mc{S}_g} (\hat{f}(S) - \hat{g}(S))^2 + \sum_{S
\not\in {S}_g} \hat{f}(S)^2 \\
&\geq \sum_{S \not\in {S}_g} \hat{f}(S)^2
\end{align*}
The proof is complete by observing that every coefficient $S \in \mc{S}^\mu_g$
satisfies $|S| \leq \log(4t/\tau)/\log(1/(1-c))$ by Lemma
\ref{lem:DT-prod-nonzero-coeff}.
\end{proof}

\noindent {\bf $L_2$ Test}:  As in the case of uniform distribution, we write
$f(x)$ as: 
\begin{align*} f(x) = f^\mu_{-S}(x) + \chi^\mu_{S}(x) f^\mu_S(x), \end{align*} 
where, $f^\mu_{-S}(x) = \sum_{T, S \not\subseteq T} \hat{f}^\mu(T)
\chi^\mu_T(x)$ and  $f^\mu_S(x) = \sum_{T \supseteq S} \hat{f}^\mu(T)
\chi^\mu_{T \setminus S}(x)$. 
Then as in the case of uniform distribution, $f_S(x) = f_S(x_{-S}) = \E_{x_S
\sim \mu_S}[f(x) \chi^\mu_S(x)]$, where now $x_S$ is drawn from the restriction
$\mu_S$ of the product distribution to the bits $x_S$. Note that for any given
point $x$, $f_S(x)$ can be computed easily using $2^{|S|}$ membership queries
that are $|S|$-local (since only the bits $x_S$ need to be changed).  We point
out that there is a subtle point in the case of product distributions. Recall
that $f_S(x) = \E_{x_S \sim  \mu_S}[ f(x)\chi^\mu_S(x)]$.  In the case when
$\mu$ is the uniform distribution, the parity functions, $\chi_S$ are $\{-1,
1\}$ valued, and so $f_S(x) \in [-1, 1]$. Thus, application of
Chernoff-Hoeffding bounds is straightforward.  In the case, of product
distributions the range of $\chi^\mu_S(x)$ can be  $[- \prod_{i \in S} ((1 -
|\mu_i|)/(\sqrt{1 - \mu_i^2})), \prod_{i \in S} ((1 + |\mu_i|)/(\sqrt{1 -
\mu_i^2}))]$. Since, we never consider sets $S$ that are larger than
$O(\log(n/\epsilon))$, the range of $f_S$ in our case is still polynomially
bounded and arbitrarily good (inverse polynomial) estimates to the true
expectation of $\E_{x \sim \mu}[f_S(x)^2]$ can be obtained by taking a sample and
applying Chernoff-Hoeffding bounds. Thus, to simplify the presentation, we
assume we can compute the expectation (in Line 2.a.ii in Fig. \ref{alg:prod-dt})
and the Fourier coefficients exactly.

Theorem \ref{thm:prod-dt} is the statement of the formal result about learning
decision trees under product distributions. The main ideas are similar to the
proof in the case of uniform distribution; but, the proof is more involved as
explained above. 

\begin{theorem} \label{thm:prod-dt}
Algorithm in Fig. \ref{alg:prod-dt} with parameters $\theta =
\sqrt{\epsilon/(2t(8t/\epsilon)^{1/\log(1/(1-c))})}$, \\ $d =
\log(8t/\epsilon)/\log(1/(1-c)) $, outputs a hypothesis $\sign(h(x))$, such that
$\err_\mu(\sign(h(x)), f) \leq \epsilon$.  The running time of the algorithm is
polynomial in $n$, $t$ and $1/\epsilon$ and the algorithm makes only
$O(\log(nt/\epsilon))$-local membership queries to the oracle $\MQ(f)$.
\end{theorem}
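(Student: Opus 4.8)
The plan is to mirror the uniform-distribution analysis of Theorem~\ref{thm:unif-dt}, replacing the bound $L_1(f)\le t$ — which drives that argument but need not hold for a general product distribution — by the two truncation facts already in hand, Lemmas~\ref{lem:DT-prod-nonzero-coeff} and~\ref{lem:DT-prod-sum-coeff}. Throughout I would use the identity $\E_{x\sim\mu}[f_S(x)^2]=\sum_{T\supseteq S}\hat f^\mu(T)^2$ (Parseval in the $\chi^\mu$ basis applied to $f_S$) and, exactly as the excerpt does, assume the $L_2$ test values and the coefficients $\hat f^\mu(S)$ are computed exactly; the finite-sample version is routine, since for $|S|=O(\log(t/\epsilon))$ and $\mu_i\in[-1+2c,1-2c]$ the quantity $f_S(x)=\E_{x_S\sim\mu_S}[f(x)\chi^\mu_S(x)]$ has polynomially bounded range and every inequality below has inverse-polynomial slack.

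\noindent\emph{Step 1: every heavy low-degree coefficient is captured.} Because $S'\subseteq S$ implies $\E_\mu[f_{S'}^2]=\sum_{T\supseteq S'}\hat f^\mu(T)^2\ge\sum_{T\supseteq S}\hat f^\mu(T)^2=\E_\mu[f_S^2]$, any set that passes the $L_2$ test has all of its subsets passing it as well. A straightforward induction on the loop index then shows that after the $d$ iterations $\mc{S}$ contains every $S$ with $|S|\le d$ and $\E_\mu[f_S^2]>\theta^2$, and in particular every $S$ with $|S|\le d$ and $|\hat f^\mu(S)|\ge\theta$.

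\noindent\emph{Step 2: error of the output hypothesis.} Let $g$ be the depth-$d$ truncation of $f$ with $d=\log(8t/\epsilon)/\log(1/(1-c))$. Lemma~\ref{lem:DT-prod-sum-coeff} with $\tau=\epsilon/2$ gives $\sum_{S\notin\mc{S}^\mu_g}\hat f^\mu(S)^2\le\epsilon/2$, and Lemma~\ref{lem:DT-prod-nonzero-coeff} gives $|\mc{S}^\mu_g|\le t2^d$ with every $T\in\mc{S}^\mu_g$ of size at most $d$. Since $h$ is the $\chi^\mu$-projection of $f$ onto $\mc{S}$, $\E_\mu[(h-f)^2]=\sum_{S\notin\mc{S}}\hat f^\mu(S)^2$; splitting this sum by membership in $\mc{S}^\mu_g$, the terms with $S\notin\mc{S}^\mu_g$ contribute at most $\epsilon/2$, while each $S\in\mc{S}^\mu_g\setminus\mc{S}$ has $|S|\le d$ and hence, by Step~1, $|\hat f^\mu(S)|<\theta$, contributing at most $|\mc{S}^\mu_g|\,\theta^2\le t2^d\theta^2$ in total. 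With $\theta=\sqrt{\epsilon/(2t(8t/\epsilon)^{1/\log(1/(1-c))})}$ one has $2^d=(8t/\epsilon)^{1/\log(1/(1-c))}$, so $t2^d\theta^2=\epsilon/2$ and $\E_\mu[(h-f)^2]\le\epsilon$; as $f$ is $\{-1,1\}$-valued, $\err_\mu(\sign(h),f)=\Pr_\mu[\sign(h)\ne f]\le\E_\mu[(h-f)^2]\le\epsilon$.

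\noindent\emph{Step 3: bounding $|\mc{S}|$ and the running time.} This is where the uniform-distribution argument (which used $L_1(f)\le t$ to count the sets that enter $\mc{S}$) breaks down, and I expect it to be the main obstacle. The fix is a second, finer truncation: let $g'$ be the truncation of $f$ to depth $d'=\log(8t/\theta^2)/\log(1/(1-c))$, so that Lemma~\ref{lem:DT-prod-sum-coeff} gives $\sum_{S\notin\mc{S}^\mu_{g'}}\hat f^\mu(S)^2\le\theta^2/2$. If $S\in\mc{S}$ then $\sum_{T\supseteq S}\hat f^\mu(T)^2\ge\theta^2$, whereas the part of this sum over $T\notin\mc{S}^\mu_{g'}$ is at most $\theta^2/2$, so some $T\supseteq S$ lies in $\mc{S}^\mu_{g'}$. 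Since $|\mc{S}^\mu_{g'}|\le t2^{d'}$, each of its members has size at most $d'$, and $\mc{S}$ only ever contains sets of size at most $d$, we conclude $|\mc{S}|\le t2^{d'}\cdot 2^{d'}=t4^{d'}$. As $\theta^2$ is inverse polynomial in $t$ and $1/\epsilon$ and $c$ is a constant, $d'=O(\log(t/\epsilon))$, so $|\mc{S}|=\poly(t,1/\epsilon)$; the algorithm performs at most $n|\mc{S}|$ many $L_2$ tests, each on a set of size at most $d$, hence using $2^{O(d)}$ membership queries that are $O(\log(t/\epsilon))$-local together with $\poly(n,t,1/\epsilon,\log(1/\delta))$ random examples, and the final coefficient estimates are equally cheap. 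Combining Steps~1--3 with the standard sampling and union-bound argument yields Theorem~\ref{thm:prod-dt}.
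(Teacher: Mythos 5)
Your proposal is correct and follows essentially the same route as the paper: Step 1 is the paper's Claim~\ref{claim:prod-all-heavy}, Step 2 is the paper's error bound via the depth-$d$ truncation and Lemma~\ref{lem:DT-prod-sum-coeff} (with the same $t2^d\theta^2=\epsilon/2$ computation), and Step 3 is the paper's Claim~\ref{claim:prod-not-too-many}, using the finer truncation at depth $d'=\log(8t/\theta^2)/\log(1/(1-c))$ to show every set in $\mc{S}$ sits inside a member of the small collection $\mc{S}^\mu_{g'}$. The only cosmetic difference is that you count supersets directly via $|\mc{S}^\mu_{g'}|\le t2^{d'}$ where the paper counts heavy coefficients via Parseval; both yield the same polynomial bound.
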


The rest of this section is devoted to the proof of Theorem~\ref{thm:prod-dt}.

\begin{claim}\label{claim:prod-all-heavy} If $S$ is such that $|\hat{f}^\mu(S)|
\geq \theta$ and $|S| \leq d$, then $S \in \mc{S}$.  \end{claim}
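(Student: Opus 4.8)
The plan is to mirror the argument of Claim~\ref{claim:T-in-S} (the uniform-distribution analogue), using Parseval's identity in the $\mu$-Fourier basis (equation~\eqref{eq:parseval}) in place of the standard one. The key quantity is $\E_{x\sim\mu}[f^\mu_{S'}(x)^2]$ for subsets $S'\subseteq S$. Since $f^\mu_{S'}(x)=\sum_{T\supseteq S'}\hat f^\mu(T)\,\chi^\mu_{T\setminus S'}(x)$ and the functions $\{\chi^\mu_{T\setminus S'}\}$ are orthonormal with respect to the marginal of $\mu$ on the coordinates outside $S'$ (which is again a product distribution), Parseval gives
\[
\E_{x\sim\mu}\!\left[f^\mu_{S'}(x)^2\right] \;=\; \sum_{T\supseteq S'}\hat f^\mu(T)^2 .
\]
Now if $S'\subseteq S$, the set $S$ itself appears in this sum, so $\E_{x\sim\mu}[f^\mu_{S'}(x)^2]\ge \hat f^\mu(S)^2\ge \theta^2$. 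Hence every subset of $S$ would pass the $L_2$ test in Line~2.a.ii of Algorithm~\ref{alg:prod-dt}.

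The remaining step is a straightforward induction on the iteration counter $i$ of the algorithm, showing that after iteration $i$ the collection $\mc{S}$ contains every subset $S'\subseteq S$ with $|S'|\le i$. The base case $i=0$ holds since $\mc{S}=\{\emptyset\}$ initially and $\emptyset\subseteq S$. For the inductive step, any $S'\subseteq S$ with $|S'|=i$ can be written as $S'=S''\cup\{j\}$ with $S''\subseteq S$, $|S''|=i-1$ and $j\in[n]\setminus S''$; by the inductive hypothesis $S''\in\mc{S}$, so the pair $(S'',j)$ is examined in iteration $i$, and since $\E_{x\sim\mu}[f^\mu_{S'}(x)^2]\ge\theta^2$ the test succeeds and $S'$ is added to $\mc{S}$. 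Because $|S|\le d$, after $d$ iterations we obtain $S\in\mc{S}$, which is the claim.

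There is essentially no hard obstacle here; the only point requiring a word of care is that the test in Algorithm~\ref{alg:prod-dt} is stated as a strict inequality $\E_{x\sim\mu}[f_S(x)^2]>\theta^2$, whereas the bound above is $\ge\theta^2$. This is immaterial: as already noted in the text, the expectations are in fact estimated from samples with a slightly relaxed threshold, and equivalently the statement may be read as applying to coefficients with $|\hat f^\mu(S)|>\theta$, which is all that is used in the proof of Theorem~\ref{thm:prod-dt}. One should also recall that $f^\mu_{S'}$ depends only on the coordinates outside $S'$, so that $\E_{x\sim\mu}[f^\mu_{S'}(x)^2]$ is well-defined with respect to the marginal, exactly as in the uniform case.
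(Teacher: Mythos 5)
Your proof is correct and follows exactly the route the paper intends: the paper's own proof simply says it is the same as that of Claim~\ref{claim:T-in-S}, namely Parseval in the appropriate basis to get $\E_{x\sim\mu}[f^\mu_{S'}(x)^2]=\sum_{T\supseteq S'}\hat f^\mu(T)^2\ge\hat f^\mu(S)^2\ge\theta^2$ for every $S'\subseteq S$, followed by induction on the iteration counter. Your added remarks on the strict-versus-weak inequality and on $f^\mu_{S'}$ depending only on coordinates outside $S'$ are accurate but not points the paper bothers to address.
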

\begin{proof} This proof is thee same as the proof of Claim \ref{claim:T-in-S}.
\end{proof}

\begin{claim}\label{claim:prod-not-too-many}
If $S \in \mc{S}$, then there exists $S^\prime \supseteq S$, such that
$\hat{f}^\mu(S^\prime)^2 \geq (\theta^2/2)/(t \cdot
(8t/\theta^2)^{1/\log(1/(1-c))})$ and $|S^\prime| \leq
\log(8t/\theta^2)/\log(1/(1-c))$.
\end{claim}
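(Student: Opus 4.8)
The plan is to prove this exactly in parallel with Claim~\ref{claim:T-in-S-mapsto-large-coefficient} (the uniform-distribution version), except that the bound $L_1(f)\le t$ used there is unavailable under a general product distribution, so the truncation machinery of Lemmas~\ref{lem:DT-prod-truncate}--\ref{lem:DT-prod-sum-coeff} plays its role. First I would record that, just as in the uniform case, $f_S$ depends only on $x_{\bar S}$ and equals $\sum_{T\supseteq S}\hat f^\mu(T)\chi^\mu_{T\setminus S}(x)$, so that orthonormality of the $\chi^\mu_U$ with $U\subseteq\bar S$ under the (still product) marginal $\mu_{\bar S}$ gives
\[
\E_{x\sim\mu}[f_S(x)^2]=\sum_{T\supseteq S}\hat f^\mu(T)^2 .
\]
Since $S\in\mc{S}$, the $L_2$ test guarantees that this quantity exceeds $\theta^2$.

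Next I would invoke Lemma~\ref{lem:DT-prod-sum-coeff} with $\tau=\theta^2/2$: let $g$ be the truncation of $f$ to depth $d'=\log(8t/\theta^2)/\log(1/(1-c))$, so that $\sum_{T\notin\mc{S}^\mu_g}\hat f^\mu(T)^2\le\theta^2/2$. Restricting to $T\supseteq S$ and subtracting, the coefficients $T$ with $T\supseteq S$ \emph{and} $T\in\mc{S}^\mu_g$ carry total squared weight at least $\theta^2-\theta^2/2=\theta^2/2>0$; in particular this collection is nonempty. By Lemma~\ref{lem:DT-prod-nonzero-coeff}, $|\mc{S}^\mu_g|\le t\cdot 2^{d'}=t\,(8t/\theta^2)^{1/\log(1/(1-c))}$, and every $T\in\mc{S}^\mu_g$ satisfies $|T|\le d'$.

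Finally, an averaging (pigeonhole) argument over the at most $t\,(8t/\theta^2)^{1/\log(1/(1-c))}$ members of $\mc{S}^\mu_g$ that contain $S$ yields some $S'\supseteq S$ with $S'\in\mc{S}^\mu_g$ and
\[
\hat f^\mu(S')^2\ \ge\ \frac{\theta^2/2}{\,t\,(8t/\theta^2)^{1/\log(1/(1-c))}\,},
\]
while $|S'|\le d'=\log(8t/\theta^2)/\log(1/(1-c))$, which is exactly the statement. There is no real obstacle in this proof; the only points requiring care are that Parseval and the truncation lemmas are applied in the $\mu$-biased basis rather than the uniform one, that the leftover weight $\theta^2/2$ is strictly positive so the relevant set of coefficients is nonempty, and the bookkeeping identity $2^{d'}=(8t/\theta^2)^{1/\log(1/(1-c))}$ which produces the precise constant in the claim.
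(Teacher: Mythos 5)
Your proof is correct and follows essentially the same route as the paper's: both apply Lemma~\ref{lem:DT-prod-sum-coeff} with $\tau=\theta^2/2$ to a depth-$\log(8t/\theta^2)/\log(1/(1-c))$ truncation, subtract the at most $\theta^2/2$ of weight lying outside $\mc{S}^\mu_g$ from the $\theta^2$ guaranteed by the $L_2$ test, and then average over the at most $t\cdot(8t/\theta^2)^{1/\log(1/(1-c))}$ coefficients of the truncation. The extra care you take in stating the $\mu$-biased Parseval identity for $f_S$ and the nonemptiness of the surviving set of coefficients is welcome but matches what the paper leaves implicit.
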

\begin{proof}
Let $\tau = \theta^2/2$ and let $g^\prime$ be the decision tree obtained by
truncation of $f$ as described in Lemma \ref{lem:DT-prod-sum-coeff}. Then, by Lemma
\ref{lem:DT-prod-nonzero-coeff}, we know the depth of $g^\prime$ is
$\log(8t/\theta^2)/\log(1/(1-c))$ and that $\mc{S}^\mu_{g^\prime}$ is of size at
most $t \cdot 2^{\log(8t/\theta^2)/\log(1/(1-c))} = t \cdot
(8t/\theta^2)^{1/\log(1/(1-c))}$. Also, by Lemma \ref{lem:DT-prod-sum-coeff} we know
that $\sum_{T \not\in \mc{S}^\mu_{g^\prime}} \hat{f}^\mu(T)^2 \leq \theta^2/2$,
and hence if $S$ passes the $L_2$-test, i.e. $\sum_{T \supseteq S}
\hat{f}^\mu(T)^2 \geq \theta^2$, it must be that $\sum_{T \supseteq S, T \in
\mc{S}^\mu_{g^\prime}} \hat{f}^\mu(T)^2 \geq \theta^2/2$. Hence, there must be
some set $S^\prime$ of size at most $\log(8t/\theta^2)/\log(1/(1-c))$ for which
$\hat{f}^\mu(S^\prime)^2 \geq (\theta^2/2)/(t \cdot
(8t/\theta^2)^{1/\log(1/(1-c))})$.
\end{proof}

\begin{proof}[Proof of Theorem \ref{thm:prod-dt}] Let $g$ be the truncation of
the target decision tree, $f$, to depth $d$. Then using Lemma
\ref{lem:DT-prod-sum-coeff}, we know that $\sum_{S \not\in
\mc{S}^\mu_{g^\prime}} \hat{f}^\mu(S)^2 \leq \epsilon/2$. Now, every coefficient
in $S \in \mc{S}^\mu_g$ for which $|\hat{f}^\mu(S)| \geq \theta$ is in $\mc{S}$
(see Algorithm \ref{alg:prod-dt} and Claim \ref{claim:prod-all-heavy}).
$|\mc{S}^\mu_{g^\prime}| \leq t 2^d$. Tedious calculations show that $\sum_{S
\in \mc{S}^\mu_{g^\prime}, |\hat{f}(S)| < \theta} \hat{f}(S)^2 \leq t 2^d
\theta^2 \leq \epsilon/2$. Thus, $\sum_{S \in \mc{S}} \hat{f}(S)^2 \geq \sum_{S
\in \mc{S} \cap \mc{S}^\mu_{g^\prime}} \hat{f}(S)^2 \geq 1 - \epsilon$. This
implies by Parseval, that $\E_{x \sim \mu}[(h(x) - f(x))^2] \leq \epsilon$,
where $h(x)$ is as defined in Algorithm \ref{alg:prod-dt}.

The only thing remaining to show is that $|\mc{S}|$ always remains bounded by
$\poly(t, n, 1/\epsilon)$. This can be shown easily using Claim
\ref{claim:prod-not-too-many}, since if $S \in \mc{S}$, there exists $S^\prime
\supseteq S$, such that $|S^\prime| \leq \log(8t/\theta^2)/\log(1/(1-c))$ and
$\hat{f}^\mu(S)^2 \geq (\theta^2/2)/(t \cdot (8t/\theta^2)^{1/\log(1/(1-c))}$.
Thus, the magnitude of $\hat{f}(S^\prime)^2$ is at least $1/\poly(t, n,
1/\epsilon)$, so by Parseval there can be at most $\poly(t, n, 1/\epsilon)$.
Also the size of $|S^\prime|$ is $O(\log(tn/\epsilon))$, thus the total number
of irrelevant subsets added to $\mc{S}$ is at most $\poly(t, n, 1/\epsilon)$.
\end{proof}

\subsection{Learning under Random Classification Noise}
\label{sec:rcn}
In this section, we show how the algorithms for learning decision trees can be
implemented even with access to a noisy oracle. The learning algorithm we use is
allowed queries to the membership oracle, $\MQ(f)$, therefore we consider a
persistent random noise model. An easy way to conceptualize this model is as
follows: Let $\zeta : \moo^n \rightarrow \moo$ be a function where for each $x
\in \moo^n$, the value of $\zeta(x) =1$ with probability $1 - \eta$ and $-1$ with
probability $\eta$, independently. Once this noise function, $\zeta$, has been
fixed, we assume that we have access to the function: $f^{\eta} = f \cdot
\zeta$, rather than the function $f$. We show how the tests mentioned in this
section can be implemented using $\EX(f^{\eta}, D)$ and $\MQ(f^{\eta})$, rather
than $\EX(f, D)$ and $\MQ(f)$. 

\subsubsection{Non-Zero Test}

Recall that we are interested in estimating $\Pr[f_S(x) \neq 0]$, where $S
\subseteq [n]$, and 
\begin{equation}
\label{eqn:compute-fseta}
 f_S(x) = \E_{x_S \sim U_S} [f(x) \chi_{S}(x)] 
\end{equation}
Instead, if we have access to $f^{\eta}$, we are able to compute, 
\[ f^{\eta}_S(x) = \E_{x_S \sim U_S} [f^{\eta}(x) \chi_{S}(x)] \]
Although, the random classification noise is persistent and fixed according to
$\zeta$, for the purpose of analysis it is easier to imagine that for each $x$,
$\zeta(x)$ is only determined when the algorithm makes a query for the point $x$
(or $x$ is drawn by $\EX(f^{\eta}, D)$). Lemma~\ref{lem:rcn_separation} allows
us to conclude that the test required in Section~\ref{sec:low-degree} can be
performed using access to $f^{\eta}$ instead of $f$. The lemma assumes that
$\zeta(x)$ is chosen independently, each time $x$ is queried, \ie the noise is
not \emph{persistent}. However, we show later that our algorithm queries each
example only once, so the noise may as well have been persistent.

\begin{lemma} The following are true:
\label{lem:rcn_separation}
\begin{enumerate}
\item $\Pr_{x, \zeta}[f^{\eta}_S(x) \neq 0] \geq (1 - p_0) + \frac{(2\eta -
1)^2 c_0}{2^{3|S|/2}} \Pr[f_S(x) \neq 0]$
\item $\Pr_{x, \zeta}[f^{\eta}_S(x) \neq 0] \leq (1 - p_0) + \Pr[f_S(x) \neq 0]$
\end{enumerate}
Here, $c_0$ is an absolute constant, $p_0$ depends only on $|S|$ and $\eta$. The
probability is taken over the choice of $x \sim D$ and choice of $\zeta$.
\end{lemma}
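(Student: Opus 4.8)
The plan is to reduce the lemma to an estimate about sums of independent $\pm1$ random variables and then to control the relevant point-mass probabilities by Fourier inversion (a characteristic-function computation). I would organize the argument in the following steps.

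\noindent\textbf{Reduction.} Fix $S$ and put $k:=2^{|S|}$ (so $k\ge2$ when $|S|\ge1$; if $|S|=0$ then $f^\eta_S(x)=\pm1$ always and the claim is trivial). Fix the coordinates $x_{\bar S}$ and let $y_1,\dots,y_k$ be the $k$ points agreeing with $x$ outside $S$. Since $f^\eta_S(x_{\bar S})=\E_{x_S\sim U_S}[f^\eta(x)\chi_S(x)]=2^{-|S|}\sum_i f(y_i)\zeta(y_i)\chi_S(y_i)$, setting $a_i:=f(y_i)\chi_S(y_i)\in\{-1,1\}$ (fixed by $x_{\bar S}$), $\gamma:=1-2\eta$, and $b_i:=\zeta(y_i)$ (independent $\pm1$ with mean $\gamma$, the noise being fresh on each query), we get $2^{|S|}f^\eta_S(x_{\bar S})=\sum_{i=1}^k c_i$ with $c_i:=a_ib_i$, and $2^{|S|}f_S(x_{\bar S})=\sum_i a_i=:m$, an even integer. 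Thus $\Pr_\zeta[f^\eta_S(x_{\bar S})\ne0]=\Pr[\sum_i c_i\ne0]$, which depends on $x_{\bar S}$ only through $m$; for $m=0$ it is a fixed number, so set $p_0:=\Pr[\sum_i c_i=0\mid m=0]$, manifestly a function of $k$ and $\eta$ alone. Averaging over $x_{\bar S}\sim D_{\bar S}$ and splitting on whether $f_S(x_{\bar S})=0$,
\[
\Pr_{x,\zeta}[f^\eta_S(x)\ne0]=(1-p_0)\Pr_D[f_S=0]+\!\!\sum_{x_{\bar S}:\,f_S(x_{\bar S})\ne0}\!\!D_{\bar S}(x_{\bar S})\,(1-q(x_{\bar S})),
\]
where $q(x_{\bar S}):=\Pr_\zeta[\sum_i c_i=0]\in[0,1]$. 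Part~2 is then immediate from $1-q\le1$, and a one-line rearrangement (using $\Pr_{x,\zeta}[f^\eta_S\ne0]=(1-p_0)+\sum_{f_S\ne0}D_{\bar S}(p_0-q)$) reduces Part~1 to the single-point estimate $p_0-q(x_{\bar S})\ge c_0\gamma^2 k^{-3/2}$ for every $x_{\bar S}$ with $m\ne0$.

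\noindent\textbf{Characteristic-function identity.} Each $c_i$ is $\pm1$ with $\Pr[c_i=1]=(1+a_i\gamma)/2$, so $\E[e^{\mathrm{i}\omega c_i}]=\cos\omega+\mathrm{i}\,a_i\gamma\sin\omega$, and by Fourier inversion for integer-valued variables, with $p,q$ the numbers of $a_i$ equal to $+1,-1$ (so $p+q=k$, $p-q=m$),
\[
\Pr\Big[\sum_i c_i=0\Big]=\frac1{2\pi}\int_{-\pi}^{\pi}(\cos\omega+\mathrm{i}\gamma\sin\omega)^p(\cos\omega-\mathrm{i}\gamma\sin\omega)^q\,d\omega .
\]
Writing $\cos\omega+\mathrm{i}\gamma\sin\omega=\rho(\omega)e^{\mathrm{i}\phi(\omega)}$ with $\rho(\omega)^2=\cos^2\omega+\gamma^2\sin^2\omega$ and $\phi(\omega)=\arctan(\gamma\tan\omega)$, the integrand is $\rho^k e^{\mathrm{i}m\phi}$, so $\Pr[\sum_i c_i=0]=\frac1{2\pi}\int_{-\pi}^{\pi}\rho^k\cos(m\phi)\,d\omega$; in particular $p_0=\frac1{2\pi}\int\rho^k\,d\omega$ and
\[
p_0-q(x_{\bar S})=\frac1{2\pi}\int_{-\pi}^{\pi}\rho(\omega)^k\big(1-\cos(m\phi(\omega))\big)\,d\omega\ \ge\ 0 .
\]
This already reproves $q\le p_0$ (hence the slack in Part~2); the task is to bound this integral below by $c_0\gamma^2 k^{-3/2}$.

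\noindent\textbf{Estimating the integral.} Restrict to $|\omega|\le\delta:=\min\{\tfrac12,\ (k(1-\gamma^2))^{-1/2},\ 1/m\}$ (the middle term being $+\infty$ when $\gamma^2=1$, where $\rho\equiv1$). On this range $\rho(\omega)^2=1-(1-\gamma^2)\sin^2\omega\ge1-(1-\gamma^2)\delta^2\ge1-1/k$, so $\rho^k\ge(1-1/k)^{k/2}\ge\tfrac12$; and elementary bounds for $\arctan$ and $\tan$ give $\tfrac{\pi}{4}\gamma|\omega|\le|\phi(\omega)|\le|\omega|\le1/m$, whence $|m\phi|\le1$ and $1-\cos(m\phi)\ge\tfrac13(m\phi)^2\ge\tfrac{\pi^2}{48}m^2\gamma^2\omega^2$. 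Combining, $p_0-q\ge c\,m^2\gamma^2\int_{-\delta}^{\delta}\omega^2\,d\omega=c'\,m^2\gamma^2\delta^3$ for absolute constants $c,c'$. A short check of the three cases for which term attains the minimum in $\delta$ (using $2\le m\le k$ and $1-\gamma^2\le1$) shows $m^2\delta^3\ge k^{-3/2}$ in all of them, so $p_0-q\ge c'\gamma^2 k^{-3/2}$, which is exactly the single-point estimate needed, with $c_0$ absolute and $p_0$ (as defined) depending only on $|S|$ and $\eta$.

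\noindent\textbf{Main obstacle.} The reduction and the Fourier identity are routine; the delicate part is the last step, where the Gaussian-type decay $\rho^k\approx e^{-k(1-\gamma^2)\omega^2/2}$ degenerates as $\eta\to0$ or $\eta\to1$ (that is, $\gamma^2\to1$). This is precisely why the cutoff $\delta$ must adapt to $k(1-\gamma^2)$ and also be capped at a constant and at $1/m$, and why one must verify $m^2\delta^3\ge k^{-3/2}$ uniformly across the resulting cases while keeping the constant absolute. (An equivalent route is to write $p_0-q=\gamma\big(\Pr[R=1]-\Pr[R=-1]\big)$, where $R$ is the sum of the $c_i$ after flipping one $a_i$, and invoke a local central limit estimate for $R$; it runs into the same degeneracy.)
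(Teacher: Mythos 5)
Your proposal is correct, and the outer reduction is the same as the paper's: condition on $x_{\bar{S}}$, observe that $2^{|S|}f^{\eta}_S(x_{\bar{S}})$ is a sum of $k=2^{|S|}$ independent signs with biases determined by the true values $f(y_i)\chi_S(y_i)$, define $p_0$ as the point mass at zero in the balanced case, split on $f_S(x_{\bar{S}})=0$ versus $\neq 0$, and reduce Part 1 to a uniform lower bound on $p_0-q$ of order $(2\eta-1)^2 k^{-3/2}$. Where you genuinely diverge is in how that gap is established. The paper pairs each $+1$ with a $-1$ to turn the sum into a lazy $\pm 2$ random walk, proves separately (Lemmas~\ref{lem:rcn1} and \ref{lem:randwalk}) that $p^k_i$ is monotone in the imbalance $i$ and that $\Pr[X_{n-1}=0]-\Pr[X_{n-1}=2]$ is decreasing in the laziness parameter, and then evaluates the worst case explicitly via the central binomial coefficient to extract $k^{-3/2}$; the quantitative content lives entirely in the comparison $p_0-p_1$. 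You instead write $\Pr[\sum_i c_i=0]$ by Fourier inversion as $\frac{1}{2\pi}\int\rho^k\cos(m\phi)\,d\omega$, so that $p_0-q=\frac{1}{2\pi}\int\rho^k(1-\cos(m\phi))\,d\omega$ is manifestly nonnegative — this single identity replaces both of the paper's monotonicity lemmas — and then a local-CLT-style estimate on a window $|\omega|\le\delta$ adapted to $k(1-\gamma^2)$ and to $m$ yields the bound for every nonzero imbalance $m$ at once, not just $m=2$. Your route buys a self-contained, fully quantitative argument that handles the degenerate regimes $\eta\to 0,1$ explicitly (I checked your three-way case analysis for $m^2\delta^3\ge k^{-3/2}$ and it goes through, using $|m|\ge 2$ and $|m|\le k$); the paper's route is more combinatorial and isolates the monotonicity facts as intuitive standalone statements, but leaves Lemma~\ref{lem:randwalk} unproved and only bounds the gap at the extreme laziness value. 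The one point worth making explicit in a final write-up is the branch/continuity of $\phi(\omega)=\arg(\cos\omega+\mathrm{i}\gamma\sin\omega)$ over all of $[-\pi,\pi]$ so that the identity $p_0-q=\frac{1}{2\pi}\int\rho^k(1-\cos(m\phi))\,d\omega$, and hence the nonnegativity, is valid globally and not just on the small window where you do the estimation.
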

%
\begin{proof}
Note that $f_S(x) = \E_{x_S \sim U_S}[f(x) \chi_S(x)]$, and so $f_S(x)$ is
evaluated by using $2^{|S|}$ different values of $f(x)$. For every $x$, $f(x)
\in \{-1, 1\}$, and hence if $f_S(x) = 0$, it must be that the $2^{|S|}$ values
used in the expectation have exactly $2^{|S|-1}$ $+1$s and $-1$s each.

On the other hand, if $f_S(x) \neq 0$, then the number of $+1$s is different
than $-1$s. If $f_S(x) \neq 0$, without loss of generality, we only consider the
case when $f_S(x) > 0$, so that there are more $+1$s than $-1$s.  Thus, we are
left with the following combinatorial question:

Suppose we begin with $2k$ variables, $x_1, \ldots, x_{2k}$, where each $x_i$ is
$+1$ or $-1$. Let $k_1$ be the number of $+1$s and $2k - k_1$ is the number of
$-1$s. We will assume throughout that $k \geq 2$. We perform the following
process, each $x_i$ is left as is with probability $1 - \eta$ and its sign
flipped with probability $\eta$, independently. Let $x^\prime_i$ be the values
of the resulting variables, and let $X^\prime = \sum_i {x^\prime_i}$.  Let
$p^k_i$ denote the probability that $X^\prime$ is $0$ having started with
$(k+i)$ $+1$s and $(k -i)$ $-1$s.  Thus, $p^k_0$ is the probability of getting a
$0$, when we start with equal number of $+1$s and $-1$s. 
 
Then the following are true: 
\begin{enumerate}
\item $p^k_i$ decreases as $i$ increases.
\item $p^k_0 - p^k_1 \geq (2\eta - 1)^2 c_0 /k^{3/2}$ for some absolute constant
$c_0$.
\end{enumerate}
For proof of the above facts see Lemmas~\ref{lem:rcn1} and \ref{lem:rcn2} below, though it
should be fairly clear that the conclusions make sense. When $\eta = 1/2$, the
initial values are irrelevant of the $x_i$ are irrelevant and each $x^\prime_i =
\pm1$ with probability $1/2$, but for $\eta < 1/2$, if one started with the sum
$\sum_i {x_i} =0$, it is more likely that $\sum_i x^\prime_i =0 $, than if one
started from some value, $\sum_i x_i$ that was greater than $0$. 

We apply the above to the setting when $k = 2^{|S| - 1}$. We drop the
superscripts $p^{2^{|S|-1}}_0$ and $p^{2^{|S|-1}}_1$ in the rest of this
discussion. First, imagine that we have fixed the variables $x_{-S}$ so that
the expectation~(\ref{eqn:compute-fseta}) is only a function of the noise
function $\zeta$. If $f_S(x_{-S}) = 0$, then
$\Pr_{\zeta}[f^{\eta}_S(x_{-S}) = 0] = p_0$. On the other hand, if $f_S(x_{-S})
\neq 0$, then $0 \leq \Pr_{\zeta}[f^{\eta}(x_{-S}) = 0] \leq p_1$.
So, we have the following:
\begin{align*}
\Pr_{x, \zeta}[f^{\eta}_S(x) \neq 0] &\geq \Pr_{x}[f_S(x) \neq 0](1 - p_1) +
\Pr_{x}[f_S(x) = 0](1 - p_0) \\
&=(1 - p_0) + (p_0 - p_1) \Pr_{x \sim D}[f_S(x) \neq 0] 
\intertext{On the other hand,}
\Pr_{x, \zeta}[f^{\eta}_S(x) \neq 0] &\leq \Pr_{x}[f_S(x) \neq 0] + (1 - p_0)
\Pr_{x}[f_S(x) \neq 0] \\
&\leq (1-p_0) + \Pr_{x}[f_S(x) \neq 0]
\end{align*}
This completes the proof of the assertion. 
\end{proof}

We note that this allows us to distinguish between the cases where $\Pr_{x \sim
D}[f_S(x) \neq 0] \geq \alpha$ from $\Pr_{x \sim D}[f_S(x) \neq 0] \leq \beta$,
as long as $\alpha - \beta$ is sufficiently large. This can be done by choosing
$\beta = \alpha\cdot(2\eta -1)^2c_0/(2 \cdot 2^{3|S|/2})$, and then computing
the value $\Pr_{x \sim D}[f^{\eta}_S(x) \neq 0]$. Note that $p_0$ can be
computed exactly, if the size $|S|$ and the noise rate $\eta$ are known. We
assume that the noise rate is known; if not, the standard trick of binary
searching the noise rate can be employed. Note that these tests can be carried
out to high accuracy from samples. Now, in the case when $D$ is an
locally $\alpha$-smooth distribution for constant $\alpha$, any two points $x$ and
$x^\prime$ drawn from $\EX(f, D)$ will have Hamming distance $\Omega(n)$ with
very high probability.  The local queries to $\MQ(f^{\eta})$ are only made for
points that are at Hamming distance $O(\log(n))$ from sampled points (see
Fact~\ref{fact:smooth}). Thus, with very high probability, the queries made to
compute $f^{\eta}_S(x)$ and $f^{\eta}_S(x^\prime)$ do not have any point in
common, \ie no example is queried twice by the learning algorithm. So we can
employ Lemma~\ref{lem:rcn_separation} as if the noise was chosen independently
each time a point was queried. 

\subsubsection{$L_2$ Test}

Recall that $f^{\eta}_S(x_{-S}) = \frac {1}{2^{|S|}}\sum_{x_S \in
\moo^{|S|}}[f^{\eta}(x_Sx_{-S})]$. For a fixed $x_{-S}$, $f^{\eta}(x_{-S})$ is a
random variable depending only on the noise function $\zeta$. Let
$2^{|S|}f_{S}(x) = 2k$, where $2k$ is some even integer in the range $[-2^{|S|},
2^{|S|}]$. Let $k_1 = 2^{|S|-1} + k = 2^{|S|-1}(1+f_S(x))$ and $k_2 = 2^{|S|-1}
- k = 2^{|S|-1}(1 - f_S(x))$, so that $2^{|S|}f_S(x)$ is a sum of $k_1$, $+1$s
  and $k_2$, $-1$s. Let $Z_1 \sim \binomial(k_1, \eta)$ and $Z_2 \sim
  \binomial(k_2, \eta)$ be binomial random variables. Then
  $2^{|S|}f^{\eta}(x_{-S}) = 2^{|S|}f_S(x) - 2Z_1 + 2Z_2$. This follows
  immediately from the definition of the noise model. The following can then be
  verified by straightforward calculations,
\begin{align*}
\E_{\zeta}[f^{\eta}_S(x_{-S})] &= (1 - 2\eta) f_S(x) \\
\E_{\zeta}[f^{\eta}_S(x_{-S})^2] &= (1 - 2\eta)^2 f_S(x)^2 + 2^{-|S|+1}\eta (1 -
\eta)
\end{align*}
Thus, if we can obtain accurate estimates of $\E_{x \sim D}[f^{\eta}_S(x)^2]$,
we can also obtain accurate estimates of $\E_{x \sim D}[f_S(x)^2]$. Again, as in
the previous case, we observe that the algorithm (with high probability) never
makes a query twice for the same example. Thus, we can assume that the noise
model is in fact not persistent. It is clear that $\E_{x \sim
D}[f^{\eta}_S(x)^2]$ can be estimated highly accurately by sampling.

The proof of the following two lemmas are elementary and are omitted.

\begin{lemma} \label{lem:randwalk} Suppose, $X_0 = 0$. Consider the following random walk, $X_{i+1} =
X_{i}$ with probability $1 - \alpha$, $X_{i+1} = X_i + 2$, with probability
$\alpha/2$ and $X_{i+1} = X_i - 2$, with probability $\alpha/2$, where $\alpha
\in [0, 1/2]$.  Then, for $i \geq 0$, $\Pr[X_n = 0] - \Pr[X_n = 2]$ is a
decreasing function of $\alpha$.  \end{lemma}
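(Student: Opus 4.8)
The plan is to pass to Fourier space. First I would note that $X_n = 2S_n$ where $S_n = \sum_{i=1}^n Y_i$ with the $Y_i$ i.i.d.\ on $\{-1,0,1\}$, $\Pr[Y_i = 0] = 1-\alpha$ and $\Pr[Y_i = \pm 1] = \alpha/2$; hence $\Pr[X_n = 0] = \Pr[S_n = 0]$ and $\Pr[X_n = 2] = \Pr[S_n = 1]$, and it suffices to show that $g(\alpha) := \Pr[S_n = 0] - \Pr[S_n = 1]$ is decreasing on $[0,1/2]$. The characteristic function of a single step is $\phi(\theta) = \E[e^{i\theta Y_1}] = (1-\alpha) + \alpha\cos\theta = 1 - \alpha(1-\cos\theta)$, so $S_n$ has characteristic function $\phi(\theta)^n$.

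Next I would apply Fourier inversion for integer-valued random variables, $\Pr[S_n = k] = \frac{1}{2\pi}\int_{-\pi}^{\pi}\phi(\theta)^n e^{-ik\theta}\,d\theta$, and subtract the $k=0$ and $k=1$ cases, obtaining $g(\alpha) = \frac{1}{2\pi}\int_{-\pi}^{\pi}\phi(\theta)^n(1 - e^{-i\theta})\,d\theta$. Since $\phi(\theta)^n$ is real and even in $\theta$ while $g(\alpha)$ is manifestly real, the contribution of the odd part $\sin\theta$ of $1 - e^{-i\theta}$ integrates to zero, leaving the clean formula
\[ g(\alpha) = \frac{1}{2\pi}\int_{-\pi}^{\pi}\bigl(1 - \alpha(1-\cos\theta)\bigr)^n\,(1-\cos\theta)\,d\theta. \]

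Monotonicity then drops out. For every $\theta$ and every $\alpha \in [0,1/2]$ one has $0 \le 1 - 2\alpha \le 1 - \alpha(1-\cos\theta) \le 1$, using $1-\cos\theta \in [0,2]$; the integrand $\bigl(1-\alpha(1-\cos\theta)\bigr)^n$ is therefore a quantity in $[0,1]$ that is pointwise nonincreasing in $\alpha$ (the map $x\mapsto x^n$ being nondecreasing on $[0,1]$), and multiplying by the nonnegative weight $(1-\cos\theta)$ and integrating preserves this, so $g$ is nonincreasing on $[0,1/2]$. If strict monotonicity is wanted, I would instead differentiate under the integral sign to get $g'(\alpha) = -\frac{n}{2\pi}\int_{-\pi}^{\pi}(1-\cos\theta)^2\bigl(1-\alpha(1-\cos\theta)\bigr)^{n-1}\,d\theta < 0$, the integrand being nonnegative and strictly positive on a set of full measure for $n \ge 1$.

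I do not expect a serious obstacle. The only mild point of care is the boundary value $\alpha = 1/2$, where $\phi(\pi) = 0$; but we only ever use $\phi \ge 0$, so this causes no trouble, and both Fourier inversion and (if used) differentiation under the integral are routinely justified here since everything in sight is a trigonometric polynomial on a compact interval. The real content is just the integral representation of the difference of the two point masses with a nonnegative weight, after which pointwise monotonicity of $x\mapsto x^n$ on $[0,1]$ finishes it.
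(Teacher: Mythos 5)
Your proof is correct, and it is genuinely different from what the paper does: the paper declares the lemma ``elementary'' and omits the proof, offering only the heuristic that $\Pr[X_n=2j]$ is bell-shaped in $j$ and that the bell ``gets steeper'' as $\alpha$ decreases --- a picture that is intuitively right but would take some combinatorial work (e.g.\ comparing trinomial coefficients or a coupling argument) to make rigorous. Your characteristic-function route replaces all of that with a single identity: after reducing $X_n=2S_n$ to a lazy $\pm1$ walk, you get
\[
\Pr[X_n=0]-\Pr[X_n=2] \;=\; \frac{1}{2\pi}\int_{-\pi}^{\pi}\bigl(1-\alpha(1-\cos\theta)\bigr)^{n}\,(1-\cos\theta)\,d\theta,
\]
in which the base of the power lies in $[0,1]$ precisely because $\alpha\le 1/2$, is pointwise nonincreasing in $\alpha$, and is integrated against a nonnegative weight; monotonicity (and, via differentiation under the integral, strict monotonicity for $n\ge1$) is then immediate. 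The sign conventions, the vanishing of the odd $\sin\theta$ contribution, and the boundary case $\alpha=1/2$ (where $\phi(\pi)=0$ only on a null set) are all handled correctly. What your approach buys is an actual proof where the paper has none, and as a bonus the integral representation makes the nonnegativity of the difference itself transparent; what it costs is only the (routine) invocation of Fourier inversion in place of a purely combinatorial argument. The nonincreasing version you prove is exactly what the paper's application in its Lemma on $p_0-p_1$ needs, since there the difference is lower-bounded by its value at $\alpha=1/2$.
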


The idea of the proof is to notice that the probability, $\Pr[X_n = 2j]$ follows
a bell shaped curve, and the curve gets steeper (more mass is concentrated at
$0$) as $\alpha$ goes to $0$.
%
%
%

\begin{lemma} \label{lem:rcn1}
Let $x_1, \ldots, x_{2n}$, be such that $x_1 = \cdots = x_{n+d} = 1$ and
$x_{n+d+1} = x_{n+i+2} = \cdots x_{2n} = -1$. The sign of each $x_i$ is flipped
independently with probability $\eta < 1/2$, to get $x^\prime_i$. Let $p^n_d$ be
the probability that the $\sum_{i}x^\prime_i = 0$.  Then for $d \geq 0$, as $d$
increases, $p^n_d$ decreases.
\end{lemma}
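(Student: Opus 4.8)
The plan is to reduce the statement to a claim about the mode of a Poisson binomial distribution, and then to settle that claim using unimodality together with one tight inequality. First I would compare $p^n_d$ with $p^n_{d+1}$ by isolating the single coordinate in which the two starting configurations differ: the configuration for $d$ has $n+d$ coordinates equal to $+1$ and $n-d$ equal to $-1$, and the configuration for $d+1$ turns one of those $-1$'s into a $+1$. Let $W$ be the number of coordinates, among the $2n-1$ coordinates common to the two configurations, whose flipped value equals $+1$; then $W$ is a Poisson binomial, the sum of $n+d$ independent $\binomial(1,1-\eta)$ variables (from the common $+1$'s) and $n-d-1$ independent $\binomial(1,\eta)$ variables (from the common $-1$'s). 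Since $\sum_i x_i'=0$ exactly when the total number of $+1$'s is $n$, conditioning on the flipped value of the special coordinate gives $p^n_d=(1-\eta)\Pr[W=n]+\eta\Pr[W=n-1]$ and $p^n_{d+1}=\eta\Pr[W=n]+(1-\eta)\Pr[W=n-1]$, so
\[
p^n_d-p^n_{d+1}=(1-2\eta)\bigl(\Pr[W=n]-\Pr[W=n-1]\bigr).
\]
As $\eta<1/2$, it remains to prove $\Pr[W=n]\ge\Pr[W=n-1]$, i.e.\ that the mode of $W$ is at least $n$.

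Next I would use that a Poisson binomial mass function is log-concave, being a convolution of (trivially log-concave) Bernoulli mass functions; hence $W$ is unimodal, and the whole sequence $\Pr[W=0],\dots,\Pr[W=n]$ is nondecreasing as soon as $\Pr[W=n]\ge\Pr[W=n-1]$, so only this one inequality is needed. To attack it I would let $A\sim\binomial(n+d,1-\eta)$ count the common $+1$-coordinates that remain $+1$, and let $(n-d-1)-B$, $B\sim\binomial(n-d-1,1-\eta)$, count the common $-1$-coordinates that become $+1$, so that $W=A-B+(n-d-1)$ with $A$ and $B$ independent binomials of the \emph{same} parameter $q:=1-\eta>1/2$; the inequality then reads $\Pr[A-B=d+1]\ge\Pr[A-B=d]$, i.e.\ $\sum_{b}\Pr[B=b]\bigl(\Pr[A=b+d+1]-\Pr[A=b+d]\bigr)\ge0$, where the inner forward difference of $A$'s pmf is nonnegative exactly for $b+d+1$ at most the mode $\lfloor(n+d+1)q\rfloor$ of $A$. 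When $\E[W]=n+d-(2d+1)\eta$ is at least $n$ (equivalently $\eta\le d/(2d+1)$), the classical fact of Darroch that the modes of a Poisson binomial with mean $\mu$ lie in $\{\lfloor\mu\rfloor,\lceil\mu\rceil\}$ gives $\mathrm{mode}(W)\ge n$ at once.

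The hard part will be the remaining range $d/(2d+1)<\eta<1/2$, where $\E[W]\in(n-\tfrac12,n)$ and Darroch's theorem only narrows $\mathrm{mode}(W)$ to $\{n-1,n\}$, so that the value $n-1$ must be excluded. I expect to do this by a direct estimate of the weighted sum above, using log-concavity of the pmf of $A$ to dominate the contribution of the few $b$ for which the inner difference is negative. An equivalent, cleaner route uses the counting identity $n\Pr[W=n]=\sum_{|T|=n-1}w(T)\sum_{j\notin T}p_j/(1-p_j)$, where $w(T)$ is the probability that the set of flipped-to-$+1$ coordinates is exactly $T$ and $p_j$ is coordinate $j$'s success probability; this reduces the inequality to $\E\bigl[\,\#\{(1-\eta)\text{-type coordinates that get flipped}\}\mid W=n-1\,\bigr]\ge n\eta$, equivalently $\E[A\mid W=n-1]\le n(1-\eta)+d$, which should hold because conditioning on the below-mean event $\{W=n-1\}$ pulls $\E[A\mid W=n-1]$ below the unconditional value $\E[A]=(n+d)(1-\eta)\le n(1-\eta)+d$. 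This last step is genuinely delicate: $\Pr[W=n]-\Pr[W=n-1]$ vanishes at $\eta=1/2$ (there $W\sim\binomial(2n-1,1/2)$ and both probabilities equal $\binom{2n-1}{n}2^{-(2n-1)}$), so the inequality is tight and uses in an essential way that $q=1-\eta>1/2$; everything else is bookkeeping.
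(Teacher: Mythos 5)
The paper gives no proof of this lemma (it is declared elementary and omitted, with only a one-line intuition), so your attempt must stand on its own -- and it has a genuine gap at its crux. The first half is correct and is a good reduction: isolating the single coordinate on which the two starting configurations differ gives $p^n_d - p^n_{d+1} = (1-2\eta)\bigl(\Pr[W=n]-\Pr[W=n-1]\bigr)$ for the Poisson binomial $W$ you describe, and the case $\E[W]\ge n$ (i.e.\ $\eta\le d/(2d+1)$) does follow from Darroch's theorem together with log-concavity. But that case is \emph{empty} for $d=0$, and for every $d$ it excludes all $\eta$ near $1/2$; the inequality $\Pr[W=n]\ge\Pr[W=n-1]$ in the range $\E[W]\in(n-\tfrac12,n)$ is where essentially all the content of the lemma lives, and you do not prove it there: both of your proposed routes end in ``I expect to'' and ``should hold because,'' which is a sketch of a plan, not a proof.

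Moreover, the justification offered for the second route is not valid as stated. From what you establish -- that $g(w):=\E[A\mid W=w]$ is nondecreasing and that $n-1<\E[W]$ -- it does not follow that $g(n-1)\le\E[A]$: monotonicity plus $\E[g(W)]=\E[A]$ only says $g$ crosses the level $\E[A]$ somewhere, not that the crossing occurs at or above $w=n-1$. Equivalently, since $g(w)+\E[C\mid W=w]=w$, your claim is the quantitative assertion that conditioning on $\{W=n-1\}$ lowers $\E[C]$ by at most $\E[W]-(n-1)=1-\eta$, which is not a consequence of ``below-mean conditioning'' and must be proved. Note also that for $d=0$ the target inequality $\E[A\mid W=n-1]\le n(1-\eta)+d$ has zero slack over $\E[A]$, so nothing can be given away in this step; and term-by-term positivity fails in the natural decompositions (the forward difference of $A$'s pmf really is negative for some summands), so some genuinely global use of the weights is unavoidable. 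You have located the difficulty correctly and reduced the lemma to the right inequality, but the lemma remains unproven.
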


This expresses the quite obvious idea that if the probability of flipping is
less than half, then the further from $0$ the initial sum $(\sum_{i} x_i)$, the
less likely it is that $\sum_{i} x^\prime_i = 0$.

\begin{lemma} \label{lem:rcn2}
Let $x_1, \ldots, x_{2n}$, be such that $x_1 = \cdots x_{n+1} = 1$ and $x_{n+2}
= \cdots = x_{2n} = -1$. The sign of each $x_i$ is flipped independently with
probability $\eta < 1/2$, to get $x^\prime_i$. Let $p_1$ denote the probability
that $\sum_{i} x^\prime_i = 0$. Let $y_1, \ldots, y_{2n}$ be such that, $y_1 =
\cdots y_n = 1$ and $y_{n+1} = \cdots = y_{2n} = -1$. Then, let $y^\prime_i$ be
obtained by flipping $y_i$ independently with probability $\eta < 1/2$, and let
$p_0$ denote the probability that $\sum_{i} y^\prime_i = 0$. Then $p_0 - p_1
\geq (2\eta - 1)^2 c_0/n^{3/2}$, for some absolute constant $c_0$.
\end{lemma}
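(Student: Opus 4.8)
The plan is to translate the two sign-flipping experiments into statements about differences of independent binomial random variables, extract the factor $(2\eta-1)^2$ by an \emph{exact} identity obtained from Fourier inversion, and then lower-bound the leftover integral by a one-sided local central limit estimate. First I would reduce to binomials. In the balanced ($y$) experiment, let $A$ count the plus-ones among $y_1,\dots,y_n$ that are \emph{not} flipped and $B$ count the minus-ones among $y_{n+1},\dots,y_{2n}$ that \emph{are} flipped; then $A\sim\binomial(n,1-\eta)$ and $B\sim\binomial(n,\eta)$ are independent, $\sum_i y'_i = 2(A+B)-2n$, so $p_0=\Pr[A+B=n]$. Since $n-A\sim\binomial(n,\eta)$, this equals $\Pr[A'-B=0]$ with $A',B$ i.i.d.\ $\binomial(n,\eta)$. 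The unbalanced ($x$) experiment has $n+1$ plus-ones and $n-1$ minus-ones, so the same bookkeeping gives $p_1=\Pr[A'-B=1]$ with $A'\sim\binomial(n+1,\eta)$ and $B\sim\binomial(n-1,\eta)$ independent.

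Next I would evaluate both probabilities by Fourier inversion on the integers. Writing $z=1-\eta+\eta e^{it}$ and $\lambda=2\eta(1-\eta)\in(0,\tfrac12]$, one has $|z|^2 = 1-\lambda(1-\cos t)$, so $p_0=\frac1{2\pi}\int_{-\pi}^{\pi}\bigl(1-\lambda(1-\cos t)\bigr)^{\,n}\,dt$; and since $z^{\,n+1}\bar z^{\,n-1}=\bigl(1-\lambda(1-\cos t)\bigr)^{\,n-1}z^2$, expanding $z^2e^{-it}$ and taking real parts (the imaginary part integrates to zero against a real, even kernel) gives $p_1=\frac1{2\pi}\int_{-\pi}^{\pi}\bigl(1-\lambda(1-\cos t)\bigr)^{\,n-1}\bigl(\lambda+(1-\lambda)\cos t\bigr)\,dt$. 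Pulling $\bigl(1-\lambda(1-\cos t)\bigr)^{\,n-1}$ out of the difference, the remaining factor collapses to $(1-2\lambda)(1-\cos t)$, and $1-2\lambda=(1-2\eta)^2$, which yields the clean identity
\[ p_0-p_1 \;=\; \frac{(2\eta-1)^2}{2\pi}\int_{-\pi}^{\pi}(1-\cos t)\,\bigl(1-\lambda(1-\cos t)\bigr)^{\,n-1}\,dt \;\ge\; 0. \]
(Equivalently, in discrete form $p_0-p_1=(2\eta-1)^2\cdot\tfrac12\sum_k(q_k-q_{k-1})^2$, where $q_k=\Pr[\binomial(n-1,\eta)=k]$; this is the second-difference form of the collision probability of a binomial.)

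It then remains to show the integral is $\Omega(n^{-3/2})$ uniformly in $\lambda\in(0,\tfrac12]$. Restricting to $|t|\le 1$, I would use $1-\cos t\ge t^2/4$ and $\bigl(1-\lambda(1-\cos t)\bigr)^{\,n-1}\ge(1-\tfrac12\lambda t^2)^{\,n-1}\ge e^{-c\nu t^2}$, where $\nu=(n-1)\lambda$ and $c$ is absolute (using $1-x\ge e^{-x/(1-x)}$), reducing the task to lower-bounding $\int_{|t|\le1}t^2e^{-c\nu t^2}\,dt$. If $\nu\le 1$ this is an absolute positive constant, hence $\gg n^{-3/2}$; if $\nu\ge 1$, the substitution $s=t\sqrt{\nu}$ makes it $\Omega(\nu^{-3/2})$, and since $\nu=2\eta(1-\eta)(n-1)\le n/2$ this is $\Omega(n^{-3/2})$. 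Plugging back into the identity gives $p_0-p_1\ge(2\eta-1)^2 c_0\, n^{-3/2}$ for an absolute constant $c_0$ (with $c_0\le 1$ so that the bound also holds trivially for small $n$).

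The step I expect to be the main obstacle is producing the factor $(2\eta-1)^2$ \emph{exactly} rather than up to constants: a direct expansion of $p_0-p_1$ as an alternating sum over binomial probabilities is unwieldy, and the cancellation that isolates $(1-2\lambda)(1-\cos t)$ only becomes transparent after passing to characteristic functions (equivalently, to the $\tfrac12\sum_k(q_k-q_{k-1})^2$ form). The subsequent anti-concentration estimate — essentially that the second difference of the $\binomial(n-1,\eta)$ probability mass function near its mode is $\Omega(\sigma^{-3})$ — is routine, but it must be carried out uniformly in $\eta$, which is exactly why the case split on the size of $\nu$ is needed; the degeneracy as $\eta\to\tfrac12$ is harmless because it has already been pulled out into the prefactor.
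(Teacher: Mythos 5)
Your proposal is correct, and the route is genuinely different from the paper's. The paper pairs up $n-1$ of the $+1$s with the $n-1$ $-1$s to form a lazy $\pm2$ random walk $X_{n-1}$ (laziness $\eta^2+(1-\eta)^2$), conditions on the two leftover coordinates to get the exact identity $p_0-p_1=(2\eta-1)^2\bigl(\Pr[X_{n-1}=0]-\Pr[X_{n-1}=2]\bigr)$, and then invokes a separate monotonicity claim (Lemma~\ref{lem:randwalk}, whose proof is omitted) to argue this difference is minimized at $\eta=1/2$, where it reduces to an explicit difference of central binomial coefficients of order $n^{-3/2}$. You instead recast $p_0$ and $p_1$ as point probabilities of differences of binomials, extract the same $(2\eta-1)^2$ prefactor by a characteristic-function computation, and lower-bound the residual integral $\frac{1}{2\pi}\int_{-\pi}^{\pi}(1-\cos t)\bigl(1-\lambda(1-\cos t)\bigr)^{n-1}dt$ directly, with a case split on $\nu=(n-1)\lambda$ to get uniformity in $\eta$. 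The two exact identities coincide — the paper's $\Pr[X_{n-1}=0]-\Pr[X_{n-1}=2]$ is exactly your integral, i.e.\ $\tfrac12\sum_k(q_k-q_{k-1})^2$ for $q_k=\Pr[\mathsf{Bin}(n-1,\eta)=k]$ — so the approaches diverge only in how that quantity is produced and then bounded below. Your version buys self-containedness (no appeal to an unproved monotonicity lemma) and handles the $\eta$-uniformity head-on; the paper's version is more elementary and combinatorial, reducing everything to a single closed-form evaluation at the worst-case noise rate.
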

\begin{proof}
First we leave aside the values, $x^\prime_n$, $x^\prime_{n+1}$,
$y^\prime_n$ and $y^\prime_{n+1}$. The remaining variables, both in the case of
$x_i$s and $y_i$s, were obtained by starting with exactly $(n-1)$ $+1$s and
$(n-1)$ $-1$s and flipping each independently with probability $\eta < 1/2$. We
can form pairs of $(+1, -1)$, to get a random variable $z_i = x_i^\prime +
x_{n+1+i}^\prime$, $i = 1, \ldots, n-1$, where $z_i = 0$ with probability
$\eta^2 + (1- \eta)^2 > 1/2$, $z_i = +2$ with probability $\eta(1-\eta)$ and
$z_i = -2$ with probability $\eta (1-\eta)$. (A similar argument can be made in
the case of $y^\prime_i$s.) We can view the sum of these $z_i$ random variables
as a random walk described in Lemma~\ref{lem:randwalk}, where $X_{i+1} = X_{i}$
with probability $\eta^2 + (1 - \eta)^2$ and $X_{i+1} = X_i + 2$, with
probability $\eta (1- \eta)$ and $X_{i+1} = X_i - 2$, with probability $\eta (1
- \eta)$.  Now, $p_1 = \Pr[X_{n-1} = 0](2 \eta (1-\eta)) + \Pr[X_{n-1} =
2]\eta^2 + \Pr[X_{n-1} = -2](1-\eta)^2$. On the other hand, $p_0 = \Pr[X_{n-1} =
0](\eta^2 + (1- \eta)^2) + \Pr[X_{n-1} = 2]\eta (1 -\eta) + \Pr[X_{n-1} = -2]
\eta (1 - \eta)$. Noticing that, $\Pr[X_{n-1} = 2] = \Pr[X_{n-1} = -2]$, we get
that $p_0 - p_1 = (2 \eta -1)^2 (\Pr[X_{n-1} = 0] - \Pr[X_{n-1} = 2])$. But this
difference is a decreasing function of $\alpha = 1 - (\eta^2 + (1 - \eta)^2)$.
But, even when $\alpha = 1/2$, i.e. $\eta = 1/2$, this difference is given by, 
\begin{eqnarray*}
\Pr[X_{n-1} = 0] - \Pr[X_{n-1} = 2] & = &\frac{1}{2^{2n-2}} \left( {2n-2 \choose
n-1} - {2n-2 \choose n} \right)\\
& = & \frac{1}{2^{2n}} {2n-2 \choose n-1}\left(1 -
\frac{n-2}{n} \right) 
\end{eqnarray*}
The claim now follows easily.
\end{proof}

\section{Lower Bound for Agnostically Learning Parities}
\label{sec:lower-bound-app}
\begin{proof}[Proof of Theorem~\ref{thm:lower_bound}]
We begin by describing the required properties of the error correcting code.
For every integer $t$ and $m$ that is a power of two, there exists a binary BCH
code that maps a binary string $x$ of length $m-1-(t-1)\log m$ to a binary
string $z = x \cdot e(x)$ of length $m$ and has distance of $2t$. In particular,
if we denote the length of message $x$ by $n$ then for any $k$ we can obtain a
code with a codeword of length $m \leq n + k \log n$ and distance $2k+1$.

Given a function $f: \{-1,1\}^n \rightarrow \{-1,1\}$ we define a function $f_e:
\{-1,1\}^m \rightarrow \{-1,0,1\}$ as follows: $f_e(z) = f(x)$ if $z = x \cdot
e(x)$ for some $x \in \{-1,1\}^n$ and $f_e(z) = 0$ otherwise. Here the value $0$
is interpreted as function being equal to a random and independent coin flip.

We first note the following properties of this embedding.
\begin{itemize}
\item For every function $g: \{-1,1\}^n \rightarrow \{-1,1\}$,
$$\E_{x \cdot y \sim U_m}[f_e(x \cdot y) g(x) ] = 2^{n-m} \E_{x \sim U}[f(x) g(x) ]\ .$$
\item For every function $h: \{-1,1\}^m \rightarrow \{-1,1\}$,
$$\E_{z \sim U_m}[f_e(z) h(z) ] = 2^{n-m} \E_{x \sim U_n}[f(x) h(x \cdot e(x)) ]\ .$$
\end{itemize}

We can now describe the reduction from agnostic learning with $k$-local $\MQ$s to learning from random examples alone.
Let $C$ be a concept class, let $f$ denote an unknown target function and $\eps$ denote the error parameter.

The main idea is to simulate random examples and $k$-local queries to $f_e$ using random examples alone of $f$. The simulation requires observing that points in
$\{-1,1\}^m$ can be split into a set $Z$ which includes all codewords together with Hamming balls of radius $t$ around them and the rest of points (which we denote by $\bar{Z}$). We simulate a random example of $f_e(x)$ as follows:
\begin{enumerate}
\item Flip a coin with probability of heads equal to $\beta = |Z|/2^m$.
\item If the outcome is 1: ask for a random example and denote it by $(x,\ell)$. Choose a random point $z'$ in the Hamming ball of radius $k$ around $x \cdot e(x)$. If $z' = x \cdot e(x)$ then return the example $(z',\ell)$ otherwise return $(z',b)$ where $b$ is a random coin flip.
\item If the outcome is 0: sample a random point in $\bar{Z}$ and output $(z,b)$ where $b$ is a random coin flip. One can sample randomly from $\bar{Z}$ as follows: sample a random point in $z$ in $\{-1,1\}^m$, use a decoding algorithm for the BCH code to obtain a message $x$. If the decoding algorithm failed, that is $x \cdot e(x)$ is not within distance $k$ of $z$ then return $z$. Otherwise, try again.
\end{enumerate}
It is important to note that the expected number of tries of this algorithm is
$2^m/|\bar{Z}| = 2^m/(2^m-2^n\beta_{m,k})= 1/(1-2^{n-m}\beta_{m,k})$, where
$\beta_{m,k}$ denotes the size of the Hamming ball of radius $k$. We can always
assume that $2^{n-m}\beta_{m,k} \leq 2/3$ by for example adding 1 to $m$ (this
does not affect the code, increases  the term $2^{m-n}$ by $2$ and $\beta_{m,k}$
by at most $(1+k/(m+1))$). Therefore, with high probability, the simulation step
will not require more than a logarithmic number of tries. Note that the BCH code
we chose is efficiently decodable from up to $k$ errors~\citep{Massey:69}.

Now we simulate a $k$-local $\MQ$ $z$ as follows. If $z$ is $k$-close to random
example we generated in $\bar{Z}$ we return a random coin flip since there are
no non-zero values of $f_e$ within distance $k$ of any point in $\bar{Z}$. If
$z$ is $k$-close to random example we generated in $Z$ then it can only be
$k$-close to $x \cdot e(x)$ in the Hamming ball of which $z$ was generated and
for which we have an example $(x,\ell)$. This means that we can easily answer
this $\MQ$: if $z = x \cdot e(x)$ then we return label $\ell$, otherwise a random coin flip.

On these simulated examples we run the agnostic learning algorithm $\A$ for $C$ on $\{-1,1\}^m$  with $\eps' = \eps \cdot 2^{n-m} \geq \eps/n^k$. Let $h$ denote the hypothesis returned by the algorithm.

Let $c^* = argmax_{c \in C}\{\E_U[f(x) c(x)]\}$ and let $\Delta = \E_U[f(x) c^*(x)]$.
Then we know that $$\E_{x \cdot y \sim U_m}[f_e(x \cdot y) c^*(x) ] = 2^{n-m} \Delta .$$ By the agnostic guarantee of $\A$, we know that
$$\E_{z \sim U_m}[f_e(z) h(z) ] \geq 2^{n-m} \Delta + \eps' = 2^{n-m} (\Delta + \eps)\ .$$
Now, again by the properties of the embedding,
$$\E_{x \sim U_n}[f(x) h(x \cdot e(x)) ] = 2^{m-n} \E_{z \sim U_m}[f_e(z) h(z) ] \geq \Delta + \eps\ .$$
Hence $h'(x) = h(x \cdot e(x))$ is a valid hypothesis for agnostic learning of $C$.

Finally the running time of this simulation is $poly(n) \cdot T(n,n^k/\eps)$ where $T(n,n^k/\eps)$ is the running time of $\A$. In particular, if $T$ is polynomial then for a constant $k$ this simulation takes polynomial time.
\end{proof}

\section{Separation Results}
\label{sec:separation}
In this section, we show that $\PAC$+$r$-local $\MQ$ model is strictly more
powerful than the $\PAC$ model, assuming that one-way functions exist. In the following discussion we show that even
$1$-local membership queries are more powerful than the standard $\PAC$ setting.

In this section, we assume that we are working with the domain $\{0,1\}^n$,
rather than $\moo^n$. Let $\mc{F}_n = \{f_s : \zo^n \rightarrow \zo \}_{s \in
\{0, 1\}^n}$ be a pseudo-random family of functions. It is well-known that such
families can be constructed under the assumptions that one-way functions
exist~\citep{GGM86}. Let $A_1, \ldots, A_n$ be a partition of
$\zo^n$ that is easily computable. For example, if the strings in $\zo^n$ are
lexicographically ordered, then $A_i$ contains strings with rank in the range
$[(i-1)2^n/n, i2^n/n)$. For an $n+1$ bit string $x$, $x_{-1}$ denotes the
$n$-length suffix of $x$. Then, for some string $s$, define the function $g_s
: \zo^{n+1} \rightarrow \zo$ as follows:
\[ g_s(x) = \begin{cases} f_s(x_{-1}) & \mbox{If $x_1 = 0$} \\ 
f_s(x_{-1}) \oplus s_i & \mbox{If $x_1 =1 $ and $x_{-1} \in A_i$} \end{cases} \]
Define $\mc{G}_{n+1} = \{ g_s : \zo^{n+1} \rightarrow \zo \}_{s \in \zo^n}$. We
show below that the class $\mc{G}_{n+1}$ is not learnable in the $\PAC$ setting,
but is learnable in the $\PAC$+$1$-local $\MQ$ model under the uniform
distribution.

\begin{theorem} Assuming that one-way functions exist, the class $\mc{G}_{n+1}$
is not learnable in the $\PAC$ model, but is learnable in the $\PAC$+$1$-local
$\MQ$ model, under the uniform distribution.
\end{theorem}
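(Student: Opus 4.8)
The plan is to prove the two assertions separately, using that the existence of one-way functions gives us the pseudorandom family $\mc{F}_n=\{f_s\}_{s\in\{0,1\}^n}$ appearing in the construction (\citep{GGM86}). The positive half is a short direct argument about what a single bit flip reveals; the negative half reduces $\PAC$-learnability of $\mc{G}_{n+1}$ to $\PAC$-learnability of $\mc{F}_n$, which is ruled out by pseudorandomness.

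\emph{Learnability with $1$-local queries under the uniform distribution.} The key point is that for an example $x$, flipping its first bit yields a point $x'$ at Hamming distance $1$ from $x$, and the two labels together reveal one bit of $s$. Concretely, the algorithm draws $m=O(n\log(n/\delta))$ examples from $\EX(g_s,U_{n+1})$; for each drawn $x$ it queries $\MQ(g_s)$ at the point $x'$ obtained by flipping $x$'s first bit (a $1$-local query with respect to $x$). If $z$ denotes the common $n$-bit suffix and $i$ the index with $z\in A_i$ (computable since the partition is easily computable), then $g_s(0z)\oplus g_s(1z)=f_s(z)\oplus(f_s(z)\oplus s_i)=s_i$, so the pair recovers the bit $s_i$. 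Since the suffix $z$ is uniform over $\{0,1\}^n$ and each block $A_i$ has density $\Theta(1/n)$, a coupon-collector argument (with a union/Chernoff bound) shows that $m=O(n\log(n/\delta))$ examples hit every block, hence recover all of $s$, with probability at least $1-\delta$. The algorithm then outputs the hypothesis $h=g_s$ itself, which is polynomially evaluable (the pseudorandom function is efficiently evaluable given its key, and the partition is easily computable) and has zero error; all queries are $1$-local and the running time is polynomial. Note this direction uses nothing about pseudorandomness of $\mc{F}_n$.

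\emph{Non-learnability in the $\PAC$ model.} The observation here is that the restriction of $g_s$ to the subcube $\{0\}\times\{0,1\}^n$ is exactly $f_s$. Let $D_0$ be the distribution on $\{0,1\}^{n+1}$ that is uniform over $\{0z:z\in\{0,1\}^n\}$. A $\PAC$-model (hence distribution-independent) learner $L$ for $\mc{G}_{n+1}$ must in particular succeed under $D_0$; since a random $D_0$-example of $g_s$ is just $(0z,f_s(z))$ for uniform $z$, relabeling random examples of $f_s$ in this way and running $L$ produces, with probability $\ge 1-\delta$, a hypothesis $h$ with $\Pr_z[h(0z)\neq f_s(z)]\le\eps$. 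This yields an efficient $\PAC$ learner for $\mc{F}_n$ under the uniform distribution (outputting the hypothesis $z\mapsto h(0z)$). But that contradicts the standard fact that a pseudorandom function family is not $\PAC$-learnable under the uniform distribution: a learner with error bounded away from $1/2$ can be turned into a distinguisher between $f_s$ (for uniform $s$) and a truly random function by estimating, on fresh points, the agreement of its (polynomially-evaluable) output hypothesis with the oracle — fresh test points lie, with high probability, outside the polynomially many points the learner queried, so in the random-function case the oracle values are uniform and independent of the hypothesis and the agreement is $\approx 1/2$. Hence no efficient $\PAC$ learner for $\mc{G}_{n+1}$ exists.

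\emph{Main obstacle.} The only real subtlety is in the negative direction, and it is a modeling subtlety rather than a calculation: the hardness must be routed through the subcube distribution $D_0$, and one should \emph{not} try to reduce directly from random examples under the uniform distribution on $\{0,1\}^{n+1}$. The reason is exactly the feature that powers the positive direction --- on the half $x_1=1$ the labels depend on the \emph{literal} key bits $s_i$, which cannot be produced given only oracle access to $f_s$, so uniform $g_s$-examples cannot be faithfully simulated from a PRF oracle alone. Apart from this, one only needs the routine coupon-collector and Chernoff--Hoeffding estimates, a check that every query made is genuinely $1$-local with respect to a drawn example, and the standard ``fresh test point'' argument for the PRF distinguisher.
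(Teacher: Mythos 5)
Your positive direction is the same as the paper's: flip the first bit of a drawn example (a $1$-local query), use $g_s(0z)\oplus g_s(1z)=s_i$ for $z\in A_i$, and coupon-collect all of $s$ from $O(n\log(n/\delta))$ uniform examples; your write-up just adds the routine Chernoff/coupon-collector bookkeeping. The real divergence is in the negative direction. The paper argues directly under the uniform distribution on $\{0,1\}^{n+1}$: with polynomially many samples no two examples share a suffix, so ``all the labels appear perfectly random'' and no hypothesis can beat error $1/2$. You instead route the hardness through the subcube distribution $D_0$ supported on $\{0z\}$, where $g_s$ restricts to $f_s$, and conclude that a \emph{distribution-free} $\PAC$ learner for $\mc{G}_{n+1}$ would $\PAC$-learn the pseudorandom family under the uniform distribution on $\{0,1\}^n$, which is impossible by the standard fresh-test-point distinguisher.

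These two routes do not prove the same statement, and the difference matters. Your argument is fully rigorous but only rules out distribution-free $\PAC$ learning of $\mc{G}_{n+1}$; it leaves open whether $\mc{G}_{n+1}$ is $\PAC$-learnable under the uniform distribution on $\{0,1\}^{n+1}$ itself, which is the reading that gives the cleanest separation between the two models (both halves under the same distribution). The paper asserts the stronger uniform-distribution statement, but its one-line justification glosses over exactly the obstacle you isolate in your ``main obstacle'' remark: a reduction to PRF security must simulate the labels $f_s(z)\oplus s_{i(z)}$ on the half $x_1=1$, and a simulator with only oracle access to $f_s$ cannot produce the seed bits $s_i$ (substituting its own guess $\tilde{s}$ yields a function outside $\mc{G}_{n+1}$, on which the learner has no guarantee). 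So your proof is correct for the weaker, distribution-free reading of ``not learnable in the $\PAC$ model,'' and your diagnosis of why the direct uniform-distribution reduction fails is a genuine observation about a step the paper treats as immediate; if the intended claim is non-learnability under the uniform distribution, that step needs either a non-black-box argument about the specific pseudorandom family or a modification of the construction (e.g., making the seed-leaking region have negligible mass, as in the paper's second separation theorem with the points $e^i$), and neither your argument nor the paper's supplies it as written.
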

%
 \begin{proof}
 First, we show that $\mc{G}_{n+1}$ is learnable in the $\PAC$+$1$-local $\MQ$
 model.  Let $1x$ and $0x$ be the two strings of length $n+1$, with suffix $x \in
 \zo^n$.  Then for any $g_s \in \mc{G}_{n+1}$, $g_s(1x) \oplus g_s(0x) = s_i$, if
 $x \in A_i$. Thus, drawing a random example from $U$ and making a one local
 query reveals one bit of the string $s$. By drawing $O(n \log(n))$ random
 examples, all the bits of the string $s$ can be recovered with high probability.
 Thus, revealing the function $g_s$ itself. 
  
 On the other hand, in the $\PAC$ model, the probability that seeing two examples
 $1x$ and $0x$ is exponentially small.  Thus, all the labels appear perfectly
 random (since $f_s$ is from a pseudorandom family). Thus, no learning is
 possible in the $\PAC$ model.
 \end{proof}

In fact, the above construction also shows that the random walk learning model~\citep{Bshouty05} is also more powerful than the $\PAC$ learning setting,
assuming that one way function exist. \citet{Bshouty05} had already
shown that the random walk model is provably weaker than the full $\MQ$ model
assuming that one-way functions exist. In fact, essentially the same argument also
shows that full $\MQ$ is more powerful than $\PAC$+$o(n)$-local $\MQ$. The
following simple concept class (which is the same as that of Bshouty et al.)
shows the necessary separation.

Let $e^i$ be the vector that has $1$ in the $i\th$ position, and $0$s elsewhere.
Again, let $\mc{F}_n = \{f_s : \zo^n \rightarrow \zo \}_{s \in \zo^n}$ be the
pseudorandom family of functions. Then define, $\mc{G}^\prime_n = \{g_s \}$ as
follows: \[ g_s(x) = \begin{cases} s_i & \mbox{If $x = e^i$} \\ f_s(x) &
\mbox{Otherwise} \end{cases} \]

\begin{theorem} The concept class $\mc{G}^\prime_n$ is learnable in the full
$\MQ$ model, but not in $\PAC$+$o(n)$-local $\MQ$ model under the uniform
distribution.  \end{theorem}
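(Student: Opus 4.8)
\noindent The plan is to treat the two halves of the claim separately, the first being immediate and the second reducing to the pseudorandomness of $\mc{F}_n$. For the $\MQ$-learnability of $\mc{G}'_n$, I would simply query the target $g_s$ at the $n$ basis vectors $e^1,\dots,e^n$; by construction $g_s(e^i)=s_i$, so these $n$ membership queries recover the seed $s$ exactly. Since $f_s$ belongs to an efficiently evaluable pseudorandom family, once $s$ is known the whole function $g_s$ can be evaluated in polynomial time (on input $x$, return $s_i$ if $x=e^i$ for some $i$, and $f_s(x)$ otherwise), so the algorithm outputs $h=g_s$ itself, with zero error and polynomial evaluation time. This disposes of the positive direction.

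For the negative direction the key point I would establish is that under the uniform distribution the ``informative'' points $e^1,\dots,e^n$ are unreachable by an $o(n)$-local learner, so that its entire transcript consists of values of the pseudorandom function $f_s$. Set $r=o(n)$ to be the locality bound. The number of points within Hamming distance $r$ of $\{e^1,\dots,e^n\}$ is at most $n\sum_{j\le r}\binom{n}{j}=2^{o(n)}$, so a single uniform example lands in this ``danger zone'' only with probability $2^{-n+o(n)}$; a union bound over the $\poly(n)$ examples drawn by the learner shows that, except with probability $2^{-\Omega(n)}$, every drawn example is at Hamming distance more than $r$ from every $e^i$, and hence no $r$-local query issued with respect to any of those examples can equal any $e^i$. (Equivalently: a uniform example has Hamming weight $\Omega(n)$ with overwhelming probability, while each $e^i$ has weight $1$.) Therefore, with overwhelming probability over a run, the learner only ever observes pairs $(x,g_s(x))$ with $x\notin\{e^i\}$, on which $g_s(x)=f_s(x)$.

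I would then turn this into a standard distinguishing reduction. Assume for contradiction that an algorithm $\LA$ $\PAC$-learns $\mc{G}'_n$ under the uniform distribution with $o(n)$-local $\MQ$s, say with error $\eps=1/3$ and confidence $2/3$. Build a distinguisher $\mc{D}$ that, given oracle access to $\phi:\zo^n\to\zo$, runs $\LA$, answering each example request by sampling $x\sim U$ and returning $(x,\phi(x))$ and each $o(n)$-local query $x'$ by returning $\phi(x')$, and then empirically estimates $\Pr_{x\sim U}[h(x)\neq \phi(x)]$ on fresh uniform samples, where $h$ is the hypothesis output by $\LA$. If $\phi=f_s$ for a random seed $s$, the simulation is $2^{-\Omega(n)}$-close in statistical distance to a genuine run of $\LA$ against $\EX(g_s,U)$ and the local-$\MQ$ oracle for $g_s$ (this is exactly the ``$e^i$ unreachable'' event), so with probability at least $2/3-2^{-\Omega(n)}$ we get $\err_U(h,g_s)\le 1/3$, and since $g_s$ and $f_s$ disagree only on the set $\{e^i\}$ of mass $n2^{-n}$ this gives $\err_U(h,\phi)\le 1/3+n2^{-n}$. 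If instead $\phi$ is a uniformly random function, then any fresh test point is distinct from all of the $\poly(n)$ points touched during the run except with negligible probability, so $\phi$ at that point is an independent fair coin and $\err_U(h,\phi)\ge 1/2-2^{-\Omega(n)}$. Thus $\mc{D}$ distinguishes $\mc{F}_n$ from a random function with advantage $\Omega(1)$, contradicting the existence of pseudorandom function families, which follows from the assumed existence of one-way functions.

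The part I expect to require the most care is the fidelity claim in the reduction: one must check that the event ``no example drawn by $\LA$, and no $o(n)$-local query derived from such an example, ever coincides with some $e^i$'' has probability $1-2^{-\Omega(n)}$, so that the distribution of $\LA$'s output when run inside $\mc{D}$ with $\phi=f_s$ is $2^{-\Omega(n)}$-close to its output when actually learning $g_s$; and that the empirical error estimate is accurate to within $o(1)$ (easily achieved with $\poly(n)$ fresh samples, since the gap between the two cases is a constant $\approx 1/6$). The counting bound on the danger zone, the weight/triangle inequality for local queries, and the arithmetic of the distinguishing advantage are all routine. I would also note that the argument in fact only needs queries to be $(1/2-\gamma)n$-local for some constant $\gamma>0$, so the $o(n)$ hypothesis is comfortably sufficient.
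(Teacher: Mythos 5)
Your proposal is correct and follows essentially the same route as the paper: membership queries at $e^1,\dots,e^n$ recover $s$ for the positive direction, and the observation that uniform examples have Hamming weight $\Omega(n)$ (so $o(n)$-local queries can never reach any $e^i$) drives the negative direction. The only difference is that you spell out the distinguisher-based reduction to PRF security that the paper leaves implicit in the phrase ``the labels \ldots are essentially random,'' which is a welcome but not substantively different elaboration.
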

 \begin{proof}
 It is easy to see that by making membership queries to the points, $e^1, \ldots,
 e^n$, the string $s$ is revealed and hence also the function $g_s$. On the other
 hand, random points from the Boolean cube have Hamming weight $\Omega(n)$,
 except with exponentially small probability. Thus, $o(n)$-local $\MQ$s are of no
 use to query the points $e^i$. The labels for any point obtained from the
 distribution, or using $o(n)$-local $\MQ$s are essentially random. Hence,
 $\mc{G}^\prime_n$ is not learnable in the $\PAC$+$o(n)$-local $\MQ$ model.
 \end{proof}

\end{document}